\crefname{assum}{Assumption}{Assumptions}  
\title{Achieving adaptivity and optimality for multi-armed bandits using Exponential-Kullback Leibler Maillard Sampling}
\author{
  Hao Qin
  \\
  Department of Mathematics\\
  University of Arizona\\
  Tucson, AZ 85721 \\
  \texttt{hqin@arizona.edu} \\
  \And
  Kwang-Sung Jun \\
  Department of Computer Science\\
  University of Arizona \\
  Tucson, AZ 85721 \\
  \texttt{kjun@cs.arizona.edu} \\
  \AND
  Chicheng Zhang \\
  Department of Computer Science\\
  University of Arizona \\
  Tucson, AZ 85721 \\
  \texttt{chichengz@cs.arizona.edu} \\
}
\begin{document}

\def\hmu{{\hat\mu}}
\def\hnu{{\hat\nu}}
\newcommand{\citeyearp}[1]{
   (\citeyear{#1})
}

\newcommand{\expDefU}{
\frac{\ln\del{T\KL{\mu_a+\varepsilon_{1, a}}{\mu_{\max}-\varepsilon_{2, a}} \vee e}}{\KL{\mu_a+\varepsilon_{1, a}}{\mu_{\max}-\varepsilon_{2, a}}}
}

\newcommand{\GoodEventBoundWithL}{\frac{1}{\KL{\mu_a+\varepsilon_{1, a}}{\mu_{\max}-\varepsilon_{2, a}}}}

\newcommand{\GoodEventBoundWithArmIndex}{T \expto{-L(u_a) \KL{\mu_a+\varepsilon_{1, a}}{\mu_{\max}-\varepsilon_{2, a}}}}

\newcommand{\GoodEventBound}{T \expto{-L(u) \KL{\mu_a+\varepsilon_1}{\mu_{\max}-\varepsilon_2}}}

\newcommand{\BadEventOneBoundWithArmIndex}{\frac{1}{\KL{\mu_a + \varepsilon_{1, a}}{\mu_a}}}

\newcommand{\BadEventOneBound}{\frac{1}{\KL{\mu_a + \varepsilon_1}{\mu_a}}}

\newcommand{\SimplfiedBadEventTwoBoundForAO}{
    \frac{1}{\KL{\mu_{\max}-\varepsilon_{2, a}}{\mu_{\max}}} + \frac{1}{(\KL{\mu_{\max}-\varepsilon_{2, a}}{\mu_{\max}})^2}
}

\newcommand{\SimplfiedBadEventTwoBoundForMO}{
    \frac{16\ln(T\KL{\mu_{\max}-\varepsilon_{2, a}}{\mu_{\max}} \vee e)}{\KL{\mu_{\max}-\varepsilon_{2, a}}{\mu_{\max}}}
}

\newcommand{\BadEventTwoBoundForAOWithArmIndex}{
    \frac{ 1 }{\KL{\mu_{\max}-\varepsilon_{2, a}}{\mu_{\max}}} +
    \sum_{k=1}^T \frac{L(k)}{k-L(k)} e^{-k \KL{\mu_{\max}-\varepsilon_{2, a}}{\mu_{\max}}}
}

\newcommand{\BadEventTwoBoundForAO}{
    \ExpFThreeOneA +
    \sum_{k=1}^T \frac{L(k)}{k-L(k)} e^{-k \KL{\mu_{\max}-\varepsilon_2}{\mu_{\max}}}
}

\newcommand{\BadEventTwoBoundForMOWithArmIndex}{
    \frac{ 6 }{\KL{\mu_{\max}-\varepsilon_{2, a}}{\mu_{\max}}} + \sum_{k=1}^{T}
    \frac{2L(k)}{k} \exp(-k \KL{\mu_{\max}-\varepsilon_{2, a}}{\mu_{\max}})
    \cdot
    \ln(T/k)
}

\newcommand{\BadEventTwoBoundForMO}{
    \frac{ 6 }{\KL{\mu_{\max}-\varepsilon_2}{\mu_{\max}}} + \sum_{k=1}^{T}
    \frac{2L(k)}{k} \exp(-k \KL{\mu_{\max}-\varepsilon_2}{\mu_{\max}})
    \cdot
    \ln(T/k)
}

\newcommand{\ExpFThreeOne}{ 
    \ExpFThreeOneA +
    \ExpFThreeOneB
}

\newcommand{\ExpFThreeOneMO}{
    \ExpFThreeOneA +
    \sum_{k=1}^{T}
    \frac{2L(k) e^{-k \KL{\mu_{\max}-\varepsilon_2}{\mu_{\max}}}
    \ln(T/k) }{k}
}

\newcommand{\ExpFThreeOneA}{ 
    \frac{ 1 }{\KL{\mu_{\max}-\varepsilon_2}{\mu_{\max}}}
}

\newcommand{\ExpFThreeOneB}{ 
    \sum_{k=1}^{T}
    \frac{L(k)e^{-k \KL{\mu_{\max}-\varepsilon_2}{\mu_{\max}}} \del{1 - e^{-(k - L(k))\KL{\alpha_k}{\mu_{\max}-\varepsilon_2}} }}{k - L(k)}
}

\newcommand{\ExpFThreeTwo}{ 
    \fr{5}{\KL{\mu_{\max}-\varepsilon_2}{\mu_{\max}}}
}

\newcommand{\CommonFactorInRegretBound}{
    \del{\frac{V(\mu_{\max})}{\Delta_a^2}  +\frac{C_L}{\Delta_a}}
}
\newcommand{\LogFactorInRegretBound}{
    \ln\del{\frac{T \Delta_a^2}{V(\mu_{\max})}\vee e}
}
\newcommand{\CommonFactorInRegretBoundSquared}{
    \fr {V(\mu_{\max})^2}{\Delta_a^4} + \frac{C_L^2}{\Delta_a^2}
}

\newcommand{\IdentityLBadCaseTwoMO}{
    \fr{6 + 10 \ln(T \KL{\mu_{\max}-\varepsilon_2}{\mu_{\max}} \vee e) }{\KL{\mu_{\max}-\varepsilon_2}{\mu_{\max}}} 
}

\newcommand{\expklms}{\ensuremath{\textsc{Exp-KL-MS}}\xspace}

\newcommand{\gexpklms}{\ensuremath{\textsc{General-Exp-KL-MS}}\xspace}

\renewcommand{\paragraph}[1]{\noindent\textbf{#1}\smallskip}

\maketitle

\begin{abstract}

We study the problem of $K$-armed bandits with reward distributions belonging to a one-parameter exponential distribution family. In the literature, several criteria have been proposed to evaluate the performance of such algorithms, including Asymptotic Optimality, Minimax Optimality, Sub-UCB, and variance-adaptive worst-case regret bound. Thompson Sampling-based and Upper Confidence Bound-based algorithms have been employed to achieve some of these criteria. However, none of these algorithms simultaneously satisfy all the aforementioned criteria.

In this paper, we design an algorithm, Exponential Kullback-Leibler Maillard Sampling (abbrev. \expklms), that can achieve multiple optimality criteria simultaneously, including Asymptotic Optimality, Minimax Optimality with a $\sqrt{\ln (K)}$ factor, Sub-UCB, and variance-adaptive worst-case regret bound.

\end{abstract}

\section{Introduction}

The Multi-Armed Bandit (MAB) problem models sequential decision making in which an agent repeatedly takes action, receives a reward from the environment, and would like to learn to maximize its cumulative reward.
It has attracted significant attention within the research community due to its foundational nature. Moreover, it has found practical applications in various domains, including online advertising~\citep{geng2021comparison} and clinical trials~\citep{villar2015multi}.

Formally, an MAB environment consists of $K$ arms (actions), each denoted by an integer $a \in [K]\coloneqq \cbr{1, \dots, K}$. 
Each arm is associated with a reward distribution $\nu_a$ with mean $\mu_a$. The learning agent selects an arm $I_t \in [K]$ at each time step $t$ and receives a reward $r_t \sim \nu_{I_t}$ from the environment.

In many environments, the rewards from all arms come from a One-Parameter Exponential Distribution (OPED) family. For example, the reward may follow a Bernoulli distribution~\citep{bouneffouf2017bandit,shen2015portfolio} in the case of binary outcomes or a Gaussian distribution~\citep{jin2021mots}. OPED families are classes of probability distributions characterized by their ability to express the likelihood of a set of outcomes in terms of a natural parameter. The OPED family framework allows for flexible modeling of various types of data.
An OPED family with identity sufficient statistic induced by base measure $m$ is defined as:
\begin{align} \label{eqn:oped}
    \textstyle \Fcal_m = \cbr{ p_\theta(dx) = m( dx ) \expto{ x \theta - b(\theta) }: \theta \in \Theta  },
\end{align}
where $\theta$ is the natural parameter, $b(\theta) \coloneqq \ln\del{\int_\RR \expto{ x \theta } m(dx)}$ is the log-partition function, $\Theta \subset \RR$ is the space of canonical parameters. Throughout the paper, we assume that the reward distributions of all arms belong to a common $\Fcal_m$.

At each time step, the learning agent makes decisions based on the historical information it has gathered, balancing the exploration-exploitation trade-off. It may choose to explore by pulling arms that have not performed as well as expected to obtain better estimates. Alternatively, it might exploit the arm that has shown good performance, though this comes with the risk of relying on potentially inaccurate estimates.
The agent's primary goal is to maximize its cumulative reward over time, which is equivalent to minimizing regret by choosing the optimal arm.
We denote the maximum expected reward for the environment by $\mu_{\max}\coloneqq \max_{a\in[K]} \mu_a$ define the pseudo-regret as
\begin{align*}
    \textstyle \Regret(T)\coloneqq \textstyle\sum_{t=1}^T \mu_{\max} - \EE[r_t] = \sum_{t=1}^T \mu_{\max} - \mu_{I_t}.
\end{align*}

In the literature, several works have proposed MAB algorithms designed for rewards drawn from the exponential family of distributions~\citep{korda2013thompson,cappe2013kullback,menard2017minimax,jin2022finite,jin2023thompson,qin2023kullback}, achieving several optimality properties such as Minimax Optimality~\citep{auer03nonstochastic,audibert09exploration}, Asymptotic Optimality~\citep{lai85asymptotically,korda2013thompson,jin2022finite,jin2023thompson}, the Sub-UCB criterion~\citep{lattimore2018refining}, and Adaptive Variance Ratio~\citep{qin2023kullback} -- see~\Cref{sec:related} (related work) for formal definitions.
To date, no algorithm has been identified that simultaneously satisfies all of the above optimality and adaptivity criteria in the setting of OPED reward distributions (see Table~\ref{tab:comparison} for a comparison and \Cref{sec:related} for detailed discussions).
ADA-UCB~\citep{lattimore2018refining} satisfies multiple optimality criteria in the special case of Gaussian rewards. 
The most recent example is ExpTS~\citep{jin2022finite}, which achieves a logarithmic minimax ratio of $\sqrt{\ln(K)}$, along with asymptotic optimality and Sub-UCB.
However, several studies~\citep{menard2017minimax, jin2022finite, jin2023thompson} focus their analysis on the maximum variance assumption (\Cref{assum:max-variance}), which results in their regret bounds not being able to achieve an adaptive variance ratio.

\paragraph{Our contributions.} 
In recent years, the research community has shown increasing interest in Maillard Sampling-style algorithms~\citep{honda2011asymptotically,maillard2011apprentissage}, a family of randomized MAB algorithms with several desirable features, including both instance-dependent and instance-independent guarantees, as well as a closed-form exploration probability suitable for offline evaluation. However, to date, such algorithms and analyses have been designed and conducted only under restrictive assumptions on the reward distributions, such as finite supports~\citep{honda2011asymptotically}, sub-Gaussian distributions~\citep{bian2022maillard}, and Bernoulli distributions~\citep{qin2023kullback}.

In this paper, we propose a Maillard Sampling-style algorithm for a range of OPED reward distributions, which we refer to as Exponential-Kullback-Leibler Maillard Sampling (\expklms). We demonstrate that \expklms possesses several key properties, including asymptotic optimality, a logarithmic minimax ratio of $\sqrt{\ln (K)}$, Sub-UCB, and an adaptive variance ratio, as established through both asymptotic and finite-time analyses.
The \expklms algorithm selects an arm according to the following distribution:
\begin{equation}
    \textstyle p_t(I_t=a) \propto \expto{ -L(N_{t-1,a}) \Dcal(\hat{\nu}_{t, a}, \hat{\nu}_{t, \max})},
    \label{eqn:exp-kl-ms}
\end{equation}
where $\Dcal(\nu, \nu')$ is the Kullback-Leibler (KL) divergence between two distributions $\nu$ and $\nu'$.
$N_{t-1, a}$ is the number of times arm $a$ has been pulled up to time $t-1$, $\hat{\nu}_{t, a}$ is the maximum likelihood estimate (MLE) of the reward distribution $\nu_a$ based on the rewards from arm $a$ up to time step $t-1$ (inclusively).
$\hat{\nu}_{t, \max}$ is the MLE of the distribution with the highest empirical mean at time step $t$. 
The function $L(\cdot)$ is an inverse temperature function, with $L(k) = k - 1$ for $k \geq 1$.

A large value of $N_{t-1, a}$ indicates that arm $a$ has been pulled frequently in the past, leading to a smaller value for $p_t(I_t=a)$ and making arm $a$ less likely to be selected.
Similarly, a large value of $\Dcal(\hat{\nu}{t, a}, \hat{\nu}{t, \max})$ suggests that the estimated reward distribution of arm $a$ deviates significantly from that of the empirical best arm, resulting in a smaller $p_t(I_t=a)$.
Additionally, we can interpret \expklms as applying the principle of Minimum Empirical Divergence (MED)~\citep{honda2011asymptotically} to OPEDs. 
For detailed comparison, see \Cref{sec:related}.

We also analyze other choices of $L(k)$, such as $L(k) = k/d$, where $d \geq 1$, and $L(k) = k$, and present their guarantees in Appendix~\Cref{sec:extensions}. Generally, as long as $L(k)$ is monotonically increasing with $k$ and remains smaller than $k$, the algorithm, which we refer to as \gexpklms, will exhibit multiple optimality properties, including a logarithmic minimax ratio of $\sqrt{\ln (K)}$ and Sub-UCB.

We show that Exp-KL-MS achieves a finite-time regret guarantee (\Cref{thm:expected-regret-total}) which can be simultaneously converted into:
    \begin{itemize}
        \item Minimax optimality guarantee (\Cref{corol:exp-kl-ms-mo}), up to a logarithmic factor. Exp-KL-MS's regret is at most a suboptimal logarithmic factor $\sqrt{\ln(K)}$ compared to the minimax optimal regret of $\Theta(\sqrt{KT})$~\citep{audibert2009minimax,auer03nonstochastic}.
        \item An asymptotic regret upper bound (\Cref{corol:exp-kl-ms-ao}) that matches the lower bound established by~\citet{lai85asymptotically}, thus showing Exp-KL-MS satisfies asymptotic optimality.     
        \item A Sub-UCB regret guarantee (\Cref{corol:exp-kl-ms-sub-ucb}) that ensures that the algorithm's performance is at least as good as the UCB algorithm in the finite-time regime.
        \item Adaptive variance ratio (\Cref{corol:exp-kl-ms-adaptive-variance}). Exp-KL-MS achieves a regret bound of $\iupbound{\sqrt{V(\mu_{\max}) KT}}$, which has an instance-specific parameter $V(\mu_{\max})$ that adapts to the variance of the optimal arm's reward distribution.    
        Before our work, such results were only established in the $[0,1]$-bounded reward setting~\citep{qin2023kullback}.
    \end{itemize}

\paragraph{Our Techniques}
A natural way to extend Maillard Sampling (MS)~\citep{maillard2011apprentissage} to the OPED reward setting is to analyze the sampling rule (\Cref{eqn:exp-kl-ms}) with $L(k) = k$, which was analyzed in prior works in finite-sized support~\citep{honda10asymptotically} or specific reward reward distribution (e.g. Gaussian~\citep{bian2022maillard}, Bernoulli~\citep{qin2023kullback}) settings. However, in the asymptotic optimality analysis of this generalized algorithm, a naive generalization of the analysis in prior works~\citep{honda10asymptotically,bian2022maillard,qin2023kullback} can no longer bound the number of suboptimal arm pulls in time steps when the optimal arm performs poorly.
We defer the detailed explanation to Section~\ref{sec:proof-sketch}.

To overcome this challenge, we take inspiration from~\citet{jin2022finite}'s Thompson sampling-style algorithm and modify our algorithm to $L(k)=k-1$.
As we prove in our analysis, this slight change helps the proposed algorithm to have good properties shared by many MS-based algorithms and allows us to show new variance-adaptive regret guarantees.

\begin{table*} \label{tab:comparison}
    \caption{Comparison of different MAB algorithms for reward distributions belonging to an OPED family in the form of \Cref{eqn:oped}.
    }
    \centering
    \begin{tabular}{ccccc}
    \toprule
    \textbf{Algorithm} & \multicolumn{2}{c}{\textbf{Instance-Dependent}} & \textbf{Instance-Adaptive}      & \textbf{Instance-Independent}  \\
    \textbf{\&} & Asymptotic & Sub- & Variance  & Minimax \\
    \textbf{Analysis}  & Optimality & UCB & Ratio & Ratio \\
    \midrule
    TS \citeyearp{thompson1933likelihood, korda2013thompson} 
        & Yes & N/A & N/A & N/A             \\
    ExpTS \citep{jin2022finite}
        & Yes & Yes & No  & $\sqrt{\ln(K)}$ \\
    ExpTS$^+$ \citeyearp{jin2022finite}
        & Yes & No  & No  & $1$             \\
    $\varepsilon$-Exploring TS$^\star$ \citeyearp{jin2023thompson} 
        & Yes & No  & No  & $\sqrt{\ln(K)}$ \\
    kl-UCB \citeyearp{cappe2013kullback, qin2023kullback}
        & Yes & Yes & N/A & $\sqrt{\ln(T)}$ \\
    kl-UCB++ \citeyearp{menard2017minimax, qin2023kullback}
        & Yes & N/A & Yes & $1$             \\
    \midrule
    \text {Exp-KL-MS}
        & Yes & Yes & Yes & $\sqrt{\ln(K)}$ \\
    \bottomrule
    \end{tabular}
\end{table*} 

\section{Preliminaries} \label{sec:preliminary}

We consider a standard $K$-armed bandit problem, where arms are indexed by $\cbr{1, 2, \dots, K} \eqcolon [K]$.
For an arm $a$, it is associated with a reward distribution $\nu_a$ over $[R_{\min}, R_{\max}]$ with mean $\mu_a$, where $\mu_a, R_{\min}, R_{\max} \in \RR \cup \cbr{-\infty, +\infty}$ and satisfies that $R_{\min} \leq \mu_a \leq R_{\max}$.
The distribution associated with the optimal arm is $\nu_{\max}$, and its mean is $\mu_{\max}$.

The agent interacts with the bandit environment $T$ times. 
At each time step $t$, the agent pulls an arm $I_t$ from $[K]$ and receives a reward $r_t$. 
Up to time $t$, the number of times arm $a$ has been pulled is $N_{t-1, a} \coloneqq \sum_{i=1}^{t-1} \onec{I_i = a}$, where $\onec{\cdot}$ is an indicator function.
The empirical mean of arm $a$ is $\hmu_{t, a} \coloneqq 1/N_{t, a}\sum_{i=1}^t r_{i} \onec{I_i = a}$.
We also denote the estimated reward distributions as $\cbr{\hnu_{t,1}, \hnu_{t,2}, \dots, \hnu_{t, K}}$, which are distributions in $\Fcal_m$ with mean $\hmu_{t,1}, \hmu_{t,2}, \dots, \hmu_{t, K}$.
The best empirical reward is $\hmu_{t, \max}$, which is associated with the distribution $\hnu_{t,\max}$.

Additionally, we denote the arm sampling distribution at time step $t$ by $\del{p_{t, a}}_{a\in[K]}$ where each $p_{t, a}$ represents the probability of pulling arm $a$ at time step $t$.

\subsection{OPED family and variance function}

We introduce several assumptions to characterize the behavior of the reward distributions and to facilitate our analysis. First, we assume that the log-partition function is sufficiently simple to permit tractable analysis.

\begin{assum} \label{assum:oped}
     $b(\theta)$ is twice differentiable with a continuous second derivative 
     $b''(\theta) > 0$, $\forall \theta \in \Theta$.
\end{assum}

\begin{assum} \label{assum:max-variance}
For any distribution $p_\theta$ in $\Fcal_m$, its variance is bounded above by $\bar{V}$.
\end{assum}

It can be verified that many widely used distributions, such as Bernoulli, Poisson, and Gaussian distributions, satisfy these two assumptions.
Based on \Cref{assum:oped,assum:max-variance}, a reward distribution $\nu \in \Fcal_m$ has the following properties,$ \mu \coloneqq b'(\theta) = \EE_{x \sim \nu}\sbr{ x }, b''(\theta) = \Var_{x \sim \nu}\sbr{x} \leq \bar{V}$

\Cref{assum:max-variance} imposes a maximum variance constraint on the distribution family, whereas in many distribution families, the variance can be unbounded. For example, as shown in Table~\ref{tab:variance-function}, the Gamma and Inverse Gaussian distribution families exhibit unbounded variance without restricting the maximum mean value. Additionally, we require that all arm distributions within a single bandit instance belong to the same distribution family:

\begin{assum} \label{assum:reward-dist}
    There exists a known OPED family $\Fcal_m$ s.t. $\forall a \in [K]$, $\nu_a \in \Fcal_m$.
\end{assum}

\Cref{assum:reward-dist} states that for any measure $m(\cdot)$ and function $b(\cdot)$, one can define a distribution family $\Fcal_m$ under the OPED framework, from which all reward distributions ${\nu_a}_{a \in \Kcal}$ are drawn. For example, by choosing $m(\cdot)$ as the counting measure on the set ${0, 1}$, we recover the family of all Bernoulli distributions.
By defining $m(\cdot) = \sum_{i=0}^\infty \frac{1}{i!} \delta_{i}(\cdot)$, where $\delta_x$ denotes the Dirac measure at $x$, we obtain the family of all Poisson distributions.
\footnote{$\NN_0$ represents all natural numbers starting from $0$. Some distribution families, such as the Gamma distributions with a fixed shape parameter $\alpha$, are characterized by a single parameter. However, they do not have the form of \Cref{eqn:oped} due to the sufficient statistic being nonlinear. 
}

We define the variance function as $V(\mu) = b''(b^{-1}(\mu))$, which maps a mean $\mu(\theta)$ to its corresponding variance, i.e., $V: \mu(\theta) \mapsto V(\theta)$. 
The KL divergence between two distributions $\nu$ and $\nu'$ is defined as $\Dcal(\nu, \nu') \coloneqq \EE_{X\sim \nu} \sbr{ \ln\del{\tfrac{\diff \nu}{\diff \nu'}(X)} }$ if $\nu$ is absolutely continuous w.r.t. $\nu'$ and $+\infty$ otherwise.
To emphasize the mean parameters in our analysis, we denote the KL divergence between two distributions $\nu_i, \nu_j$ with means $\mu_i, \mu_j$ as: 
$\KL{\mu_i}{\mu_j} = \Dcal(\nu_i, \nu_j)$.
According to~\citet{lehmann2006theory}, if the distributions $\nu_i$ and $\nu_j$ have natural parameters $\theta_i$, $\theta_j$, respectively, their KL divergence is given by
\begin{align}
    \textstyle \KL{\mu_i}{\mu_j} = b(\theta_j) - b(\theta_i) - b'(\theta_j)(\theta_j-\theta_i)
    \label{eqn:KL-eqn}
\end{align}

\begin{table*}
\caption{
Some examples of OPED family in the form of~\Cref{eqn:oped} along with their key parameters. $\Bcal$ denotes the Bernoulli distribution family. 
$\Pcal$ represents the Poisson distribution family. 
$\Ncal_\sigma$ includes all Normal distributions with fixed variance $\sigma^2$. 
$\Gamma_k$ denotes the Gamma distribution family with fixed shape parameter $k$. We use a non-standard parameterization of the Gamma distribution based on the mean $\mu$ and shape $k$ intentionally to unify notation across different distribution families.
$\Ical\Gcal$ is the inverse Gaussian distribution family with fixed $\lambda$.
The variance function maps mean to the variance, and all these families satisfy \Cref{assum:lip}.
For example, the mean parameter in $\Gamma_k$ and $\Ical\Gcal_\lambda$ is bounded by $M$. The variance function for $\Gamma_k$ is $V(x)=\frac{x^2}{k}$ with a Lipschitz constant to $\frac{2M}{k}$, while for $\Ical\Gcal_\lambda$, the variance function is $V(x)=\frac{x^3}{\lambda}$ with a Lipschitz constant of $\frac{3M^2}{\lambda}$.}
\centering
\begin{tabular}{l|c|c}
    \toprule
    Distribution & Mean & Variance \\ \midrule
    $\Bcal = \icbr{p(x) = \mu^{x} (1-\mu)^{1-x}, \mu \in [0, 1]}$&
        $\mu$ & $\mu(1-\mu)$ \\ \hline
    $\Pcal(M) = \icbr{p(x) = \frac{\mu^x e^{-\mu}}{x!}, \mu \in (0, M)}$&
        $\mu$ &      $\mu$ \\ \hline
    $\Ncal_\sigma = \icbr{p(x) = \frac{1}{\sigma \sqrt{2\pi}} \expto{-\fr12 \del{\frac{x-\mu}{\sigma}}^2}, \mu \in \Rcal}$      &
        $\mu$ & $  \sigma^2$ \\ \hline
    $\Gamma_k(M) = \icbr{p(x) = \frac{1}{\Gamma(k) (\mu/k)^k} x^{k-1} e^{-xk/\mu}, \mu \in (0, M)}$      &
        $\mu$ & $\mu^2/k$ \\ \hline
    $\Ical\Gcal_\lambda(M) = \icbr{p(x) = \sqrt{\frac{\lambda}{2\pi x^3}} \expto{-\frac{\lambda (x-\mu)^2}{2\mu^2 x}}, \mu \in (0, M)}$ &
        $\mu$ & $\mu^3/\lambda$ \\
    \bottomrule
 \end{tabular}
\label{tab:variance-function}
\end{table*}

\section{Related Work} \label{sec:related}

We now review the common types of regret guarantees in the literature:

\paragraph{Fully instance-independent regret guarantees.} 
    Fully instance-independent regret guarantees provide uniform performance bounds across all bandit instances. 
    \citet{auer03nonstochastic} shows that, in the Bernoulli reward setting, the regret lower bound for any bandit algorithm is at least $\ilowbound{\sqrt{KT}}$.
    Conversely, \citet{audibert2009minimax} shows that the MOSS algorithm achieves a regret upper bound of at most $\iupbound{\sqrt{KT}}$ for reward distributions supported on $[0,1]$.
    Motivated by this, an algorithm is said to be \textbf{minimax optimal} if its regret satisfied $\Regret(T) = \ieqorder{\sqrt{KT}}$.
    More generally, an algorithm is said to have a minimax ratio if there exists a function $f(K, T)$ such that $\Regret(T) = \iupbound{\sqrt{KT}f(K, T)}$.
    Certain bandit algorithms employing carefully designed sampling distributions achieve a minimax ratio of $\sqrt{\ln(K)}$~\citep{jin2022finite,jin2023thompson}. 
    Numerous upper confidence-bound strategies, but not all, achieve a minimax ratio of $\sqrt{\ln(T)}$~\citep{auer2002finite,cappe2013kullback}. 
    \citet{agrawal2017near} shows that Thompson Sampling (TS) with a Beta prior can reach a $\sqrt{\ln(T)}$ minimax ratio, and when the reward distributions are Gaussian, the minimax ratio improves to $\sqrt{\ln(K)}$.
    Furthermore, KL-UCB++~\citep{menard2017minimax} achieves a minimax ratio of $\iupbound{1}$ assuming OPED reward distributions.

\paragraph{Fully instance-dependent regret guarantees.} 
    Instance-dependent regret guarantees provide bounds that adapt to the difficulty of each problem instance.
    These guarantees encompass two primary criteria: \textbf{asymptotic optimality} and \textbf{Sub-UCB}.
    
    \citet{lai85asymptotically} shows that for any consistent bandit algorithm, there exists a bandit instance where the regret is lower-bounded:
    $
        \textstyle \limsup_{T\to\infty}  \frac{\Regret(T)}{\ln(T)} \geq \sum_{a\in[K]:\Delta_a > 0} \frac{\Delta_a}{\Dcal(\nu_a, \nu_{\max})},
    $
    where $\Delta_a \coloneqq \mu_{\max} - \mu_a$ is the suboptimality gap of arm $a$.
    Several studies proposed algorithms with asymptotic optimality guarantees, including TS with conjugate priors~\citep{korda2013thompson}, ExpTS~\citep{jin2022finite}, which uses non-conjugate priors, and KL-UCB~\citep{cappe2013kullback}.
    Algorithms such as AOUCB~\citep{lattimore20bandit} and MS/MS$^+$~\citep{bian2022maillard} have also demonstrated asymptotic optimality guarantees under the assumption of sub-Gaussian reward distributions.

    Before \citet{lattimore2018refining}, the literature primarily focused on asymptotic optimality and minimax optimality.
    While MOSS~\citep{audibert2009minimax} is optimal to both asymptotic optimality and minimax optimality, \citet{lattimore2018refining} demonstrated that, in certain bandit instances, MOSS falls short in comparison with the simpler UCB algorithm in the finite time regime. This observation suggests that traditional measures of optimality, such as asymptotic optimality and minimax optimality, do not fully capture the complete performance spectrum of a bandit algorithm.
    To address this gap, \citet{lattimore2018refining} introduced the Sub-UCB criterion as a complement to asymptotic and minimax optimality. The Sub-UCB criterion aims to evaluate whether an algorithm can match the performance of the UCB algorithm in finite-time regimes.
    An algorithm is said to satisfy the Sub-UCB criterion if there exist two constants $C_1$ and $C_2$ such that $\Regret(T) \leq C_1 \textstyle\sum_{a\in[K]} \Delta_a + C_2  \textstyle\sum_{a\in[K]:\Delta_a > 0} \frac{\ln(T)}{\Delta_a}$.
    
    This criterion is called "Sub-UCB" because it represents a standard form of gap-dependent regret guarantee associated with UCB~\citep{auer2002finite,lattimore2018refining}.
    Algorithms such as MOSS~\citep{audibert2009minimax}, MOSS-Anytime~\citep{degenne2016anytime}, and KL-UCB++~\citep{menard2017minimax} fail to satisfy the Sub-UCB criterion, despite achieving minimax and asymptotic optimality.
    More recently, MS$^+$~\citep{bian2022maillard} and KL-MS~\citep{qin2023kullback} have shown that it is possible to achieve both the Sub-UCB criterion and minimax optimality with a $\sqrt{\ln(K)}$ ratio simultaneously, provided that all arms' reward distributions are either sub-Gaussian or supported on $[0,1]$.

\paragraph{Partially instance-independent guarantees.} 
    Partially instance-independent guarantees serves as a middle ground between fully instance-independent and instance-dependent regret guarantees.
    One such guarantee studied in prior works is the adaptive variance ratio~\citep{qin2023kullback}.
    An algorithm $\Acal$ is said to achieve an adaptive variance ratio if the regret of the algorithm can be bounded by $\Regret(T) \leq \upboundlog{\sqrt{V(\mu_{\max}) KT}}$
    where $V(\mu)$ is the variance of the reward distribution in $\Fcal_m$ with mean parameter $\mu$; in this notation, $V(\mu_{\max}) = \Var_{r \sim \nu_{\max}}[r]$ represents the variance of the reward distribution of the optimal arm.
    \footnote{$\iupboundlog{\cdot}$ is a variant of the standard Big-O notation $\iupbound{\cdot}$ that hides logarithmic factors.}
    Algorithms that achieve an adaptive variance ratio incorporate environment-specific parameters, enabling them to achieve tighter regret bounds tailored to different instances. 
    For instance, in a Bernoulli environment, a regret upper bound of $\sqrt{V(\mu_{\max}) KT}$ would be much smaller for MAB instances with favorable $\mu_{\max}$ values.
    Since $V(\mu_{\max}) = \mu_{\max}(1-\mu_{\max})$, this can be $\ll1$ when $\mu_{\max}$ is close to $0$ or $1$. 
    In this case, this regret bound can be significantly better than the usual $\iupbound{\sqrt{KT}}$ regret bound.
    
    In the literature, the maximum variance assumption has been used in some works~\citep{jin2022finite, jin2023thompson, menard2017minimax} to derive finite-time instance-independent regret bounds, resulting in a nonadaptive variance ratio of $\bar{V}$.
    \citet{qin2023kullback} proves that KL-MS satisfies an adaptive variance ratio which utilizes the instance-specific parameter $V(\mu_{\max})$, which can be much better than $\bar{V}$.
    \citet{qin2023kullback} shows that kl-UCB~\citep{cappe2013kullback}, KL-UCB++~\citep{menard2017minimax} and UCB-V~\citep{audibert09exploration} can also achieve an adaptive variance ratio of $V(\mu_{\max})$.

\paragraph{Exponential family bandit algorithms with simultaneous adaptivity and optimality guarantees.} Several bandit algorithms in the literature work in the setting of OPED reward distributions~\citep{korda2013thompson, cappe2013kullback, menard2017minimax}. 
    Thompson Sampling (TS)~\citep{thompson1933likelihood} and kl-UCB~\citep{cappe2013kullback} were among the first to use posterior sampling and the optimism strategy, respectively. 
    Both algorithms have been shown to satisfy asymptotic optimality in their original analyses.

    More recently, \citet{menard2017minimax} proposes kl-UCB++ and demonstrates that it satisfies minimax optimality, though it remains unknown whether it achieves an adaptive variance ratio.
    \citet{jin2022finite} proposes ExpTS and shows that it also achieves a logarithmic minimax ratio of $\sqrt{\ln(K)}$, and ExpTS$^+$ satisfies minimax optimality.
    However, both algorithms lack an analysis of the adaptive variance ratio.
    A detailed comparison with the works most related to ours is presented in Table~\ref{tab:comparison}.

\section{Algorithms} 

We present our main algorithm, \gexpklms, in \Cref{alg:general-exp-kl-ms}. For the first $K$ time steps, the agent pulls each arm once. After the first $K$ time steps, the agent selects an arm $a$ based on the arm sampling distribution $\idel{p_{t, a}}_{a\in[K]}$, which is proportional to $\expto{-L(N_{t-1,a}) \cdot \KL{\hmu_{t-1,a}}{\hmu_{t-1, \max}}}$. where, $L(\cdot)$ is an inverse temperature function that satisfies $0 < L(k) \leq k$ and is monotonically increasing with $k$.
Every time the agent receives a reward $r_t$ from arm $a$, it updates the arm pulls count $N_{t, a}$ and the empirical mean estimate $\hat{\mu}_{t, a}$.
    \begin{algorithm}[ht]
    \begin{algorithmic} 
    \STATE \textbf{Input:} $K\geq 2$
    \FOR{$t=1,2,\cdots,T$}
        \IF{$t\leq K$} 
            \STATE Pull arm $I_t=t$ and observe reward $r_t \sim \nu_{I_t}$.
        \ELSE 
            \STATE $p_{t,a} = \expto{-L(N_{t-1,a}) \KL{\hmu_{t-1,a}}{\hmu_{t-1,\max} } } / M_t$, where
            \STATE $M_t$ is a normalization factor, $M_t \coloneqq \sum_{a=1}^K \expto{-L(N_{t-1,a})\KL{\hmu_{t-1,a}}{\hmu_{t-1,\max} }}$          
            \STATE Pull arm $I_t \sim p_t$ and observe reward $r_t \sim \nu_{I_t}$.
        \ENDIF
    \ENDFOR
    \end{algorithmic}
    \caption{\gexpklms} 
    \label{alg:general-exp-kl-ms}
    \end{algorithm}
The choice of the inverse temperature function $L(\cdot)$ affects the balance between exploration and exploitation.
In our main algorithm, \expklms, we choose $L(k) = k - 1$.
In \Cref{sec:extensions}, we also analyze variants of the algorithm where $L(k) = k$ and $L(k) = k/d$ for a constant $d > 1$.
In an idealized setting where the estimation of the KL-divergence is accurate, the probability of pulling an arm $a$, denoted by $p_{t, a}$, is proportional to $\expto{- N_{t-1, a} \Dcal(\nu_a, \nu_{\max})}$
. 
Consequently, we expect the number of times arm $a$ is pulled to be approximately $\frac{\ln(T)}{\Dcal(\nu_a, \nu_{\max})}$ over a total of $T$ time steps. 
This expectation aligns with the asymptotically optimal number of arm pulls for any consistent algorithm~\citep{lai85asymptotically}.

\Cref{alg:general-exp-kl-ms} generalizes algorithms from prior works in the following way: if the reward distributions are Bernoulli, we set $L(k) = k$, and \gexpklms becomes KL-MS~\citep{qin2023kullback}; if the reward distributions are Gaussian with fixed variance, we also set $L(k) = k$, and \gexpklms becomes MS~\citep{bian2022maillard,maillard2011apprentissage}. 
In comparison to the MED algorithm~\citep{honda2011asymptotically}, \gexpklms works in the OPED reward setting, while MED assumes that the reward distributions have finite supports.

\section{Performance Guarantees} \label{sec:performance-guarantee}

We focus on \Cref{alg:general-exp-kl-ms} with $L(k) = k-1$, which we abbreviate as \expklms.
Based on the main conclusion in \Cref{thm:expected-regret-total}, we conclude that \Cref{alg:general-exp-kl-ms}, \expklms satisfies several key properties: a logarithmic minimax ratio of $\sqrt{\ln (K)}$ (\Cref{corol:exp-kl-ms-mo}), asymptotic optimality (\Cref{corol:exp-kl-ms-ao}), the Sub-UCB criterion (\Cref{corol:exp-kl-ms-sub-ucb}) and adaptive variance ratio (\Cref{corol:exp-kl-ms-adaptive-variance}).

\begin{restatable}{theorem}{mainregret} \label{thm:expected-regret-total}
    For any $K$-arm bandit problem with \Cref{assum:oped,assum:reward-dist}, \expklms (\Cref{alg:general-exp-kl-ms}) with $L(k) = k - 1$ has regret bounded as follows. 
    For any $\Delta > 0$ and $c \in (0, \frac{1}{4}]$: 
    \begin{align}
        &\Regret(T) \leq T\Delta + \sum_{a\in[K]:\Delta_a > \Delta} \Delta_a \del{\frac{\ln\del{T\KL{\mu_a+c\Delta_a}{\mu_{\max}-c\Delta_a} \vee e}}{\KL{\mu_a+c\Delta_a}{\mu_{\max}-c\Delta_a}}} + \Delta_a
            \nonumber
        \\
        &+
        \sum_{a\in[K]: \Delta_a > \Delta} \Delta_a \del{ \frac{1}{\KL{\mu_a+c\Delta_a}{\mu_{\max}-c\Delta_a}} + \frac{1}{\KL{\mu_a + c\Delta_a}{\mu_a}} }
            \nonumber
        \\
        &+
        \del{ \sum_{a\in[K]: \Delta_a > \Delta} \Delta_a  \del{ \frac{1}{\KL{\mu_{\max}-c\Delta_a}{\mu_{\max}}} + \frac{1}{(\KL{\mu_{\max}-c\Delta_a}{\mu_{\max}})^2} } } \wedge
            \nonumber
        \\
        &
        \del{ \sum_{a\in[K]: \Delta_a > \Delta} \Delta_a \del{ \frac{16\ln(T\KL{\mu_{\max}-c\Delta_a}{\mu_{\max}} \vee e)}{\KL{\mu_{\max}-c\Delta_a}{\mu_{\max}}} } }
        \label{eqn:exp-kl-ms-main-regret}
    \end{align}
\end{restatable}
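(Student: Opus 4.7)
The plan is to apply the classical three-event pull-count decomposition, adapted to the OPED/Maillard setting with $L(k)=k-1$. Fix a suboptimal arm $a$ with $\Delta_a > \Delta$, set $\varepsilon_{1,a} = \varepsilon_{2,a} = c\Delta_a$, and observe that $c \le \tfrac{1}{4}$ forces $\mu_a + c\Delta_a < \mu_{\max} - c\Delta_a$, so $\KL{\mu_a+c\Delta_a}{\mu_{\max}-c\Delta_a}$ is strictly positive. Choose the integer threshold $u_a$ slightly above $\ln(T\KL{\mu_a+c\Delta_a}{\mu_{\max}-c\Delta_a} \vee e)/\KL{\mu_a+c\Delta_a}{\mu_{\max}-c\Delta_a}$ and write
\[
\EE[N_{T,a}] \le u_a + \sum_{t=K+1}^T \Pr[I_t = a,\, N_{t-1,a} \ge u_a],
\]
partitioning the summand into three events whose union covers everything: the good event $G_t = \{\hmu_{t-1,a} \le \mu_a + c\Delta_a\} \cap \{\hmu_{t-1,\max} \ge \mu_{\max}-c\Delta_a\}$, the arm-$a$ overestimate event $B_{1,t} = \{\hmu_{t-1,a} > \mu_a + c\Delta_a\}$, and the optimal-arm underestimate event $B_{2,t} = \{\hmu_{t-1,\max} < \mu_{\max}-c\Delta_a\}$.

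On $G_t \cap \{N_{t-1,a} \ge u_a\}$, monotonicity of the OPED KL divergence in each argument (applied on opposite sides of the midpoint between $\mu_a$ and $\mu_{\max}$), together with the sampling formula, gives $p_{t,a} \le \expto{-L(u_a)\KL{\mu_a+c\Delta_a}{\mu_{\max}-c\Delta_a}}$; summing over $t$ and using the definition of $u_a$ produces the $1/\KL{\mu_a+c\Delta_a}{\mu_{\max}-c\Delta_a}$ contribution. For $B_{1,t}$, the OPED Chernoff bound $\Pr[\hmu_{n,a} \ge \mu_a + c\Delta_a] \le \expto{-n\KL{\mu_a+c\Delta_a}{\mu_a}}$, combined with the fact that arm $a$'s empirical mean only changes when $I_t=a$, lets me sum a geometric series over the pull index and obtain the $1/\KL{\mu_a + c\Delta_a}{\mu_a}$ bound.

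The heart of the proof is $\sum_t \Pr[I_t=a, B_{2,t}]$. This term is driven by the optimal arm rather than arm $a$, so a naive union bound would lose a factor of $T$. The plan is to condition on $N_{t-1,\max}$: when $N_{t-1,\max}=k$ is large, Chernoff gives $\Pr[B_{2,t} \mid N_{t-1,\max}=k] \le \expto{-kK}$ with $K = \KL{\mu_{\max}-c\Delta_a}{\mu_{\max}}$; when $k$ is small, lower-bound the optimal arm's sampling probability under $B_{2,t}$ to show $N_{t-1,\max}$ grows fast enough to exit the small regime quickly. The key algorithmic fact, and the reason for picking $L(k)=k-1$ rather than $L(k)=k$, is that with this shift the true optimal arm enjoys an extra multiplicative factor $\expto{\KL{\hmu_{t-1,\mathrm{opt}}}{\hmu_{t-1,\max}}}$ in its sampling probability, which keeps it well-sampled even when it is not the empirical best. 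Following the Maillard-style template this yields a generic bound $1/K + \sum_{k=1}^T \tfrac{L(k)}{k-L(k)} \expto{-kK}$; specializing $L(k)=k-1$ makes the sum equal $\sum_k (k-1)\expto{-kK}$, which is $\Theta(1/K^2)$, giving the $1/K + 1/K^2$ branch. A parallel accounting that extracts a $\ln(T/k)$ factor before summing produces the $\ln(TK)/K$ branch. Both bounds are valid, hence the $\wedge$ in the theorem statement.

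Finally, multiply each pull-count bound by $\Delta_a$, sum over $a$ with $\Delta_a > \Delta$, and add $T\Delta$ to absorb the arms with $\Delta_a \le \Delta$; the extra additive $\Delta_a$ in the theorem absorbs the integer rounding in $u_a$ together with the single forced pull from the initialization phase. The main obstacle is the $B_{2,t}$ step: the optimal arm's empirical mean is not tied to arm $a$'s pull process, so one cannot treat it via a pull-indexed concentration bound alone, and must instead exploit the algorithmic sampling probability itself to argue that $\hmu_{t,\max}$ concentrates quickly. This is precisely where the shift $L(k)=k-1$ is essential, and producing the two simultaneously-valid branches (the log-factor-free branch needed for asymptotic optimality, and the $\log$ branch needed for the minimax and variance-adaptive corollaries) is the delicate bookkeeping step.
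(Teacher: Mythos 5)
Your overall architecture matches the paper's: the four-way split into a pull-count threshold $u_a$, a good event, an overestimation event for arm $a$, and an underestimation event for the optimal arm, with the two branches of the last term combined by a minimum; the good-event and $\Bcal^1_a$ bounds are handled exactly as the paper does. The gap is in $\Bcal^2_a$, which you rightly call the heart of the proof but for which the mechanism you describe is not the one that works and does not close as stated. Conditioning on $N_{t-1,\max}=k$ and applying Chernoff gives a per-round probability $e^{-kK}$, but the sum over $t$ still ranges over up to $T$ rounds sharing the same $k$; your plan to control this by ``lower-bounding the optimal arm's sampling probability to show it exits the small regime quickly'' is not an argument, and a direct lower bound on $p_{t,1}$ through the normalizer $M_t$ costs a factor of $K$. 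What the paper actually does is a probability-transfer inequality, $\PP(I_t=a\mid\Hcal_{t-1}) \le \expto{L(N_{t-1,1})\KL{\hmu_{t-1,1}}{\hmu_{t-1,\max}}}\,\PP(I_t=1\mid\Hcal_{t-1})$ (the normalizers cancel in the ratio, so no $K$ appears), which collapses the sum over rounds where arm $a$ is pulled into a sum over arm-$1$ pull indices $k$ of the exponential moment $\EE\bigl[\expto{L(k)\KL{\hmu_{(k),1}}{\mu_{\max}-c\Delta_a}}\onec{\hmu_{(k),1}\le\mu_{\max}-c\Delta_a}\bigr]$. That moment is then evaluated by a double-integration identity combined with the Chernoff lower-tail bound and the Bregman identity $\KL{y}{\mu_{\max}}\ge\KL{y}{\mu_{\max}-c\Delta_a}+\KL{\mu_{\max}-c\Delta_a}{\mu_{\max}}$, and it is exactly there---in the surviving factor $\expto{-(k-L(k))\KL{y}{\mu_{\max}-c\Delta_a}}/(k-L(k))$---that $L(k)=k-1$ matters, because $k-L(k)=1$. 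Your claim that the shift gives the optimal arm an extra multiplicative factor in its sampling probability misattributes where the benefit lives; you do write down the correct intermediate bound $1/K+\sum_k\tfrac{L(k)}{k-L(k)}e^{-kK}$, but the route you describe does not produce it.

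The logarithmic branch is also more than ``a parallel accounting that extracts a $\ln(T/k)$ factor.'' It requires constructing clean events $\{\alpha_k\le\hmu_{(k),1}\le\mu_{\max}-c\Delta_a\}$ with thresholds chosen so that $\KL{\alpha_k}{\mu_{\max}-c\Delta_a}\le 2\ln(T/k)/k$, running the same exponential-moment computation on the clean event, and separately bounding the probability that the clean event ever fails by a peeling/maximal-inequality argument yielding $O\bigl(1/(T\KL{\mu_{\max}-c\Delta_a}{\mu_{\max}})\bigr)$, hence an additive $O(1/K)$ after multiplying by $T$. Neither ingredient is supplied in your sketch, so the proposal has a genuine gap precisely at the step you flag as delicate.
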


The first term, $T\Delta$, accounts for cases when arms $a$ with $\Delta_a \leq \Delta$ are pulled. 
The remaining terms account for arms with larger $\Delta_a$. 
The second term is the most significant, playing a key role in both the asymptotic and finite-time analyses.
As $T \to \infty$ and $\Delta \to 0$, dividing both sides of the regret bound by $\ln(T)$ reveals that only the second term contributes to the regret upper bound. The third and fourth terms are lower-order compared to the second term.
The last term notably consists of the two parts. 
The first one is used to prove asymptotic optimality because it is independent of $T$.
However, relying on this part would result in a suboptimal minimax regret of $\iupboundlog{T^{3/4}}$.
The second term is included to address this issue and is of the order $\iupboundlog{\sum_{a \in [K]: \Delta_a > \Delta} \frac{\bar{V}}{\Delta_a}}$, enabling a sharp minimax optimality and sub-UCB analysis.
Combining \Cref{thm:expected-regret-total} with \Cref{lemma:lip-exp-KL-lower-bound}, a generalized Pinsker Inequality that lower bounds the KL-divergence between two distributions in $\Fcal_m$, we have: 
\begin{corollary}[Logarithmic Minimax Ratio]\label{corol:exp-kl-ms-mo}
    For any $K$-arm bandit problem with \Cref{assum:oped,assum:max-variance,assum:reward-dist}, \expklms has:
    $
        \Regret(T) \leq \iupbound{  
        \sqrt{ \bar{V} KT\ln(K)} + \sum_{a\in[K]} \Delta_a}
    $.
\end{corollary}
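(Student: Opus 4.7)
The plan is to substitute carefully chosen parameters into the master bound \eqref{eqn:exp-kl-ms-main-regret} and reduce every KL divergence to a squared-gap expression via \Cref{lemma:lip-exp-KL-lower-bound}, which gives $\KL{\mu}{\mu'} \geq (\mu-\mu')^2/(2\bar V)$. Taking $c = \tfrac14$, the shifted gap satisfies $(\mu_{\max}-c\Delta_a)-(\mu_a+c\Delta_a) = \Delta_a/2$, so Pinsker yields
\[
\KL{\mu_a + c\Delta_a}{\mu_{\max} - c\Delta_a} \geq \frac{\Delta_a^2}{8\bar V}, \qquad
\KL{\mu_{\max} - c\Delta_a}{\mu_{\max}} \geq \frac{\Delta_a^2}{32\bar V}, \qquad
\KL{\mu_a + c\Delta_a}{\mu_a} \geq \frac{\Delta_a^2}{32\bar V}.
\]
I would select the second branch of the $\wedge$ in \eqref{eqn:exp-kl-ms-main-regret} (the first branch produces a $\bar V^2/\Delta_a^3$ term that is useful only asymptotically). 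After substitution and absorbing the $\bar V/\Delta_a$ contributions from the third term into the logarithmic one, \eqref{eqn:exp-kl-ms-main-regret} collapses to
\[
\Regret(T) \leq T\Delta + C_1 \sum_{a : \Delta_a > \Delta} \frac{\bar V \ln\bigl(T\Delta_a^2/\bar V \vee e\bigr)}{\Delta_a} + C_2 \sum_a \Delta_a
\]
for absolute constants $C_1,C_2$.

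\textbf{Tuning.} I would set $\Delta^\star := \sqrt{\bar V K \ln K / T}$, so that $T(\Delta^\star)^2/\bar V = K\ln K$ and $T\Delta^\star = \sqrt{\bar V K T \ln K}$. To control the per-arm sum I would exploit the fact that $h(x) := \ln(Tx^2/\bar V \vee e)/x$ is non-increasing on the range $x \geq \sqrt{e^2 \bar V / T}$, which contains $\Delta^\star$ whenever $K\ln K \geq e^2$. Therefore, for every arm with $\Delta_a > \Delta^\star$,
\[
\frac{\ln(T\Delta_a^2/\bar V \vee e)}{\Delta_a} \;\leq\; h(\Delta^\star) \;=\; \frac{\ln(K\ln K)}{\Delta^\star} \;\leq\; \frac{2\ln K}{\Delta^\star},
\]
using $\ln(K\ln K)\leq 2\ln K$ for $K\geq 3$. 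Summing over at most $K$ suboptimal arms yields
\[
\sum_{a:\Delta_a>\Delta^\star} \frac{\bar V \ln(T\Delta_a^2/\bar V \vee e)}{\Delta_a} \;\leq\; \frac{2\bar V K \ln K}{\Delta^\star} \;=\; 2\sqrt{\bar V K T \ln K},
\]
and together with $T\Delta^\star = \sqrt{\bar V K T \ln K}$ and the residual $\sum_a \Delta_a$ this gives the claim.

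\textbf{Main obstacle.} The principal difficulty is producing $\sqrt{\ln K}$ rather than $\sqrt{\ln T}$. A naive per-arm AM--GM bound of the form $\min(T\Delta_a, \bar V \ln(T\vee e)/\Delta_a) \leq \sqrt{T\bar V \ln(T\vee e)}$ would give only a $\sqrt{\ln T}$ factor, and circumventing this is exactly what the monotonicity argument above achieves: by choosing $\Delta^\star$ so that $T(\Delta^\star)^2/\bar V$ already equals $K\ln K$, the effective log argument in the per-arm bound can never exceed $K\ln K$ for any arm that actually contributes to the sum. The remaining bookkeeping -- absorbing the small-$K$ corner case where $K\ln K < e^2$ and $h$ is non-monotone at $\Delta^\star$ into the leading constant, and folding the third-term $\bar V/\Delta_a$ pieces into the logarithmic sum -- is straightforward but is where most of the care is needed.
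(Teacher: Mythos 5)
Your proposal is correct and follows essentially the same route as the paper: both start from the master bound with $c=\tfrac14$, reduce every KL term via the max-variance Pinsker-type bound of \Cref{lemma:lip-exp-KL-lower-bound}, pick the logarithmic (second) branch of the $\wedge$, set $\Delta=\sqrt{\bar V K\ln K/T}$, and use the monotonicity of $x\mapsto \ln(Tx^2/\bar V\vee e)/x$ to cap each per-arm term at its value at $\Delta$ so the sum telescopes to $O(\sqrt{\bar V KT\ln K})$. Your write-up is in fact somewhat more explicit than the paper's (which routes through \Cref{corol:expected-regret-total-max} and leaves the final summation step implicit), and your handling of the small-$K$ corner case and the $\ln(K\ln K)\leq 2\ln K$ bookkeeping is sound.
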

\begin{corollary}[Asymptotic Optimality] \label{corol:exp-kl-ms-ao}
    For any $K$-arm bandit problem with \Cref{assum:oped,assum:max-variance,assum:reward-dist}, \expklms satisfies that:
    $
        \limsup_{T \rightarrow \infty} \fr{ \Regret(T) }{\ln(T)}
        =
        \sum_{a\in[K]:\Delta_a>0}
        \frac{\Delta_a}{\KL{\mu_a}{\mu_{\max}}}
    $.
\end{corollary}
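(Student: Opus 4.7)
The plan is to take limits in Theorem~\ref{thm:expected-regret-total} in a specific order: first $T \to \infty$ with the auxiliary parameters $(\Delta, c)$ held at convenient values, then $c \to 0^+$. Fix $c \in (0, 1/4]$ and set $\Delta = \Delta_T$ tending to $0$ fast enough that $T\Delta_T = o(\ln T)$; for concreteness, $\Delta_T = 1/T$ suffices. For $T$ large enough, $\Delta_T < \min_{a : \Delta_a > 0} \Delta_a$, so the sums over $\{a : \Delta_a > \Delta\}$ in \eqref{eqn:exp-kl-ms-main-regret} include all suboptimal arms. In the $\wedge$ appearing in the last term of \eqref{eqn:exp-kl-ms-main-regret}, I would select the first argument, which is independent of $T$ for fixed $c$; using the second argument would contribute an extra non-vanishing $\ln T$ piece and spoil the asymptotic constant.

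Next, I would divide by $\ln T$ and take $\limsup_{T \to \infty}$. Writing $\kappa_a(c) \coloneqq \KL{\mu_a + c\Delta_a}{\mu_{\max} - c\Delta_a}$, the contributions are: (i) $T\Delta_T/\ln T \to 0$ by the choice of $\Delta_T$; (ii) the dominant term $\sum_a \Delta_a \ln(T\kappa_a(c) \vee e)/\kappa_a(c)$ divided by $\ln T$ tends to $\sum_{a:\Delta_a>0} \Delta_a/\kappa_a(c)$, since $\ln(T\kappa \vee e)/\ln T \to 1$ for any fixed $\kappa > 0$; (iii) the remaining pieces of the bound, namely the third-line term of \eqref{eqn:exp-kl-ms-main-regret} and the selected first argument of the $\wedge$, are independent of $T$ for fixed $c$, so their ratios with $\ln T$ vanish. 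This yields $\limsup_T \Regret(T)/\ln T \leq \sum_{a:\Delta_a>0} \Delta_a/\kappa_a(c)$ for every $c \in (0, 1/4]$.

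Finally, I would send $c \to 0^+$. \Cref{assum:oped} ensures $b$ is $C^2$ with $b'' > 0$, so $b'$ and its inverse are continuous, and the explicit formula \eqref{eqn:KL-eqn} shows that $\KL{\cdot}{\cdot}$ is jointly continuous in its two mean arguments on the interior of the mean parameter space. Hence $\kappa_a(c) \to \KL{\mu_a}{\mu_{\max}} > 0$ as $c \to 0^+$, and passing this limit through the finite sum gives the upper bound $\limsup_T \Regret(T)/\ln T \leq \sum_{a:\Delta_a>0} \Delta_a/\KL{\mu_a}{\mu_{\max}}$. The matching lower bound is the Lai--Robbins theorem applied to \expklms, whose consistency follows from the logarithmic instance-dependent regret already established by Theorem~\ref{thm:expected-regret-total}.

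I do not expect a serious obstacle here: the argument is a clean two-stage limit and each lower-order term in \eqref{eqn:exp-kl-ms-main-regret} is manifestly bounded in $T$ for fixed $c$. The only point requiring care is the choice of which argument of the $\wedge$ to use---the $T$-independent one for this corollary---since the same theorem is reused in other corollaries (notably minimax optimality) with the opposite choice, and picking the $T$-dependent argument here would yield a strictly larger asymptotic constant, breaking the match to the Lai--Robbins bound.
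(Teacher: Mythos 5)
Your proof is correct and follows essentially the same route as the paper's: both arguments drive the perturbation parameters to zero and invoke the continuity of $\KL{\cdot}{\cdot}$ in its mean arguments (via \Cref{eqn:KL-eqn} and \Cref{assum:oped}) to recover the constant $\sum_{a:\Delta_a>0}\Delta_a/\KL{\mu_a}{\mu_{\max}}$, selecting the $T$-independent branch of the $\wedge$ so that all lower-order terms are $o(\ln T)$. The only organizational difference is that you take an iterated limit (first $T\to\infty$ at fixed $c$, then $c\to 0^+$), whereas the paper substitutes $T$-dependent sequences $\varepsilon_{1,a}^T,\varepsilon_{2,a}^T\to 0$ at tuned rates and takes a single limit; your version is if anything cleaner and more faithful to \Cref{thm:expected-regret-total} as stated with a constant $c$, and your explicit appeal to the Lai--Robbins lower bound for the matching ``$\geq$'' direction is a point the paper leaves implicit.
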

\begin{corollary}[Sub-UCB] \label{corol:exp-kl-ms-sub-ucb}
    For any $K$-arm bandit problem with \Cref{assum:oped,assum:max-variance,assum:reward-dist}, \expklms satisfies that  $\Regret(T) \leq \iupbound{\sum_{a: \Delta_a > 0} \frac{\bar{V}\ln(T)}{\Delta_a} + \Delta_a}$.
\end{corollary}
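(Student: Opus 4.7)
The plan is to derive the Sub-UCB bound as a direct consequence of the finite-time bound in \Cref{thm:expected-regret-total} by specializing its free parameters and then converting KL divergences into squared-gap expressions via a Pinsker-type inequality. Concretely, I would choose $c = 1/4$ (any fixed constant in $(0,1/4]$ works; $c=1/4$ keeps the three relevant gaps $(1-2c)\Delta_a$ and $c\Delta_a$ of the same order) and take $\Delta \downarrow 0$ in \Cref{thm:expected-regret-total} so that the leading $T\Delta$ term vanishes and the sums range over every arm $a$ with $\Delta_a > 0$.

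The second step is to lower bound each KL divergence appearing on the right-hand side of \Cref{eqn:exp-kl-ms-main-regret} by a term of order $\Delta_a^2/\bar V$, using the generalized Pinsker inequality \Cref{lemma:lip-exp-KL-lower-bound} cited in the excerpt together with \Cref{assum:max-variance}. With $c = 1/4$ the relevant mean gaps are
\[
(\mu_{\max}-c\Delta_a) - (\mu_a + c\Delta_a) = (1-2c)\Delta_a,\qquad (\mu_a+c\Delta_a)-\mu_a = c\Delta_a,\qquad \mu_{\max}-(\mu_{\max}-c\Delta_a) = c\Delta_a,
\]
so each of $\KL{\mu_a+c\Delta_a}{\mu_{\max}-c\Delta_a}$, $\KL{\mu_a+c\Delta_a}{\mu_a}$ and $\KL{\mu_{\max}-c\Delta_a}{\mu_{\max}}$ is bounded below by a universal constant times $\Delta_a^2/\bar V$. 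Substituting these lower bounds into the reciprocal KL terms in \Cref{eqn:exp-kl-ms-main-regret} turns the third term into $O(\bar V/\Delta_a)$ and, after multiplication by the outer $\Delta_a$, yields a contribution $O(\bar V)$ per arm, which is absorbed into the $\sum_a \Delta_a$ term once we note that $\bar V$ and $\Delta_a$ are both bounded by problem-dependent constants.

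For the dominant (second) term I would use the trivial identity $\ln(T x \vee e) \leq \ln T + \ln(x \vee 1) + 1$ for $x = \KL{\mu_a+c\Delta_a}{\mu_{\max}-c\Delta_a}$, together with the fact that this KL divergence is uniformly bounded (since all means lie in a bounded interval determined by the reward range), to conclude that the log factor is $O(\ln T)$. Combining with the KL lower bound gives
\[
\Delta_a \cdot \frac{\ln(T\,\KL{\mu_a+c\Delta_a}{\mu_{\max}-c\Delta_a}\vee e)}{\KL{\mu_a+c\Delta_a}{\mu_{\max}-c\Delta_a}} \;=\; O\!\left(\frac{\bar V \ln T}{\Delta_a}\right).
\]
For the final minimum in \Cref{eqn:exp-kl-ms-main-regret}, I would simply select its second alternative (the $\sqrt{\ln(K)}$/Sub-UCB friendly branch), to which the same Pinsker reduction and logarithm bound apply, again giving $O(\bar V \ln T/\Delta_a)$.

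Summing these per-arm contributions over $\{a : \Delta_a > 0\}$ yields the claimed bound $O\!\left(\sum_{a:\Delta_a>0} \bar V\ln(T)/\Delta_a + \Delta_a\right)$. The only genuinely subtle step is the logarithmic tightening: one needs to verify that $\ln(T \cdot \KL{\cdot}{\cdot} \vee e)$ does not blow up when $\Delta_a$ is close to the diameter of the support, but this is immediate because the KL divergence between distributions in $\Fcal_m$ with means in a bounded interval is itself bounded by a constant depending only on $\Fcal_m$ and the reward range (and, in the unbounded cases like Gaussian, by $\bar V$ and the extremes of $\mu_a$, all of which are problem-dependent constants absorbed by the $O(\cdot)$). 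Everything else is routine algebra after applying \Cref{thm:expected-regret-total} and \Cref{lemma:lip-exp-KL-lower-bound}.
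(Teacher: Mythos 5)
Your proposal is correct and follows essentially the same route as the paper: set $\Delta=0$, fix $c$, lower-bound every KL term via the Pinsker-type \Cref{lemma:lip-exp-KL-lower-bound} under \Cref{assum:max-variance}, take the second branch of the final minimum, and observe that every resulting term is $O\bigl(\bar V\ln(T)/\Delta_a\bigr)$ per arm (the paper merely packages the KL substitutions into the intermediate \Cref{corol:expected-regret-total-max} and reuses the leading-term bound from the minimax proof). One small bookkeeping slip: Pinsker gives $1/\KL{\cdot}{\cdot} = O(\bar V/\Delta_a^2)$, not $O(\bar V/\Delta_a)$, so after the outer $\Delta_a$ the third term contributes $O(\bar V/\Delta_a)$ per arm rather than $O(\bar V)$ — which in fact lands even more comfortably inside the claimed bound, so the conclusion is unaffected.
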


\Cref{corol:exp-kl-ms-mo} establishes a $\sqrt{\ln(K)}$ factor in the minimax ratio.
\Cref{corol:exp-kl-ms-ao} shows that \expklms satisfies asymptotic optimality, ensuring that the long-term performance of \expklms is guaranteed to be optimal.
\Cref{corol:exp-kl-ms-sub-ucb} demonstrates that \expklms satisfies the Sub-UCB criterion, ensuring that it will never perform worse than the regret bound achieved by UCB algorithms in the finite-time regime.
The choice that $L(k) = k-1$ in \expklms is crucial, as other choices of $L(\cdot)$ may cause parts of the regret bound given by \Cref{eqn:exp-kl-ms-main-regret} to no longer hold, resulting in failing to achieve asymptotic optimality or a sharp $\sqrt{\ln(K)}$ minimax ratio.
For comparison, we present results for other choices of $L(k)$ in \Cref{sec:extensions}.

\paragraph{On achieving constant minimax ratio and sub-UCB simultaneously.} 
To the best of our knowledge, the only algorithm that simultaneously achieves a constant minimax ratio and sub-UCB is ADA-UCB~\citep{lattimore2018refining}, for Gaussian rewards, which uses a meticulously crafted confidence bound. We believe that the $\sqrt{\ln(K)}$ minimax ratio is tight for our algorithm, \expklms. Modifying our algorithm to achieve both guarantees simultaneously remains an interesting open question.

Next, we introduce a key assumption that allows us to derive the adaptive variance ratio.
\begin{assum} \label{assum:lip}
    For any bandit instance from a known distribution family $\Fcal_m(M)$ with a Variance function $V(\cdot)$, there exists $C_L > 0$ such that $\forall \mu, \mu' \in (0, M]$, $\abs{V(\mu) - V(\mu')} \leq C_L\abs{ \mu - \mu' }$.
\end{assum}
\Cref{assum:lip} also covers a large set of OPED families as we mentioned in Table~\ref{tab:variance-function}, including Bernoulli, Poisson, Normal with fixed variance, Gamma with fixed shape parameter $k$, and Inverse Gaussian distributions. 
For instance, the Gamma distribution with fixed shape parameter $k$ has mean $k\theta$ and variance $k\theta^2$, so its variance function is $V(\mu) = \mu^2/k$. Within the interval $[0, M]$, $V(x)$ satisfies \Cref{assum:lip} with Lipschitz constant $2M/k$.
In the refined version of \Cref{thm:expected-regret-total}, we can replace $\bar{V}$ by $V(\mu_{\max})$ and show \expklms has adaptive variance ratio in \Cref{corol:exp-kl-ms-mo}.
\begin{corollary}[Adaptive Variance Ratio]\label{corol:exp-kl-ms-adaptive-variance}
    For any $K$-arm bandit problem with \Cref{assum:oped,assum:reward-dist,assum:lip},
    \expklms has:
    $
        \Regret(T) \leq \iupbound{\sqrt{V(\mu_{\max}) KT\ln(K)} + K \ln(T)}.
    $
\end{corollary}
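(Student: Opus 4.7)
The plan is to mirror the proof of \Cref{corol:exp-kl-ms-mo} but replace the global variance bound $\bar V$ by the instance-adaptive quantity $V(\mu_{\max})$ using the Lipschitz property in \Cref{assum:lip}. Starting from \Cref{thm:expected-regret-total}, I would lower-bound every KL divergence appearing in the four summation terms by a Lipschitz-refined Pinsker-type inequality, then carry out the same $\Delta$-tuning argument that delivers the $\sqrt{\ln K}$ minimax ratio.

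First I would establish a refined Pinsker inequality of the form
\[
\KL{\mu}{\mu'} \;\geq\; \frac{(\mu-\mu')^2}{2\bigl(V(\mu_{\max}) + C_L\,\max(|\mu-\mu_{\max}|,\,|\mu'-\mu_{\max}|)\bigr)}.
\]
This follows from the integral representation $\KL{\mu}{\mu'} = \int_{\mu'}^\mu (\mu-s)/V(s)\,ds$ (a consequence of \Cref{eqn:KL-eqn} and $V = b'' \circ (b')^{-1}$) combined with $V(s) \leq V(\mu_{\max}) + C_L|s-\mu_{\max}|$ from \Cref{assum:lip}. Applying this bound to each KL term in \Cref{eqn:exp-kl-ms-main-regret} with $c = 1/4$, and noting that in every such pair $(\mu,\mu')$ both arguments lie within $O(\Delta_a)$ of $\mu_{\max}$, I obtain $1/\KL{\cdot}{\cdot} \lesssim (V(\mu_{\max}) + C_L\Delta_a)/\Delta_a^2$.

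Substituting these bounds into \Cref{eqn:exp-kl-ms-main-regret}, the dominant second term becomes at most
\[
\sum_{a:\Delta_a>\Delta}\frac{\bigl(V(\mu_{\max}) + C_L \Delta_a\bigr)\ln\bigl(T \Delta_a^2/V(\mu_{\max}) \vee e\bigr)}{\Delta_a},
\]
and the third and fourth terms contribute analogous sums with either no $\ln$ factor or the same $\ln(T\Delta_a^2/V(\mu_{\max}))$ factor (using the second branch of the $\wedge$ in the fourth term). Splitting the numerator additively, the $C_L\Delta_a$ piece cancels an $\Delta_a$ and sums to $O(K C_L \ln T)$, matching the $K\ln T$ remainder in the target bound, while the $V(\mu_{\max})$ piece is handled exactly as in the proof of \Cref{corol:exp-kl-ms-mo}.

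Finally I would set $\Delta = \sqrt{V(\mu_{\max}) K \ln K / T}$; this choice makes $T\Delta^2/V(\mu_{\max}) = K\ln K$, so the logarithmic factor is $\Theta(\ln K)$ at the boundary $\Delta_a = \Delta$, and it balances $T\Delta$ with the main summation. Using monotonicity of $x \mapsto \ln(Tx^2/V(\mu_{\max}))/x$ for $x \gtrsim \sqrt{V(\mu_{\max})/T}$, each of the $K$ summands is controlled by its boundary value, yielding $\iupbound{\sqrt{V(\mu_{\max}) KT\ln K}}$ for the $V(\mu_{\max})$-piece and $\iupbound{K\ln T}$ for the Lipschitz correction, which combine to the claimed rate. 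The principal obstacle I anticipate is verifying that the Lipschitz-refined Pinsker inequality applies uniformly to all four KL pairs appearing in \Cref{eqn:exp-kl-ms-main-regret} (including $\KL{\mu_a + c\Delta_a}{\mu_a}$ whose arguments are not centered at $\mu_{\max}$) without introducing an additional multiplicative constant that would spoil the tight $\sqrt{\ln K}$ factor, and confirming that the second branch of the fourth term's minimum is indeed the branch that yields only $\ln K$ rather than $\ln T$ after the substitution.
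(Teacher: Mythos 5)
Your proposal is correct and follows essentially the same route as the paper: it lower-bounds each KL term via the integral representation of the divergence together with the Lipschitz variance assumption (the paper's \Cref{lemma:exp-KL-eq} and \Cref{lemma:lip-exp-KL-lower-bound}, which the paper packages into the intermediate \Cref{corol:expected-regret-total-lip}), takes the $\ln(T\KL{\cdot}{\cdot}\vee e)$ branch of the fourth term's minimum, and tunes $\Delta = \sqrt{V(\mu_{\max})K\ln(K)/T}$ exactly as the paper does. The only cosmetic difference is that you center the Lipschitz correction at $\mu_{\max}$ in a single inequality, whereas the paper states the KL lower bound at the endpoints and then applies Lipschitzness once more to relate $V(\mu_{\max}-c\Delta_a)$ and $V(\mu_a+c\Delta_a)$ to $V(\mu_{\max})$; both yield the same $\bigl(V(\mu_{\max})+C_L\Delta_a\bigr)/\Delta_a^2$ bounds and hence the same final rate.
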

\Cref{corol:exp-kl-ms-adaptive-variance} showing that \expklms satisfies both a logarithmic minimax ratio of $\sqrt{\ln(K)}$ and an adaptive variance ratio. 
Such an adaptive regret bound is much better than \Cref{corol:exp-kl-ms-mo}.

\section{Proof Sketch} \label{sec:proof-sketch} 

This section outlines the proof of \Cref{thm:expected-regret-total}. 
The regret of a bandit algorithm can be rewritten as the product between the reward gap of arm $a$ and the expected number of times arm $a$ is pulled over the arm space:
$
    \Regret(T) = \textstyle \EE\isbr{\sum_{t=1}^T \Delta_{I_t}} = \sum_{a\in[K]} \Delta_a \EE[N_{T, a}].
$
We divide the task of bounding of $\EE\sbr{N_{T, a}}$ into four different distinct cases and then upper bound each case. Before formally introducing these cases, we first define $\varepsilon_{1, a}, \varepsilon_{2, a} > 0$ such that $\forall a \in [K], \varepsilon_{1, a} + \varepsilon_{2, a} < \Delta_a$ and introduce several events to partition the entire sample space for each arm $a$ as follows: 
\begin{itemize}
\item $A_{t, a} \coloneqq \cbr{I_t = a}$ represents the event that arm $a$ is pulled at time step $t$.
\item $U_{t, a} \coloneqq \cbr{N_{t, a} < u_a + 1}$ represents the event that the number of samples of arm $a$ is less than a threshold $u_a \coloneqq \expDefU$ 
\item $E_{t, a} \coloneqq \cbr{\hmu_{t, a} \leq \mu_a+{\varepsilon_{1, a}}}$ represents the event that the expected reward of the suboptimal arm $a$ is not overestimated by more than $\varepsilon_{1, a}$.
\item $F_{t, a} \coloneqq \cbr{\hmu_{t, \max} \geq \mu_{\max}-{\varepsilon_{2, a}}}$ represents the event that the best arm's expected reward is not underestimated by more than $\varepsilon_{2, a}$.
\end{itemize}

\begin{figure}
    \centering
    \includegraphics[height=0.18\linewidth]{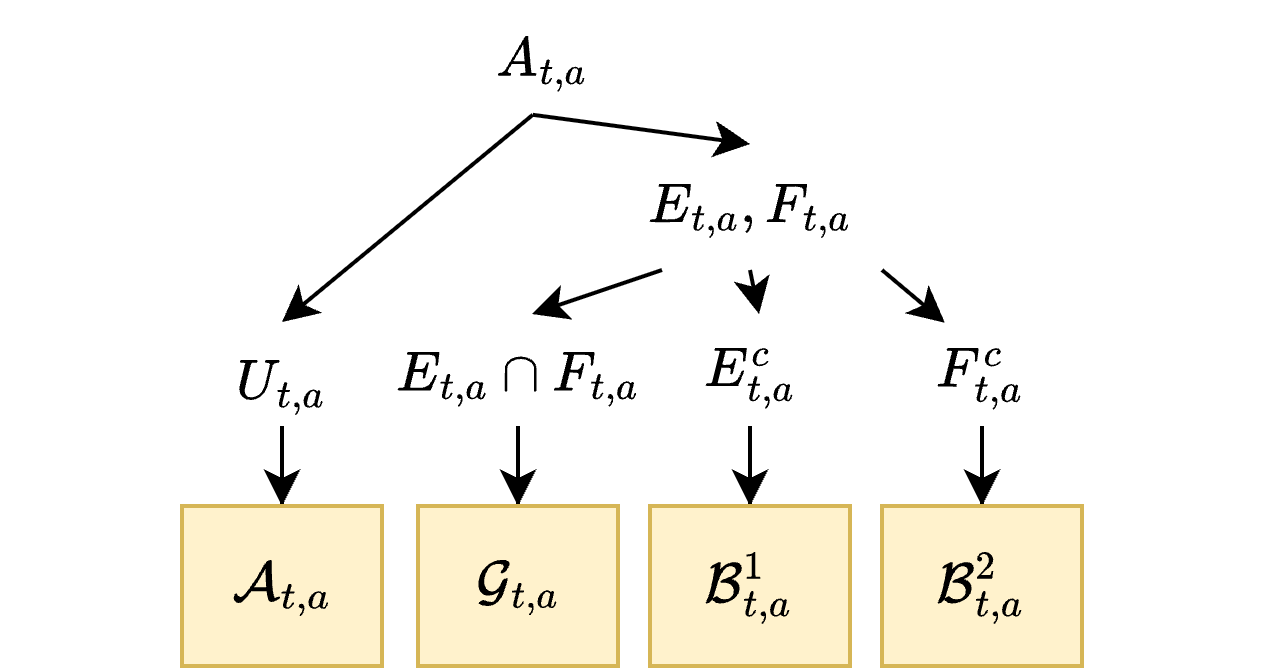}
    \caption{Case splitting of our regret analysis. 
    }
    \label{fig:case-split}
\end{figure}
At each time step, by grouping the above events across time steps, we can partition the sample space into four disjoint cases (\Cref{fig:case-split}).
We overload symbols $\Acal_a$, $\Gcal_a$, $\Bcal^1_a$, and $\Bcal^2_a$ to denote corresponding partition, omitting the time step $t$.
\begin{align*}
    \Acal_a =& \textstyle  \EE\sbr{ \sum_{t=1}^T \onec{A_{t,a} \cap U_{t,a}} }
    & \Gcal_a = \textstyle \EE\sbr{ \sum_{t=1}^T \onec{A_{t,a} \cap U_{t,a}^c \cap E_{t,a} \cap F_{t,a}} } \\
    \Bcal_a^1 =& \textstyle \EE\sbr{ \sum_{t=1}^T \onec{A_{t,a} \cap E_{t,a}^c} }
    & \Bcal_a^2 = \textstyle \EE\sbr{ \sum_{t=1}^T \onec{A_{t,a} \cap F_{t,a}^c} }
\end{align*}

$\Acal_{a}$ handles situations where the number of arm pulls $a$ is below a threshold $u_a$, which can be easily controlled by the threshold $u_a$
The remaining three terms focus on scenarios where there are enough pulls to accurately estimate arm $a$'s expected reward with high probability. 
$\Gcal_{a}$ addresses the case when the estimations of the suboptimal arms are accurate.
Since the estimation of arm $a$, $\hmu_{t,a}$, is at least $\Delta_a - \varepsilon_{1, a} - \varepsilon_{2, a}$ away from $\hmu_{t,\max}$, the expected number of pulls to arm $a$ is upper bounded by $\frac{1}{\KL{\mu_a+\varepsilon_{1, a}}{\mu_a-\varepsilon_{2, a}}}$, as shown in the straightforward calculation (See \Cref{sec:good-event}, the Proof of \Cref{pro:good}).
$\Bcal^1_a$ represents the case where the expected reward of arm $a$ is overestimated.
Using the well-known Chernoff tail bound (\Cref{lemma:maximal-inequality}), we can control the probability of pulling arm $a$ by $\expto{-k \KL{\mu_a+\varepsilon_{1, a}}{\mu_a}}$ for each count of pulls to arm $a$, where $k=1,\dots,T$.
The sum of these probabilities is bounded by $\frac{1}{\KL{\mu_a+\varepsilon_{1, a}}{\mu_a}}$ (See \Cref{sec:bad-event-one}, the Proof of \Cref{pro:bad-1}).
$\Bcal^2_a$ represents the case when the expected reward of the optimal arm is underestimated, but the suboptimal arm is selected.
One straightforward way is applying the same Chernoff bound used for bounding $\Bcal^1_a$, utilizing $L(k) = k-1$, which leads to the following bound in \Cref{eqn:bad-case-2-ao}:
\begin{align}
  \Bcal^2_a 
  \leq& \textstyle
     \frac{1}{\KL{\mu_{\max} - \varepsilon_{2,a}}{\mu_{\max}}} 
      + \frac{1}{\del{\KL{\mu_{\max} - \varepsilon_{2,a}}{\mu_{\max}}}^2} \label{eqn:bad-case-2-ao}
\end{align}
The squared KL term arises due to the differing conditions between $\Bcal^1_a$ and $\Bcal^2_a$: while both bound the estimated mean away from the ground truth, $\Bcal^1_a$ pertains to the same arm being pulled, whereas $\Bcal^2_a$ involves a different arm, introducing an additional squared KL term.
This bound is useful to establish asymptotic optimality since when $T \to \infty$, the RHS of the \Cref{eqn:bad-case-2-ao} becomes negligible.
However, relying on \Cref{eqn:bad-case-2-ao} alone would result in an undesirable minimax ratio upper bound of $T^{1/4}$.
To prove that \expklms satisfies $\sqrt{\ln(K)}$ minimax ratio and Sub-UCB, we derive another upper bound for $\Bcal_a^2$:
\begin{align}
    \Bcal^2_a 
    \leq& \textstyle
       \frac{\ln(T\KL{\mu_{\max} - \varepsilon_{2,a}}{\mu_{\max}} \vee e)}{\KL{\mu_{\max} - \varepsilon_{2,a}}{\mu_{\max}}} \label{eqn:bad-case-2-mo}
\end{align}
\Cref{eqn:bad-2-mo} contains only $\KL{\mu_{\max} - \varepsilon_{2, a}}{\mu_{\max}}$ in the denominator, which guarantees a $\sqrt{\ln(K)}$ minimax ratio and Sub-UCB. However, \Cref{eqn:bad-case-2-mo} introduces a constant factor in the asymptotic analysis, leading to a loss of asymptotic optimality.
This issue can be overcome by combining \Cref{eqn:bad-2-ao} with \Cref{eqn:bad-2-mo} in separate regret analyses: \Cref{eqn:bad-2-ao} is used to establish asymptotic optimality, while \Cref{eqn:bad-2-mo} is used to show the $\sqrt{\ln(K)}$ minimax ratio.
To prove \Cref{eqn:bad-case-2-mo}, we construct a series of high-probability "clean" events by using a sequence of thresholds $\boldsymbol{\alpha}\coloneqq\cbr{\alpha_k}_{k=1}^T$ to lower bound $\hmu_{(k), 1}$. Notice that we change the subscript of $\hmu$ to $(k)$, representing that it is the empirical mean from the optimal arm's first $k$ times arm pulls. Specifically, $\hmu_{(k), 1}$ is the empirical mean of arm 1 over its first $k$ pulls.
And for each $k$, we define $\Ecal_k(\alpha_k) \coloneqq \icbr{ \alpha_k \leq \hmu_{(k),1} \leq \mu_{\max}-\varepsilon_{2,a}}$ and $\Ecal(\boldsymbol{\alpha})\coloneqq \cap_{1\leq k \leq T} \Ecal_k(\alpha_k)$, which is event that all $\hmu_{(k), 1}$ from $k=1$ to $T$ are lower bounded by $\alpha_k$.

\paragraph{On the challenge of establishing asymptotic optimality for $L(k)=k$.}
As mentioned in the Introduction, a natural idea is to analyze the properties of Algorithm~\ref{alg:general-exp-kl-ms} with $L(k) = k$. While we show in~\Cref{sec:extensions} that such a choice satisfies adaptive minimax ratio and sub-UCB, we did not succeed in showing its asymptotic optimality.
Towards proving asymptotic optimality, we aim to bound $\Bcal_a^2$ by $\frac{1}{\KL{\mu_{\max}-\varepsilon_{2,a}}{ \mu_{\max}}}$ plus some lower order terms. 
However, a naive extension of the previous proof techniques~\citep{qin2023kullback} bounds $\Bcal_a^2$ by the integral $\EE\sbr{\onec{\hmu_{(k), 1} \leq \mu_{\max} - \varepsilon_{2,a}} \expto{\KL{\hmu_{(k), 1}}{\mu_{\max}}}}$, 
which diverges when the reward distribution is e.g., exponential.
We conjecture that this may not be an artifact of our analysis, but rather a fundamental limitation of \expklms with $L(k)=k$.

\section{Conclusions}
In this paper, we introduce the \gexpklms algorithm, which works for OPED families of the form \Cref{eqn:oped} and we prove that when the inverse temperature function is set to $L(k) = k - 1$, \expklms achieves a minimax ratio of $\sqrt{\ln(K)}$, asymptotic optimality, adaptive variance, and the Sub-UCB criterion at the same time.

An interesting direction for future work is to generalize the result to OPED families with general sufficient statistics beyond the identity function, such as the Beta distribution, and to demonstrate that \gexpklms can still satisfy all the criteria.
One immediate idea is to apply a transformation technique that maps general sufficient statistics back to the identity statistics.
Although \citet{baudry2023general} presents a more general assumption on the reward distribution, a finite-time regret bound has not been established in their work.

Second, we believe that \expklms has significant potential in the contextual bandit problem. We hope that the techniques developed in \expklms can be applied to design adaptive and optimal algorithms for generalized linear bandit problems with reward distributions belonging to exponential families~\citep{filippi10parametric}.
Such progress has been in this direction for the bounded reward~\citep{lee2021achieving} and Gaussian reward  setting~\citep{balagopalan2024minimum}.

\bibliographystyle{plainnat}
\bibliography{references}

\newpage
\appendix

\tableofcontents


\section{Extensions} \label{sec:extensions}

In this section, we present the results from other choices of function $L(k)$ and demonstrate that they also satisfy various desirable properties. Here, considering the overall restriction on $L(k)$ is $0 < L(k) \leq k$, we pick two examples, $L(k) = k/d$ where $d > 1$ and $L(k) = k$.

\subsection{\texorpdfstring{Extension 1: $L(k) = k/d$, $d > 1$}{Extension 1: L(k) = k/d, d > 1}}
In this case, the inverse temperature function 
imposed by the number of arm pulls is attenuated by a constant factor $d$.

\begin{restatable}{corollary}{constantminimax}\textnormal{(Logarithmic Minimax Ratio and Adaptive Variance Ratio)} \label{corol:exp-kl-ms-half-mo}
    For any $K$-arm bandit problem with \Cref{assum:oped,assum:reward-dist,assum:lip}, when $d>1$ \gexpklms with $L(k) = k/d$, has regret:
    $
        \Regret(T) \leq \iupbound{\sqrt{V(\mu_{\max}) KT \ln(K)}} + \iupbound{K\ln(T)}.
    $
\end{restatable}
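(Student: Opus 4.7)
The plan is to mimic the proof of Theorem~\ref{thm:expected-regret-total}, keeping the four-case decomposition $\EE[N_{T,a}] \leq \Acal_a + \Gcal_a + \Bcal^1_a + \Bcal^2_a$ and the definitions of $E_{t,a}$ and $F_{t,a}$ intact, but rescaling the threshold inside $U_{t,a}$ to
$u_a = d \cdot \ln(T \KL{\mu_a+\varepsilon_{1,a}}{\mu_{\max}-\varepsilon_{2,a}} \vee e)/\KL{\mu_a+\varepsilon_{1,a}}{\mu_{\max}-\varepsilon_{2,a}}$.
With this choice, $L(u_a) = u_a/d$ gives $L(u_a)\cdot \KL{\mu_a+\varepsilon_{1,a}}{\mu_{\max}-\varepsilon_{2,a}} \geq \ln(T\KL{\cdots}{\cdots} \vee e)$, so on the good-event regime the probability bound $p_{t,a} \leq 1/(T\KL{\cdots}{\cdots})$ used in the Theorem~\ref{thm:expected-regret-total} proof carries over verbatim, yielding $\Gcal_a = \iupbound{1/\KL{\cdots}{\cdots}}$.

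The per-case changes are minor and absorbable into constants. First, $\Acal_a \leq u_a + 1$ is now inflated by the factor $d$, so the leading regret contribution becomes $d \cdot \Delta_a \ln(T\KL{\cdots}{\cdots})/\KL{\cdots}{\cdots}$; since $d$ is a fixed constant, this factor is absorbed by $\iupbound{\cdot}$. Second, $\Bcal^1_a$ is controlled by a Chernoff bound on the empirical mean of arm $a$ that does not depend on $L$, and goes through unchanged. Third, for $\Bcal^2_a$ I would use only the minimax-style bound analogous to Eq.~\eqref{eqn:bad-case-2-mo} (not the asymptotic-optimality bound), since the target statement does not require asymptotic optimality. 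The paper's argument contains an internal sum of the form $\sum_k \tfrac{L(k)}{k}\expto{-k\KL{\cdots}{\cdots}}\ln(T/k)$, which under $L(k)=k/d$ collapses to $\tfrac{1}{d}\sum_k \expto{-k\KL{\cdots}{\cdots}}\ln(T/k) = \iupbound{\ln(T\KL{\cdots}{\cdots})/\KL{\cdots}{\cdots}}$, i.e., the same shape (actually slightly smaller) than in the $L(k) = k-1$ case. Combining the four cases reproduces a bound of the same form as Theorem~\ref{thm:expected-regret-total}, with all implicit constants depending polynomially on $d$ (but not on any problem quantity).

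From that Theorem~\ref{thm:expected-regret-total}-style bound, the conversion to the minimax/adaptive-variance regret proceeds exactly as in the proof of Corollary~\ref{corol:exp-kl-ms-adaptive-variance}: apply the generalized Pinsker inequality of Lemma~\ref{lemma:lip-exp-KL-lower-bound} together with the Lipschitz property of $V(\cdot)$ (Assumption~\ref{assum:lip}) to lower bound $\KL{\mu_a+c\Delta_a}{\mu_{\max}-c\Delta_a}$ by $\Omega(\Delta_a^2/V(\mu_{\max}))$ up to a $C_L \Delta_a^3$ correction, pick $c=\tfrac{1}{4}$ and optimize $\Delta \sim \sqrt{V(\mu_{\max}) K \ln(K)/T}$, and sum the lower-order terms into $\iupbound{K\ln(T)}$. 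This yields the claimed $\iupbound{\sqrt{V(\mu_{\max})KT\ln(K)}}+\iupbound{K\ln(T)}$ bound. The main obstacle I anticipate is the careful re-verification of the $\Bcal^2_a$ minimax bound under $L(k)=k/d$: the paper's proof builds clean events parametrized by thresholds $\alpha_k$ on the optimal arm's empirical mean, and one must confirm those thresholds can still be chosen so the internal sums remain $\iupbound{\ln(T\KL{\cdots}{\cdots})/\KL{\cdots}{\cdots}}$ without a $(d-1)^{-1}$ blow-up -- since $k-L(k)=k(d-1)/d$ is bounded away from zero for every $d>1$, the argument should go through cleanly, but the dependence on $d-1$ in the constants must be tracked.
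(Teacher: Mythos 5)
Your proposal is correct and follows the paper's proof of this corollary (which goes through \Cref{thm:expected-regret-total-version-half}) in nearly every respect: the same four-case decomposition, the same inflation of the threshold $u_a$ by the factor $d$ so that $L(u_a)$ compensates for the attenuated inverse temperature, unchanged treatment of $\Gcal_a$ and $\Bcal^1_a$, and the same endgame via \Cref{lemma:lip-exp-KL-lower-bound}, the monotonicity argument of \Cref{eqn:bound-of-leading-term-adaptive}, and the choice $\Delta \asymp \sqrt{V(\mu_{\max})K\ln(K)/T}$. The one substantive difference is the treatment of $\Bcal^2_a$. You invoke the minimax-style bound (\Cref{pro:bad-2-mo}), whose internal sum $\sum_{k}\frac{2L(k)}{k}e^{-k\KL{\mu_{\max}-\varepsilon_{2,a}}{\mu_{\max}}}\ln(T/k)$ collapses under $L(k)=k/d$ to $\frac{2}{d}\sum_k e^{-k\KL{\mu_{\max}-\varepsilon_{2,a}}{\mu_{\max}}}\ln(T/k)$, which is $O\bigl(\ln(T\KL{\mu_{\max}-\varepsilon_{2,a}}{\mu_{\max}}\vee e)/\KL{\mu_{\max}-\varepsilon_{2,a}}{\mu_{\max}}\bigr)$ by \Cref{lemma:geo-log-sum}; and, as you anticipated, the $(k-L(k))^{-1}$ factor cancels after applying $1-e^{-x}\leq x$, so no $(d-1)^{-1}$ blow-up appears on this route. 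The paper instead uses the asymptotic-optimality-style bound (\Cref{pro:bad-2-ao}), exploiting that $L(k)/(k-L(k))=1/(d-1)$ is constant for $L(k)=k/d$, which turns that sum into a plain geometric series and yields the $T$-free bound $\Bcal^2_a \leq \frac{d}{d-1}\cdot\frac{1}{\KL{\mu_{\max}-\varepsilon_{2,a}}{\mu_{\max}}}$. The paper's route is slightly tighter (no extra logarithmic factor in the $\Bcal^2_a$ contribution, at the cost of constants degrading as $d\to 1^+$); yours carries an extra logarithmic factor in that one term but with $d$-uniform constants. Either suffices for the stated corollary, because the leading $u_a$ term already contributes $\Theta\bigl((V(\mu_{\max})/\Delta_a)\ln(T\Delta_a^2/V(\mu_{\max})\vee e)\bigr)$ per arm, which dominates both versions of the $\Bcal^2_a$ bound after summation over arms with $\Delta_a>\Delta$.
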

\begin{restatable}{corollary}{constantsubucb}\textnormal{(Sub-UCB criterion)} \label{corol:exp-kl-ms-half-sub-ucb}
    For any $K$-arm bandit problem with \Cref{assum:oped,assum:reward-dist,assum:lip}, when $d>1$ \gexpklms with $L(k) = k/d$ satisfies Sub-UCB criterion which means that its regret is bounded by 
    $
        \Regret(T) \leq \iupbound{\textstyle\sum_{a: \Delta_a > 0} \frac{\ln(T)}{\Delta_a} + \Delta_a}.
    $
\end{restatable}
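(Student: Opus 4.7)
The plan is to reduce the corollary to a finite-time regret decomposition analogous to \Cref{thm:expected-regret-total}, but with the inverse temperature $L(k)=k/d$ in place of $L(k)=k-1$, and then to specialize the free parameters so that each KL quantity becomes quadratic in $\Delta_a$ via a generalized Pinsker-type inequality.

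First I would rerun the four-case decomposition $\EE[N_{T,a}]=\Acal_a+\Gcal_a+\Bcal^1_a+\Bcal^2_a$ from \Cref{sec:proof-sketch}. The threshold piece $\Acal_a$ is controlled by $u_a = O\del{d\,\ln(T\,\KL{\mu_a+\varepsilon_{1,a}}{\mu_{\max}-\varepsilon_{2,a}}\vee e)/\KL{\mu_a+\varepsilon_{1,a}}{\mu_{\max}-\varepsilon_{2,a}}}$, where the factor $d$ reflects the attenuated exponent. The good-event term $\Gcal_a$ is bounded by the standard Maillard geometric-series argument, producing $O\del{d/\KL{\mu_a+\varepsilon_{1,a}}{\mu_{\max}-\varepsilon_{2,a}}}$. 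The suboptimal-overestimation term $\Bcal^1_a$ is a plain Chernoff summation and yields $O(1/\KL{\mu_a+\varepsilon_{1,a}}{\mu_a})$, independent of $d$. For $\Bcal^2_a$ (the optimal arm is underestimated but a suboptimal arm is still chosen), I would run the clean-event construction with thresholds $\{\alpha_k\}$ on $\hmu_{(k),1}$ outlined in the proof sketch, with the exponential weight $\exp(-(k-1)\KL{\cdot}{\cdot})$ replaced by $\exp(-(k/d)\KL{\cdot}{\cdot})$, and pair it with the Sub-UCB-style tail bound used in the main proof, yielding $\Bcal^2_a = O\del{\ln(T\,\KL{\mu_{\max}-\varepsilon_{2,a}}{\mu_{\max}}\vee e)/\KL{\mu_{\max}-\varepsilon_{2,a}}{\mu_{\max}}}$ with a hidden constant that depends on $d$ but not on $T$ or the instance.

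To conclude I would specialize $\varepsilon_{1,a}=\varepsilon_{2,a}=\Delta_a/4$ and invoke \Cref{lemma:lip-exp-KL-lower-bound}, which under \Cref{assum:lip} yields $\KL{\mu_a+\Delta_a/4}{\mu_{\max}-\Delta_a/4}=\Omega(\Delta_a^2/V(\mu_{\max}))$ and analogous lower bounds on the other KL quantities appearing in the denominators. Multiplying by the $\Delta_a$ prefactor and summing over arms with $\Delta_a>0$, each contribution collapses to $O(\ln(T)/\Delta_a)$, with the residual $O(\Delta_a)$ constants absorbed into the additive $\sum_a \Delta_a$ summand, and $d$, $V(\mu_{\max})$, $C_L$ absorbed into the hidden big-$O$ constant.

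The main obstacle is the $\Bcal^2_a$ step: the geometric-series estimates that drive the bound require controlling sums roughly of the shape $\sum_{k=1}^T \frac{L(k)}{k-L(k)}\,e^{-k\,\KL{\cdot}{\cdot}}$ (for the asymptotic-style contribution) and $\sum_{k=1}^T \frac{L(k)}{k}\,e^{-k\,\KL{\cdot}{\cdot}}\,\ln(T/k)$ (for the Sub-UCB-style contribution). With $L(k)=k/d$ and $d>1$ the ratio $L(k)/(k-L(k))=1/(d-1)$ is a finite constant and both sums are tame, whereas $d=1$ would make $k-L(k)=0$ and the bookkeeping collapse; this is precisely why the statement restricts to $d>1$, mirroring the obstruction at $L(k)=k$ flagged in \Cref{sec:proof-sketch}. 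Once these sums are controlled, the remainder is routine arithmetic over the four cases followed by the Pinsker substitution.
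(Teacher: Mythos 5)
Your proposal is correct and follows essentially the same route as the paper: the four-case decomposition with the threshold $u_a$ scaled by $d$, Chernoff for $\Bcal^1_a$, the clean-event construction for $\Bcal^2_a$, then $\varepsilon_{1,a}=\varepsilon_{2,a}=c\Delta_a$ and \Cref{lemma:lip-exp-KL-lower-bound} to collapse everything to $O(\ln(T)/\Delta_a+\Delta_a)$. The only (immaterial) difference is that the paper bounds $\Bcal^2_a$ via the asymptotic-style estimate of \Cref{pro:bad-2-ao}, which for $L(k)=k/d$ gives the $T$-independent bound $\frac{d}{(d-1)\KL{\mu_{\max}-\varepsilon_{2,a}}{\mu_{\max}}}$ because $L(k)/(k-L(k))=1/(d-1)$ is constant, whereas you invoke the $\ln(T\cdot)$-carrying bound of \Cref{pro:bad-2-mo}; both suffice for Sub-UCB.
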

The above Corollaries show that $\gexpklms$ with $L(k)=k/d$ can have the same minimax ratio as $\expklms$, adaptive variance ratio, and Sub-UCB. However, since the newly introduced additional factor $d$, it will violate the asymptotic optimality, resulting in a constant factor difference compared to \expklms in the asymptotic performance.

\subsection{\texorpdfstring{Extension 2: $L(k) = k$}{Extension 2: L(k) = k}}
\gexpklms is the same as KL-MS when $L(k) = k$. Based on the current proof framework, we can only show that \gexpklms with $L(k) = k$ satisfies an adaptive variance ratio and has a minimax ratio as $\ln(T)$.

\begin{restatable}{corollary}{identityadaptive}\textnormal{(Logarithmic Minimax Ratio and Adaptive Variance Ratio)} \label{corol:exp-kl-ms-one-mo}
For any $K$-arm bandit problem with \Cref{assum:oped,assum:reward-dist,assum:lip}, \gexpklms with $L(k) = k$ has regret bounded as:
    $
        \Regret(T) \leq \iupbound{\sqrt{V(\mu_{\max}) KT\ln(T)}} + \iupbound{ K \ln(T) }
    $.
\end{restatable}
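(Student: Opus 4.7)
The plan is to follow the case decomposition used to prove \Cref{thm:expected-regret-total}, adapting each piece to the choice $L(k)=k$. Writing $\Regret(T) = \sum_{a:\Delta_a>0} \Delta_a \EE[N_{T,a}]$, I fix parameters $\varepsilon_{1,a},\varepsilon_{2,a}>0$ with $\varepsilon_{1,a}+\varepsilon_{2,a}<\Delta_a$ and the threshold $u_a$ from the sketch, and split $\EE[N_{T,a}] = \Acal_a + \Gcal_a + \Bcal_a^1 + \Bcal_a^2$ according to the events described in \Cref{fig:case-split}. The bounds on the first three pieces are insensitive to replacing $L(k)=k-1$ with $L(k)=k$: $\Acal_a \leq u_a + 1$; $\Gcal_a = \iupbound{1/\KL{\mu_a+\varepsilon_{1,a}}{\mu_{\max}-\varepsilon_{2,a}}}$ by invoking the sampling rule on the good event; and $\Bcal_a^1 = \iupbound{1/\KL{\mu_a+\varepsilon_{1,a}}{\mu_a}}$ via a Chernoff maximal inequality on $\hmu_{t,a}$ overestimating $\mu_a$.

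The only genuinely new ingredient is the bound on $\Bcal_a^2$, the contribution from steps when the best arm's empirical mean is underestimated. As the paragraph ``On the challenge of establishing asymptotic optimality for $L(k)=k$'' highlights, the asymptotic-optimal bound of \Cref{eqn:bad-case-2-ao} is not available under $L(k)=k$. However, the minimax-flavored bound analogous to \Cref{eqn:bad-case-2-mo} remains achievable, and it suffices for the present claim. I reuse the same clean-event construction as in the main theorem: introduce a schedule $\boldsymbol{\alpha}=\{\alpha_k\}_{k=1}^T$ of lower thresholds for $\hmu_{(k),1}$; on $\Ecal(\boldsymbol{\alpha})$, control $p_{t,a}$ via $\exp(-k\KL{\alpha_k}{\mu_{\max}-\varepsilon_{2,a}})$; off the clean event, bound failure probabilities by Chernoff. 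Tuning $\alpha_k$ and summing over $k=1,\dots,T$ yields $\Bcal_a^2 = \iupbound{\ln(T)/\KL{\mu_{\max}-\varepsilon_{2,a}}{\mu_{\max}}}$, with the extra $\ln T$ factor (relative to \Cref{eqn:bad-case-2-ao}) being precisely what is lost by not having the $-1$ shift.

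Combining the four pieces produces a finite-time regret bound analogous to \Cref{thm:expected-regret-total}, but retaining only the minimax-flavored branch of $\Bcal_a^2$. Setting $\varepsilon_{1,a}=\varepsilon_{2,a}=c\Delta_a$ for a small absolute constant $c$ and applying \Cref{lemma:lip-exp-KL-lower-bound} lower-bounds each KL term appearing in a denominator by $\Omega(\Delta_a^2/(V(\mu_{\max})+C_L\Delta_a))$. Each suboptimal arm's contribution to the regret therefore reduces to
\begin{equation*}
    \iupbound{\Delta_a \cdot \frac{(V(\mu_{\max}) + C_L\Delta_a)\ln T}{\Delta_a^2}} = \iupbound{\frac{V(\mu_{\max})\ln T}{\Delta_a} + C_L \ln T}.
\end{equation*}
The standard trick of truncating at $\Delta = \sqrt{V(\mu_{\max}) K \ln T / T}$ and balancing $T\Delta$ against $\sum_{a:\Delta_a>\Delta}\frac{V(\mu_{\max})\ln T}{\Delta_a}$ delivers the leading $\iupbound{\sqrt{V(\mu_{\max}) KT \ln T}}$ term, while the $K$ copies of the lower-order $C_L\ln T$ terms account for the residual $\iupbound{K\ln T}$.

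The hard part is the $\Bcal_a^2$ step for $L(k)=k$. Unlike $L(k)=k-1$, where the $-1$ offset in the exponent yields exactly the slack needed to reach the clean $1/\KL{\cdot}{\cdot}$ asymptotic-optimal bound, here a naive telescoping introduces a factor $\exp(\KL{\hmu_{(k),1}}{\mu_{\max}})$ whose integral diverges for many OPED families, as flagged by the authors. Choosing the clean-event thresholds $\alpha_k$ and the Chernoff failure bound carefully so that the resulting sum lands at $\iupbound{\ln(T)/\KL{\mu_{\max}-\varepsilon_{2,a}}{\mu_{\max}}}$ rather than something worse is the delicate step; the rest is essentially a direct transcription of the proof of \Cref{thm:expected-regret-total}, which is also why we cannot hope for the tighter $\sqrt{\ln K}$ scaling established in \Cref{corol:exp-kl-ms-mo}.
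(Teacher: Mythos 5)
Your proposal is correct and mirrors the paper's own route: the paper proves this corollary via \Cref{thm:expected-regret-total-version-identity}, which uses the identical four-way decomposition, keeps only the minimax-flavored bound on $\Bcal^2_a$ (the bound \Cref{eqn:bad-2-ao-corner} of \Cref{pro:bad-2-mo}, since \Cref{pro:bad-2-ao} requires $L(k)<k$ strictly), yielding $\Bcal^2_a = \iupbound{\ln(T)/\KL{\mu_{\max}-\varepsilon_{2,a}}{\mu_{\max}}}$, and then applies \Cref{lemma:lip-exp-KL-lower-bound} and truncates at $\Delta = \sqrt{V(\mu_{\max})K\ln(T)/T}$ exactly as you do. No substantive differences to report.
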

\begin{restatable}{corollary}{identitysubucb}\textnormal{(Sub-UCB criterion)} \label{corol:exp-kl-ms-one-sub-ucb}
For any $K$-arm bandit problem with \Cref{assum:oped,assum:reward-dist,assum:lip}, \gexpklms with $L(k) = k$ satisfies Sub-UCB criterion has regret bounded as 
    $
        \Regret(T) \leq \iupbound{\textstyle\sum_{a: \Delta_a > 0} \frac{\ln(T)}{\Delta_a} + \Delta_a}.
    $
\end{restatable}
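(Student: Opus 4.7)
The plan is to follow the same four-case decomposition used for $\expklms$ in \Cref{thm:expected-regret-total}, but with the arm-sampling weights adjusted to reflect $L(k)=k$ instead of $L(k)=k-1$, and then to cash out the resulting instance-dependent bound through a generalized Pinsker-type inequality. Concretely, I would first derive the $L(k)=k$ analog of the regret bound in \Cref{eqn:exp-kl-ms-main-regret}: the pieces $\Acal_a$ (arm $a$ pulled below its threshold $u_a$), $\Gcal_a$ (good estimate), and $\Bcal^1_a$ (suboptimal arm overestimated) can be handled identically to the proof sketch in \Cref{sec:proof-sketch}, since the arguments there do not rely on the specific $-1$ shift. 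Only the bound on $\Bcal^2_a$ (optimal arm underestimated) must be redone: with $L(k)=k$, I would apply the same clean-event construction $\Ecal(\boldsymbol{\alpha})=\cap_k \Ecal_k(\alpha_k)$ that produced \Cref{eqn:bad-case-2-mo}, where now the exponent $-L(k)\KL{\cdot}{\cdot}$ no longer leaves an extra factor of $e^{\KL{\cdot}{\cdot}}$ to cancel. This forces the final bound on $\Bcal^2_a$ to carry a full $\ln(T)$ factor rather than a $\ln(T\KL{\cdot}{\cdot}\vee e)$ factor, which is exactly the reason the corresponding minimax ratio degrades to $\sqrt{\ln(T)}$ in \Cref{corol:exp-kl-ms-one-mo} but still suffices for Sub-UCB.

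Next, I would choose $\varepsilon_{1,a}=\varepsilon_{2,a}=\Delta_a/4$ for each arm $a$ with $\Delta_a>0$, and send the truncation parameter $\Delta\to 0$ so the $T\Delta$ term vanishes. Under this choice, the divergences appearing in the denominators of all four case terms, namely $\KL{\mu_a+\Delta_a/4}{\mu_{\max}-\Delta_a/4}$, $\KL{\mu_a+\Delta_a/4}{\mu_a}$, and $\KL{\mu_{\max}-\Delta_a/4}{\mu_{\max}}$, are each lower bounded by a constant times $\Delta_a^2/\bar V$ via \Cref{lemma:lip-exp-KL-lower-bound} (the generalized Pinsker inequality cited in the excerpt). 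Substituting these lower bounds into the $L(k)=k$ analog of \Cref{eqn:exp-kl-ms-main-regret} makes the threshold $u_a$ scale like $\bar V\ln(T)/\Delta_a^2$, and makes each of the other three contributions at most $O(\bar V\ln(T)/\Delta_a^2)$ (for $\Bcal^2_a$) or $O(\bar V/\Delta_a^2)$ (for $\Gcal_a$ and $\Bcal^1_a$).

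Multiplying through by $\Delta_a$ and summing over $a$ with $\Delta_a>0$ then yields a bound of the form $\sum_a \Delta_a+\sum_{a:\Delta_a>0}\bar V\ln(T)/\Delta_a$, which is exactly the Sub-UCB shape required. Under \Cref{assum:lip} one can replace $\bar V$ by a constant depending on the variance function on $[0,M]$, absorbing it into the hidden constant; since Sub-UCB does not ask for the sharp $V(\mu_{\max})$ prefactor, no further refinement of the clean event at the optimal arm is needed.

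The main obstacle will be verifying the $\Bcal^2_a$ bound carefully for $L(k)=k$. The issue flagged in \Cref{sec:proof-sketch} is that replacing $L(k)=k-1$ by $L(k)=k$ destroys the $(1-e^{-(k-L(k))\KL{\alpha_k}{\cdot}})$-type factor that was previously used to tame the sum $\sum_k \frac{L(k)}{k-L(k)}e^{-k\KL{\cdot}{\cdot}}$, and a naive bound diverges. I would sidestep this by using the alternative clean-event/threshold argument (the one leading to \Cref{eqn:bad-case-2-mo}) with $\alpha_k$ chosen so that the resulting geometric sum is controlled by $\ln(T)/\KL{\mu_{\max}-\varepsilon_{2,a}}{\mu_{\max}}$; the extra $\ln(T)$ factor is exactly what prevents this proof from yielding asymptotic optimality but is harmless for the Sub-UCB rate being claimed.
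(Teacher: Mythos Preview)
Your proposal is correct and takes essentially the same approach as the paper: the paper packages the four-case decomposition for $L(k)=k$ into a separate regret theorem (\Cref{thm:expected-regret-total-version-identity}) and then obtains Sub-UCB by setting $\Delta=0$ and invoking the KL lower bound \Cref{lemma:lip-exp-KL-lower-bound}, whereas you spell out the same steps directly.

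One small correction to your narrative (harmless for the proof itself): your explanation of why $L(k)=k$ yields a weaker bound is slightly off. In the paper, the clean-event argument (\Cref{pro:bad-2-mo}) actually produces the \emph{same} $\ln(T\KL{\cdot}{\cdot}\vee e)/\KL{\cdot}{\cdot}$ shape for $L(k)=k$ as for $L(k)=k-1$; the genuine loss when $L(k)=k$ is that the $T$-free bound \Cref{pro:bad-2-ao} no longer applies (it requires $L(k)<k$ strictly, since the factor $1/(k-L(k))$ appears), which kills asymptotic optimality but is irrelevant for Sub-UCB. So you do not need to argue that the log factor degrades from $\ln(T\KL{\cdot}{\cdot}\vee e)$ to $\ln(T)$; the MO-type bound you invoke already suffices as stated.
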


\section{Proof of Main Theorem (Theorem~\ref{thm:expected-regret-total})} \label{sec:main-theorem}

Before presenting the details, we outline our proof roadmap in \Cref{fig:proof-flow}. We divide the proof into three phases, moving from left to right. 
\Cref{sec:main-theorem} consists of our main conclusion when $L(k) = k - 1$, \Cref{thm:expected-regret-total} and its direct consequence, \Cref{corol:expected-regret-total-max} (without Lipschitzness \Cref{assum:lip}) and \Cref{corol:expected-regret-total-lip} (with Lipschitzness \Cref{assum:lip}).
\Cref{sec:proof-of-extensions} contains all results from other choices of inverse temperature function $L(k)$.
We include the case where $L(k) = k/d$ in \Cref{thm:expected-regret-total-version-half} and the case where $L(k) = k$ in \Cref{thm:expected-regret-total-version-identity}.
\Cref{sec:proof-of-propositions} includes all propositions which are used to prove \Cref{thm:expected-regret-total} (when $L(k) = k - 1$), \Cref{thm:expected-regret-total-version-half} (when $L(k) = k/d$), and \Cref{thm:expected-regret-total-version-identity} (when $L(k) = k$). All proofs of the proposition are also provided in \Cref{sec:proof-of-propositions}.
\Cref{sec:supporting-lemma} includes all auxiliary lemmas used to prove propositions in our analysis, as well as a KL-lower bound lemma (\Cref{lemma:lip-exp-KL-lower-bound}).

\begin{figure}[H]
    \centering
    \includegraphics[width=0.8\linewidth]{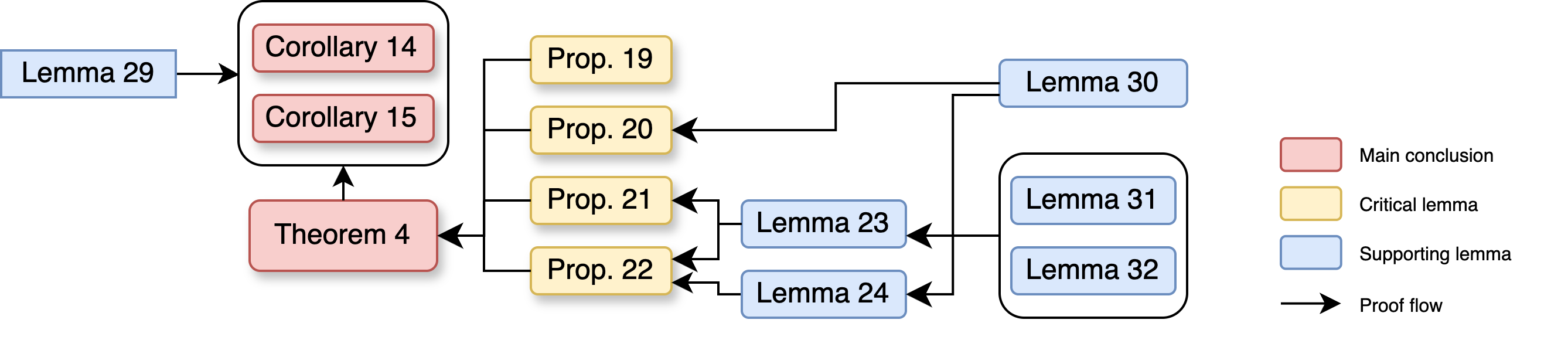}
    \caption{Roadmap of proof to the \Cref{thm:expected-regret-total}}
    \label{fig:proof-flow}
\end{figure}

\subsection{\texorpdfstring{Proof of \Cref{thm:expected-regret-total}}{Proof of Theorem~\ref{thm:expected-regret-total}}}

In this section,  we focus on the left half of the proof and show the proof of \Cref{thm:expected-regret-total}.
Remind us that we have decomposed the regret into four terms:
\[
    \Regret \leq \sum_{a\in[K]:\Delta_a > 0} \Delta_a \del{\Acal_a + \Gcal_a + \Bcal_a^1 + \Bcal_a^2}
\]
The proof of \Cref{thm:expected-regret-total} follows straightforwardly from applying \Cref{pro:good,pro:bad-1,pro:bad-2-ao,pro:bad-2-mo}.
$\Acal_a$ is bounded by the threshold through a trivial analysis. $\Gcal_a$ is bounded by using \Cref{pro:good}. $\Bcal^1_a$ is bounded by \Cref{pro:bad-1} and $\Bcal^2_a$ is bounded by the minimum among results from \Cref{pro:bad-2-ao,pro:bad-2-mo}.

Here, to remind us, we restate our main conclusion (\Cref{thm:expected-regret-total}):
\mainregret*

\begin{proof}[Proof of \Cref{thm:expected-regret-total}]

Recall the proof sketch we mentioned in \Cref{sec:proof-sketch}, for each arm $a$ such that $\Delta_a > \Delta$, we divide the event of pulling suboptimal arm into four subevents at each time step. Recall the definition of terms $\Acal_{t,a}, \Gcal_{t,a}, \Bcal^1_{t,a}$ and $\Bcal^2_{t,a}$:
\begin{align*}
\Acal_a =& \sum_{t=1}^T \Acal_{t, a} = \sum_{t=1}^T I(A_{t,a} \cap U_{t,a}) \\
    \Gcal_a =& \sum_{t=1}^T \Gcal_{t, a} = \sum_{t=1}^T I(A_{t,a} \cap U_{t,a}^c \cap E_{t,a} \cap F_{t,a}) \\
    \Bcal_a^1 =& \sum_{t=1}^T \Bcal_{t, a}^1 = \sum_{t=1}^T I(A_{t,a} \cap E_{t,a}^c) \\
    \Bcal_a^2 =& \sum_{t=1}^T \Bcal_{t, a}^2 = \sum_{t=1}^T I(A_{t,a} \cap F_{t,a}^c)
\end{align*}

Based on above split cases, we can decompose the regret as follows:

\begin{align*}
    \Regret(T) 
    =& \sum_{a\in[K]:\Delta_a \leq \Delta} \Delta_a \EE[N_{T,a}] + \sum_{a\in[K]:\Delta_a > \Delta} \Delta_a \EE[N_{T,a}]
    \\
    \leq& T\Delta + \sum_{a\in[K]:\Delta_a > \Delta} \Delta_a \EE[N_{T,a}]
    \\
    \leq&
        T \Delta + \sum_{a\in[K]:\Delta_a > \Delta}  \Delta_a \onec{\Acal_{a} + \Gcal_{a} + \Bcal^1_{a} + \Bcal^2_{a}} 
    \\
    \leq&
        T \Delta + \sum_{a\in[K]:\Delta_a > \Delta}  \Delta_a \onec{u_a + \Gcal_{a} + \Bcal^1_{a} + \Bcal^2_{a}}
\end{align*}
    In the first inequality, we bound the regret incurred by steps where an arm with $\Delta_a$ smaller than $\Delta$ is pulled by $\Delta$, noting that there are at most $T$ steps in total. 
    In the second inequality, we bound the first summation term by $T\Delta$.
    In the third inequality, we decompose the regret from pulling arm $a$ into $\mathcal{A}_{a}, \mathcal{G}_{a}, \mathcal{B}^1_{a}$, and $\mathcal{B}^2_{a}$. 
    In the last inequality, we bound $\mathcal{A}_{a}$ by $u_a$, since the event $U_{t, a}$ restricts the number of times arm $a$ can be pulled to at most $u_a$ times.
    
    Therefore, for each arm $a$, we need to apply \Cref{pro:good,pro:bad-1,pro:bad-2-ao,pro:bad-2-mo} to bound $\Gcal_{t,a}, \Bcal^1_{t,a}$ and $\Bcal^2_{t,a}$, respectively. 
    Recall the inverse temperature function $L(k) = k - 1$ and the choice of $u_a = \expDefU$, we have
    \begin{itemize}
        \item 
        \[
            \Gcal_a 
            \leq \GoodEventBoundWithArmIndex \leq \GoodEventBoundWithL 
                \tag{\Cref{pro:good}}
        \]
        \item
        \[
            \Bcal_a^1 
            \leq \BadEventOneBoundWithArmIndex
                \tag{\Cref{pro:bad-1}}
        \]
        \item
        \begin{align*}
            \Bcal_a^2
            \leq& \BadEventTwoBoundForAOWithArmIndex
                \tag{\Cref{pro:bad-2-ao}}
            \\
            =& \frac{1}{\KL{\mu_{\max}-\varepsilon_{2, a}}{\mu_{\max}}} 
            + \sum_{k=1}^T (k-1) \expto{-k \KL{\mu_{\max}-\varepsilon_{2, a}}{\mu_{\max}}}
                \tag{$L(k) = k-1$}
            \\
            \leq& \frac{1}{\KL{\mu_{\max}-\varepsilon_{2, a}}{\mu_{\max}}} 
            + \frac{1}{ \del{\expto{\KL{\mu_{\max}-\varepsilon_{2, a}}{\mu_{\max}}} - 1}^2 }
                \tag{Sum the second term and let $T \to \infty$}
            \\
            \leq& \SimplfiedBadEventTwoBoundForAO
        \end{align*}
        The second inequality holds because the sum of the first $T$ terms of the series $\sum_{k=1}^T (k-1) \expto{-k \KL{\mu_{\max}-\varepsilon_{2, a}}{\mu_{\max}}}$ is less than $\sum_{k=1}^\infty (k-1) \expto{-k \KL{\mu_{\max}-\varepsilon_{2, a}}{\mu_{\max}}}$, which converges to $\frac{1}{\del{\expto{\KL{\mu_{\max}-\varepsilon_{2, a}}{\mu_{\max}}} - 1}^2}$ since $\KL{\mu_{\max}-\varepsilon_{2, a}}{\mu_{\max}}$ is nonnegative.
        \item
        \begin{align*}
            \Bcal_a^2
            \leq& \BadEventTwoBoundForMOWithArmIndex 
                \tag{\Cref{pro:bad-2-mo}}
            \\
            \leq& \frac{6}{\KL{\mu_{\max}-\varepsilon_{2, a}}{\mu_{\max}}} + 2 \sum_{k=1}^T \expto{-k \KL{\mu_{\max}-\varepsilon_{2, a}}{\mu_{\max}}} \cdot \ln(T/k) 
                \tag{$L(k) = k-1$}
            \\
            \leq& \frac{6}{\KL{\mu_{\max}-\varepsilon_{2, a}}{\mu_{\max}}} + \frac{10\ln(T\KL{\mu_{\max}-\varepsilon_{2, a}}{\mu_{\max}} \vee e)}{\KL{\mu_{\max}-\varepsilon_{2, a}}{\mu_{\max}}} 
                \tag{\Cref{lemma:geo-log-sum}} 
            \\
            \leq& \SimplfiedBadEventTwoBoundForMO
        \end{align*}
    \end{itemize}

    For each arm, we let $\varepsilon_{1, a} = \varepsilon_{2, a} = c \Delta_a$ and combine these inequalities, we will obtain the upper bound shown in the \Cref{eqn:exp-kl-ms-main-regret}.
\end{proof}

\subsection{\texorpdfstring{Proofs of Immediate Corollaries of \Cref{thm:expected-regret-total}}{Proofs of Immediate Corollaries of Theorem 4}}

Starting from \Cref{thm:expected-regret-total}, we utilize a lower bound lemma for the KL divergence (\Cref{lemma:lip-exp-KL-lower-bound}) to derive two immediate results (\Cref{corol:expected-regret-total-max,corol:expected-regret-total-lip}) under different assumptions. \Cref{corol:expected-regret-total-max} relies on the maximum variance assumption (\Cref{assum:max-variance}), while \Cref{corol:expected-regret-total-lip} relies on the Lipschitz continuity of the variance function (\Cref{assum:lip}). From \Cref{corol:expected-regret-total-max}, by choosing appropriate values for $\Delta$ and $c$, we derive results such as the logarithmic minimax ratio (\Cref{corol:exp-kl-ms-mo}), asymptotic optimality (\Cref{corol:exp-kl-ms-ao}), and Sub-UCB criterion (\Cref{corol:exp-kl-ms-sub-ucb}). From the immediate result \Cref{corol:expected-regret-total-lip}, we derive the adaptive variance ratio (\Cref{corol:exp-kl-ms-adaptive-variance}).

\Cref{corol:expected-regret-total-max,corol:expected-regret-total-lip} are auxiliary corollaries introduced to simplify the analysis and provide additional immediate results. 
For simplicity, we prove only the stronger version of the auxiliary corollary among those two, \Cref{corol:expected-regret-total-lip}, under \Cref{assum:lip}.
\Cref{corol:expected-regret-total-max} can be proven using the same procedure as \Cref{corol:expected-regret-total-lip}, but by substituting a different result from \Cref{lemma:lip-exp-KL-lower-bound}.

\begin{corollary}[Regret upper bound corollary] \label{corol:expected-regret-total-max}
    For any $K$-arm bandit problem with \Cref{assum:oped,assum:max-variance,assum:reward-dist}, \expklms (\Cref{alg:general-exp-kl-ms}) has regret bounded as follows. 
    For any $\Delta > 0$ and $c \in (0, \frac{1}{4}]$: 
    \begin{align}
            &\Regret(T)
            \leq
            T\Delta
            +\sum_{a\in[K]:\Delta_a > \Delta} \Delta_a \del{\frac{\ln\del{T\KL{\mu_a+c\Delta_a}{\mu_{\max}-c\Delta_a} \vee e}}{\KL{\mu_a+c\Delta_a}{\mu_{\max}-c\Delta_a}}}
                \nonumber
            \\
            &+
            \sum_{a\in[K]:\Delta_a > \Delta} \del{\frac{2}{(1-c)^2} + \frac{2}{c^2}} \frac{\bar{V}}{\Delta_a}
            +
            \del{ \del{\frac{\bar{V}}{c^2 \Delta_a} + \frac{\bar{V}^2}{c^4 \Delta_a^3}} 
            \wedge \del{ \frac{32 \bar{V}}{c^2 \Delta_a} \ln\del{\frac{T\Delta_a^2}{\bar{V}} \vee e}} }
        \label{eqn:exp-kl-ms-main-regret-corol-max}
    \end{align}
\end{corollary}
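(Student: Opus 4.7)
The plan is to invoke Theorem~\ref{thm:expected-regret-total} as a black box and then post-process its four groups of terms using a generalized Pinsker-type inequality (Lemma~\ref{lemma:lip-exp-KL-lower-bound}), which under Assumption~\ref{assum:max-variance} lower bounds any KL divergence $\KL{\mu_1}{\mu_2}$ by a universal constant multiple of $(\mu_1-\mu_2)^2/\bar V$. This lower bound is the standard consequence of the identity $\KL{\mu_1}{\mu_2}=\int_{\mu_1}^{\mu_2}(u-\mu_1)/V(u)\,du$ combined with $V(u)\leq \bar V$. The first term $T\Delta$ and the leading (logarithmic) second term of Theorem~\ref{thm:expected-regret-total} are kept verbatim, since Corollary~\ref{corol:expected-regret-total-max} retains the KL-based form in the leading term; the work lies entirely in bounding the third term and the two pieces of the minimum appearing in the fourth term.

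First I would compute the three distinct mean gaps arising in the bound: the gap for $\KL{\mu_a+c\Delta_a}{\mu_{\max}-c\Delta_a}$ is $(1-2c)\Delta_a$, and the gaps for $\KL{\mu_a+c\Delta_a}{\mu_a}$ and $\KL{\mu_{\max}-c\Delta_a}{\mu_{\max}}$ are both $c\Delta_a$. Applying Lemma~\ref{lemma:lip-exp-KL-lower-bound} to each and taking reciprocals gives upper bounds of order $\bar V/((1-2c)^2\Delta_a^2)$ and $\bar V/(c^2\Delta_a^2)$. Multiplying through by the outer factor $\Delta_a$ yields the $\bar V/\Delta_a$ scaling of the third term in the corollary; the claimed constants $2/(1-c)^2+2/c^2$ then follow (absorbing the factor coming from $c\leq 1/4$, which implies $(1-2c)\geq(1-c)$ up to a constant).

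For the fourth term, the first alternative in the minimum contains a squared reciprocal KL, so applying the lower bound and squaring gives the $\bar V^2/(c^4\Delta_a^4)$ factor, which after multiplication by $\Delta_a$ produces the $\bar V^2/(c^4\Delta_a^3)$ summand. The second alternative contains $\ln(T\cdot \KL{\mu_{\max}-c\Delta_a}{\mu_{\max}}\vee e)$ in the numerator and the KL in the denominator; here the denominator is handled exactly as above, while for the log argument I would use the companion upper bound on KL (of order $\Delta_a^2/\bar V$ under the same maximum-variance assumption) together with monotonicity of $\ln(\cdot\vee e)$ to replace the KL inside the log by $\Delta_a^2/\bar V$, recovering the expression $\ln(T\Delta_a^2/\bar V \vee e)$ in the stated bound.

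The main obstacle is not algebraic but rather ensuring that Lemma~\ref{lemma:lip-exp-KL-lower-bound} supplies both directions of the comparison between KL and squared mean gap under Assumption~\ref{assum:max-variance}: the lower bound $\KL\geq c_1(\mu_1-\mu_2)^2/\bar V$ is needed for the denominators, and the upper bound $\KL\leq c_2(\mu_1-\mu_2)^2/\bar V$ is needed inside the logarithm. Once these are in place, the remaining steps consist of routine substitution, constant tracking, and using $c\in(0,\tfrac14]$ to absorb factors such as $(1-2c)^{-2}$ into $(1-c)^{-2}$. As the excerpt notes, the Lipschitz-variance variant (Corollary~\ref{corol:expected-regret-total-lip}) follows by the same procedure, simply substituting a sharper instance of Lemma~\ref{lemma:lip-exp-KL-lower-bound} that replaces $\bar V$ with $V(\mu_{\max})$ plus a lower-order Lipschitz correction.
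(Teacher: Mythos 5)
Your overall route is the same as the paper's: take \Cref{thm:expected-regret-total} as given and push each non-leading term through the KL lower bound of \Cref{lemma:lip-exp-KL-lower-bound} (the maximum-variance case, $\KL{\mu}{\mu'} \ge \Delta^2/(2\bar{V})$), with the mean gaps $(1-2c)\Delta_a$ and $c\Delta_a$ computed exactly as you do. The constant bookkeeping, including absorbing $(1-2c)^{-2}$ into $(1-c)^{-2}$ for $c \le 1/4$, matches the paper's treatment of the Lipschitz variant (\Cref{corol:expected-regret-total-lip}), which the paper states transfers verbatim to this corollary.

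The one step that would fail as written is your handling of the logarithm in the second alternative of the minimum. You assert that \Cref{assum:max-variance} supplies a ``companion upper bound'' $\KL{\mu_{\max}-c\Delta_a}{\mu_{\max}} \le c_2 \Delta_a^2/\bar{V}$ and identify it as essential. No such bound holds: by \Cref{lemma:exp-KL-eq}, $\KL{\mu}{\mu'} = \int_\mu^{\mu'}(x-\mu)/V(x)\,dx$, so a maximum-variance assumption only yields a \emph{lower} bound on the KL divergence; an upper bound of that form would require a lower bound on the variance, which is not assumed (for Bernoulli rewards with $\mu_{\max}$ near $1$ one has $\bar{V}=1/4$ yet $\KL{\mu_{\max}-\varepsilon}{\mu_{\max}}$ can exceed $\varepsilon^2/(2\bar{V})$ by an arbitrarily large factor). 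The correct and simpler argument, which the paper uses and which you mention only in passing, is that $x \mapsto \ln(Tx \vee e)/x$ is nonincreasing, so substituting the lower bound $x = c^2\Delta_a^2/(2\bar{V})$ for the KL term upper-bounds the entire ratio at once, numerator and denominator together, giving $\frac{2\bar{V}}{c^2\Delta_a^2}\ln\del{\frac{Tc^2\Delta_a^2}{2\bar{V}} \vee e} \le \frac{2\bar{V}}{c^2\Delta_a^2}\ln\del{\frac{T\Delta_a^2}{\bar{V}} \vee e}$. With that substitution in place of the nonexistent KL upper bound, your proof is complete and coincides with the paper's.
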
 
\begin{corollary}[Regret upper bound corollary] \label{corol:expected-regret-total-lip}
    For any $K$-arm bandit problem with \Cref{assum:oped,assum:reward-dist,assum:lip}, \expklms (\Cref{alg:general-exp-kl-ms}) has regret bounded as follows. 
    For any $\Delta > 0$ and $c \in (0, \frac{1}{4}]$: 
    \begin{align}
            &\Regret(T)
            \leq
            T\Delta
            +\sum_{a\in[K]:\Delta_a > \Delta} \Delta_a \del{\frac{\ln\del{T\KL{\mu_a+c\Delta_a}{\mu_{\max}-c\Delta_a} \vee e}}{\KL{\mu_a+c\Delta_a}{\mu_{\max}-c\Delta_a}}}
                \nonumber
            \\
            &+
            \sum_{a\in[K]:\Delta_a > \Delta}  \del{\frac{2}{(1-c)^2} + \frac{2}{c^2}} \del{\frac{V(\mu_{\max})}{\Delta_a} + C_L}
                \nonumber
            \\
            &+
            \sum_{a\in[K]:\Delta_a > \Delta} \del{ \frac{4}{c^4} \del{ \frac{V(\mu_{\max})^2}{\Delta_a^3} + \frac{C_L^2}{\Delta_a}} }
            \wedge \del{ \frac{32}{c^2} \del{\frac{V(\mu_{\max})}{\Delta_a} + C_L} \ln\del{\frac{T\Delta_a^2}{V(\mu_{\max})} \vee e} + \upbound{\Delta_a}}
        \label{eqn:exp-kl-ms-main-regret-corol-lip}
    \end{align}
\end{corollary}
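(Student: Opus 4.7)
The plan is to derive Corollary~\ref{corol:expected-regret-total-lip} as a direct algebraic consequence of Theorem~\ref{thm:expected-regret-total}. The only nontrivial ingredient is converting each $\KL{\cdot}{\cdot}$ expression appearing in~\eqref{eqn:exp-kl-ms-main-regret} into something expressed in terms of $\Delta_a$, $V(\mu_{\max})$, and $C_L$. This is where Assumption~\ref{assum:lip} and the generalized Pinsker inequality \Cref{lemma:lip-exp-KL-lower-bound} enter; the remainder is substitution, reciprocation, and constant bookkeeping.

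First I would locate the three distinct KL terms driving the bound in~\eqref{eqn:exp-kl-ms-main-regret}: (i) the ``signal'' divergence $\KL{\mu_a+c\Delta_a}{\mu_{\max}-c\Delta_a}$ appearing in both the main logarithmic term and in the $\Gcal_a$-style correction; (ii) the ``overestimate'' divergence $\KL{\mu_a+c\Delta_a}{\mu_a}$ coming from $\Bcal^1_a$; and (iii) the ``underestimate'' divergence $\KL{\mu_{\max}-c\Delta_a}{\mu_{\max}}$ coming from $\Bcal^2_a$. The mean-gaps in (i)--(iii) are $(1-2c)\Delta_a$, $c\Delta_a$, and $c\Delta_a$ respectively, and every mean involved lies within $\Delta_a$ of $\mu_{\max}$. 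The Lipschitz hypothesis therefore controls $V$ at each relevant point by $V(\mu_{\max})+O(C_L\Delta_a)$.

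Next I apply \Cref{lemma:lip-exp-KL-lower-bound} to each of (i)--(iii). The lemma produces lower bounds of the form $\KL{\mu}{\mu'}\gtrsim \tfrac{(\mu-\mu')^2}{V(\mu_{\max})+C_L\cdot\text{(local gap)}}$, so the three reciprocal-KL terms satisfy
\[
\frac{1}{\KL{\mu_a+c\Delta_a}{\mu_{\max}-c\Delta_a}},\ \frac{1}{\KL{\mu_a+c\Delta_a}{\mu_a}},\ \frac{1}{\KL{\mu_{\max}-c\Delta_a}{\mu_{\max}}}\ \lesssim\ \frac{V(\mu_{\max})+C_L\Delta_a}{c^2\Delta_a^2},
\]
up to constants depending on $c$. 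Multiplying by $\Delta_a$ converts each such factor into $\frac{1}{c^2}\bigl(\tfrac{V(\mu_{\max})}{\Delta_a}+C_L\bigr)$, matching the shape of the linear correction terms in~\eqref{eqn:exp-kl-ms-main-regret-corol-lip}. For the squared-KL term in the ``asymptotic'' branch of $\Bcal^2_a$, the same substitution gives $\tfrac{\Delta_a}{\KL{\mu_{\max}-c\Delta_a}{\mu_{\max}}^2}\lesssim \tfrac{1}{c^4}\bigl(\tfrac{V(\mu_{\max})^2}{\Delta_a^3}+\tfrac{C_L^2}{\Delta_a}\bigr)$ after expanding $(V(\mu_{\max})+C_L\Delta_a)^2\le 2V(\mu_{\max})^2+2C_L^2\Delta_a^2$. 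For the ``minimax'' branch of $\Bcal^2_a$, substituting the KL lower bound inside the logarithm only changes the argument of $\ln$ by a constant factor (and into the form $\ln(T\Delta_a^2/V(\mu_{\max})\vee e)$), so the bound becomes $\tfrac{32}{c^2}\bigl(\tfrac{V(\mu_{\max})}{\Delta_a}+C_L\bigr)\ln(T\Delta_a^2/V(\mu_{\max})\vee e)$, possibly absorbing a lower-order $O(\Delta_a)$ slack from the $\vee e$ clipping. Taking the pointwise minimum of these two branches arm-by-arm (as already done in Theorem~\ref{thm:expected-regret-total}) yields exactly the final ``min'' expression in~\eqref{eqn:exp-kl-ms-main-regret-corol-lip}.

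The only subtle point I anticipate is the asymmetry of KL: when lower-bounding $\KL{\mu_a+c\Delta_a}{\mu_a}$ the ``second-argument'' mean is $\mu_a$, which is $\Delta_a$ away from $\mu_{\max}$ rather than close to it, so the Lipschitz estimate $V(\mu_a)\le V(\mu_{\max})+C_L\Delta_a$ (and similarly $V(\mu_a+c\Delta_a)\le V(\mu_{\max})+2C_L\Delta_a$) must be invoked explicitly, and \Cref{lemma:lip-exp-KL-lower-bound} must be applied in the correct direction. Once that is handled, the proof is a purely arithmetic substitution into~\eqref{eqn:exp-kl-ms-main-regret}: keep the $T\Delta$ term, keep the leading logarithmic term (since its denominator is already a KL that I do \emph{not} replace, preserving the sharp constant for asymptotic optimality), and apply the above reciprocal-KL estimates to the remaining $\Gcal_a$, $\Bcal^1_a$, $\Bcal^2_a$ contributions, collecting constants into the $\tfrac{2}{(1-c)^2}+\tfrac{2}{c^2}$, $\tfrac{4}{c^4}$, and $\tfrac{32}{c^2}$ prefactors displayed in the statement.
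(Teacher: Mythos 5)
Your proposal is correct and follows essentially the same route as the paper: start from Theorem~\ref{thm:expected-regret-total} with $\varepsilon_{1,a}=\varepsilon_{2,a}=c\Delta_a$, keep the $T\Delta$ term and the leading logarithmic term untouched, and apply the Lipschitz-variance KL lower bound of \Cref{lemma:lip-exp-KL-lower-bound} to each remaining reciprocal-KL term (including the squared one and the $\ln(x\vee e)/x$ monotonicity step for the minimax branch) before collecting constants. Your flagged subtlety about which local variance to control via $\abs{V(\mu)-V(\mu')}\leq C_L\abs{\mu-\mu'}$ is exactly the step the paper carries out when it bounds, e.g., $V(\mu_a+\varepsilon_{1,a})+C_L\varepsilon_{1,a}$ by $V(\mu_{\max})+C_L\Delta_a$.
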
    

\begin{proof}[Proof of \Cref{corol:expected-regret-total-lip}]
    Based on the main equation (\Cref{eqn:exp-kl-ms-main-regret}) in \Cref{thm:expected-regret-total}, it suffices to show that the summation term except $u_a$ on the right side of the main equation is bounded by $\CommonFactorInRegretBound$ except $\frac{1}{(\KL{\mu_{\max}-\varepsilon_{2, a}}{\mu_{\max}})^2}$ which is dominated by $\del{\CommonFactorInRegretBoundSquared}$ ignoring constant factor.
    With \Cref{assum:lip}, KL divergences between two distributions in $\Fcal_m(M)$ can be lower bounded using \Cref{lemma:lip-exp-KL-lower-bound} and we will apply such lower bound to upper bound the RHS of the main equation.
    Let $\varepsilon_{1, a} = \varepsilon_{2, a} = c\Delta_a, c\in(0, \fr14]$, each term in the RHS of the main equation can be upper bounded as follows:
    
    \begin{itemize}
    
    \item
    \begin{align}
        &\GoodEventBoundWithArmIndex
        \leq
        2 \cdot\del{
            \frac{V(\mu_{\max}-\varepsilon_{2, a})+C_L(\Delta_a-\varepsilon_{1, a}-\varepsilon_{2, a})}
            {(\Delta_a - \varepsilon_{1, a} - \varepsilon_{2, a})^2}
            }
                \tag{\Cref{lemma:lip-exp-KL-lower-bound}}
        \\
        \leq&
        2 \cdot\del{
            \frac{V(\mu_{\max})+C_L(\Delta_a-\varepsilon_{1, a})}
            {(\Delta_a - \varepsilon_{1, a} - \varepsilon_{2, a})^2}
        }
                \tag{Lipchitz property of $V(\cdot)$}
        \\
        \leq&
        \frac{2}{(1-2c)^2}\CommonFactorInRegretBound
                \tag{$\varepsilon_{1, a} = \varepsilon_{2, a} = c\Delta_a$}
    \end{align} 
    \item 
    \begin{align*}
        &\BadEventOneBoundWithArmIndex
        \leq
        2 \cdot\del{\fr {V(\mu_a+\varepsilon_{1, a})+C_L\varepsilon_{1, a}}{\varepsilon_{1, a}^2}}
                \tag{\Cref{lemma:lip-exp-KL-lower-bound}}
        \\
        \leq&
        2 \cdot\del{\fr {V(\mu_{\max})+C_L\Delta_a}{c^2 \Delta_a^2}}
                \tag{$\varepsilon_{1, a} =  c\Delta_a$}
        \\
        =&
        \frac{2}{c^2} \CommonFactorInRegretBound
    \end{align*}
    \item 
    \begin{align*}
        \frac{1}{\del{\KL{\mu_{\max}-\varepsilon_{2, a}}{\mu_{\max}}}^2}
        \leq&
        4 \cdot\del{ \fr {V(\mu_{\max}) + C_L \varepsilon_{2, a}}{\varepsilon_{2, a}^2} }^2
                \tag{\Cref{lemma:lip-exp-KL-lower-bound}}
        \\
        \leq&
        \frac{4}{c^4} \del{\frac{V(\mu_{\max})^2}{\Delta_a^4} + \frac{C_L^2}{\Delta_a^2}}
                \tag{$\varepsilon_{2, a} =  c\Delta_a$}
    \end{align*}
    \item
    \begin{align*}
        \frac{\ln(T\KL{\mu_{\max}-\varepsilon_{2, a}}{\mu_{\max}} \vee e)}{\KL{\mu_{\max}-\varepsilon_{2, a}}{\mu_{\max}}}
        \leq&
        \frac{2(V(\mu_{\max}) + C_L\varepsilon_{2, a})}{\varepsilon_{2, a}^2} \ln\del{\frac{T\varepsilon_{2, a}^2}{2(V(\mu_{\max})+C_L\varepsilon_{2, a})} \vee e}
                \tag{Monotonicity of $\ln(x \vee e)/x$ and \Cref{lemma:lip-exp-KL-lower-bound}}
        \\
        \leq&
        \frac{2}{c^2} \CommonFactorInRegretBound \LogFactorInRegretBound
                \tag{$\varepsilon_{2, a} =  c\Delta_a$}
    \end{align*}
    \end{itemize}
    Using the above inequalities, substituting them into the main euqation (\Cref{eqn:exp-kl-ms-main-regret}) in \Cref{thm:expected-regret-total} and upper bound on the regret,

    \begin{align*}
        &\Regret(T)
        \\
        \leq&
        T\Delta + \sum_{a\in[K]:\Delta_a > \Delta} \Delta_a  \del{\frac{\ln\del{T\KL{\mu_a+c\Delta_a}{\mu_{\max}-c\Delta_a} \vee e}}{\KL{\mu_a+c\Delta_a}{\mu_{\max}-c\Delta_a}}} + \Delta_a
        \\
        &+
        \sum_{a\in[K]:\Delta_a > \Delta} \del{\frac{2}{(1-2c)^2} + \frac{2}{c^2}} \del{\frac{V(\mu_{\max})}{\Delta_a} + C_L}
        \\
        &+
        \sum_{a\in[K]:\Delta_a > \Delta}  \del{ \frac{4}{c^4} \del{\frac{V(\mu_{\max})^2}{\Delta_a^3} + \frac{C_L^2}{\Delta_a}} }  \wedge \del{ \frac{32}{c^2} \del{\frac{V(\mu_{\max})}{\Delta_a} + C_L} \ln\del{\frac{T\Delta_a^2}{V(\mu_{\max})} \vee e}}
    \end{align*}
\end{proof}

\subsection{Showing That \expklms Satisfies Multiple Optimality and Adaptivity Criterion}

Notice that $\sqrt{\ln(K)}$ minimax ratio (\Cref{corol:exp-kl-ms-mo}), Asymptotic Optimality (\Cref{corol:exp-kl-ms-ao}) and Sub-UCB criterion (\Cref{corol:exp-kl-ms-sub-ucb}) rely on the maximum variance assumption (\Cref{assum:max-variance}) instead of Lipschitzness variance assumption (\Cref{assum:lip}). Therefore, we use \Cref{corol:expected-regret-total-max} to prove the above three corollaries and use \Cref{corol:expected-regret-total-lip} to prove \Cref{corol:exp-kl-ms-adaptive-variance}.
\subsubsection{Proof of the Logarithmic Minimax Ratio}

\begin{proof}[Proof of \Cref{corol:exp-kl-ms-mo}]
    We start from \Cref{corol:expected-regret-total-max}. First, we can upper bound the first summation term in the conclusion (\Cref{eqn:exp-kl-ms-main-regret-corol-max}) of \Cref{corol:expected-regret-total-max}. Based on the monotonicity of the function $f(x) = \tfrac{\ln(ax \vee e)}{x}$ for $a > 0$, and using the result from KL lower bound Lemma (\Cref{lemma:lip-exp-KL-lower-bound}), which states that
    \[
    \KL{\mu_a + c\Delta_a}{\mu_{\max} - c\Delta_a} \geq \frac{(1-2c)^2 \Delta_a^2}{2\bar{V}},
    \]
    we can show that such summation term is bounded by 
    
    \begin{align}
        &
        \frac{\Delta_a\ln\del{T\KL{\mu_a+c\Delta_a}{\mu_{\max}-c\Delta_a} \vee e}}{\KL{\mu_a+c\Delta_a}{\mu_{\max}-c\Delta_a}}
            \nonumber
        \\
        \leq&
        \frac{\Delta_a\ln\del{ \frac{T(1-2c)^2\Delta_a^2}{2\bar{V}} \vee e}}{\frac{(1-2c)^2\Delta_a^2}{2\bar{V}}}
        \leq
            \frac{2\bar{V}}{(1-2c)^2 \Delta_a}\ln\del{\frac{T\Delta_a^2}{\bar{V}} \vee e}
            \nonumber
        \\
        =&
        \upbound{\frac{\bar{V}}{\Delta_a}\ln\del{\frac{T\Delta_a^2}{\bar{V}} \vee e}}
            \label{eqn:bound-of-leading-term}
    \end{align}

    Then we will apply the above inequality to the main \Cref{corol:expected-regret-total-max}. By letting $c = \fr14$ and $\Delta = \sqrt{\tfrac{\bar{V}K \ln(K)}{T}}$, we can upper bound the regret by
    
    \begin{align*}
        \Regret(T)
        \leq&
        T\Delta + \sum_{a\in[K]:\Delta_a > \Delta} \upbound{\frac{\bar{V}}{\Delta_a} \ln\del{\frac{T\Delta_a^2}{\bar{V}} \vee e^2} + \Delta_a}
        \\
        \leq&
        \sqrt{\bar{V} K T \ln(K)} +
        \upbound{\sum_{a:\Delta_a>\Delta} \Delta_a}
    \end{align*}
    The last term $\iupbound{ \sum_{a:\Delta>0} \Delta_a}$ has lower order than $\sqrt{KT}$ when $T$ is sufficiently large and we can conclude that \expklms enjoys a minimax ratio of $\sqrt{\ln(K)}$.
\end{proof}
Notice that if the variance function $V(x)$ is also always a constant, such as when the rewards of all arms follow a Gaussian distribution with fixed variance $\sigma^2$, $C_L = 0$ and the lower order term will only include $\iupbound{\Delta_a}$.

\subsubsection{Proof of the Asymptotic Optimality}

\begin{proof}[Proof of \Cref{corol:exp-kl-ms-ao}]
    Recall that the KL-divergence has the following expression according to \Cref{eqn:KL-eqn},
    \[
        \KL{\mu_i}{\mu_j} = b(\theta_j) - b(\theta_i) - b'(\theta_j)(\theta_j-\theta_i),
    \]
    and according to \Cref{assum:reward-dist} we know that $b''(\cdot)$ is continuous and always positive in the parameter space $\Theta$.
    Therefore, $\KL{\mu_i}{\mu_j}$ is continuous in terms of $\mu_i$ and $\mu_j$.
    From \Cref{thm:expected-regret-total}, we only need to find two sequences $\icbr{\varepsilon_{1,a}^t}_{t=1}^T$ and $\icbr{\varepsilon_{2,a}^t}_{t=1}^T$ such that they satisfy the following
    \begin{align*}
        & \varepsilon_{1, a}^T \to 0,
        \varepsilon_{2, a}^T \to 0 \text{ as } T \to \infty, 
        \\
        & \KL{\mu_{\max}-\varepsilon_{2, a}^T}{\mu_{\max}} \to \del{\ln(T)}^{1/3},
        \quad
        \KL{\mu_a+\varepsilon_{1, a}^T}{\mu_a} \to \del{\ln(T)}^{1/2} \\
    \end{align*}
    Since $\varepsilon_{1, a}^T \to 0, \varepsilon_{2, a}^T \to 0$ as $T \to \infty$, by the continuity of $\KL{\cdot}{\cdot}$, we can also conclude that $\KL{\mu_a+\varepsilon_{1, a}^T}{\mu_{\max}-\varepsilon_{2, a}^T}$ converges to $\KL{\mu_a}{\mu_{\max}}$.
    Based on the above observations, starting from \Cref{thm:expected-regret-total} and letting $\Delta = 0$ we have the following
    \begin{align*}
        & \limsup_{T \to \infty} \frac{\Regret(T)}{\ln(T)} \\
        \leq&
        \lim_{T \to \infty}
        \sum_{a\in[K]:\Delta_a > 0} \frac{\Delta_a}{\ln(T)} \del{\frac{\ln\del{T\KL{\mu_a+\varepsilon_{1, a}^t}{\mu_{\max}-\varepsilon_{2, a}^t} \vee e}}{\KL{\mu_a+\varepsilon_{1, a}^t}{\mu_{\max}-\varepsilon_{2, a}^t}} }
        \\
        &+
        \sum_{a\in[K]:\Delta_a > 0} \frac{\Delta_a}{\ln(T)} \del{ \GoodEventBoundWithArmIndex + \BadEventOneBoundWithArmIndex }
        \\
        &+
        \sum_{a\in[K]: \Delta_a > 0} \frac{\Delta_a}{\ln(T)} \del{\SimplfiedBadEventTwoBoundForAO}
        \\
        =&
        \sum_{a\in[K]:\Delta_a > 0}\frac{\Delta_a}{\KL{\mu_a}{\mu_{\max}}} +
        \lim_{T \to \infty}
        \sum_{a\in[K]:\Delta_a > 0}
        \frac{\Delta_a}{\ln(T)} \del{ \frac{\ln(\KL{\mu_a}{\mu_{\max}} \vee e)}{\KL{\mu_a}{\mu_{\max}}}}
        \\
        &+ \sum_{a\in[K]:\Delta_a > 0}
        \frac{\Delta_a}{\ln(T)} \del{\frac{1}{\KL{\mu_a}{\mu_{\max}}} + \ln(T)^{-1/2}}
        \\
        &+ \sum_{a\in[K]: \Delta_a > 0} \frac{\Delta_a}{\ln(T)}
        \del{(\ln(T))^{-1/3} + (\ln(T))^{-2/3}}
        \\
        =&
        \sum_{a\in[K]:\Delta_a > 0}\frac{\Delta_a}{\KL{\mu_a}{\mu_{\max}}}
    \end{align*}
    Notice that in the above equations, to avoid notation clutter we use $\varepsilon_{1, a}$ and $\varepsilon_{2, a}$ to denote the $\varepsilon_{1, a}^T$ and $\varepsilon_{2, a}^T$ respectively.
\end{proof}

\subsubsection{Proof of Sub-UCB Criterion}
\begin{proof}[Proof of \Cref{corol:exp-kl-ms-sub-ucb}]
    
    We start from \Cref{corol:expected-regret-total-max} and set $\Delta = 0$. For the second term on the RHS of the conclusion (\Cref{eqn:exp-kl-ms-main-regret-corol-max}) of corollary, we know it can be upper bounded by 
    \[
    \frac{\Delta_a \ln(T\KL{\mu_a+c\Delta_a}{\mu_{\max}-c\Delta_a})}{\KL{\mu_a+c\Delta_a}{\mu_{\max}-c\Delta_a}}
    \leq
    \upbound{\frac{\bar{V}}{\Delta_a}\ln\del{\frac{T\Delta_a^2}{\bar{V}} \vee e}},
    \]
    as shown in the intermediate step (\Cref{eqn:bound-of-leading-term}) in the proof of \Cref{corol:exp-kl-ms-mo}. 
    For the other terms on the RHS of \Cref{eqn:exp-kl-ms-main-regret-corol-max}, they can all be upper bounded by
    $\iupbound{\frac{\bar{V}}{\Delta_a}\ln\idel{\frac{T\Delta_a^2}{\bar{V}} \vee e}}$ and we can conclude that
    \[
        \Regret(T) \leq \upbound{ \sum_{a\in[K]: \Delta_a > 0 }\frac{\bar{V}}{\Delta_a}\ln\del{\frac{T\Delta_a^2}{\bar{V}} \vee e} + \Delta_a}
        =
        \upbound{ \sum_{a\in[K]: \Delta_a > 0} \frac{\bar{V}\ln(T)}{\Delta_a} + \Delta_a }
    \]
\end{proof}
\subsubsection{Proof of Adaptive Variance Ratio}
\begin{proof}[Proof of \Cref{corol:exp-kl-ms-adaptive-variance}]
    Notice that adaptive variance ratio requires \Cref{assum:lip}, so we start from \Cref{corol:expected-regret-total-lip}. First, we can upper bound the first summation term in the conclusion (\Cref{eqn:exp-kl-ms-main-regret-corol-lip}) of \Cref{corol:expected-regret-total-lip}. Based on the monotonicity of the function $f(x) = \tfrac{\ln(ax \vee e)}{x}$ for $a>0$, and using the result from Lemma for a lower bound to the KL divergence (\Cref{lemma:lip-exp-KL-lower-bound}), which states that 
    \[
    \KL{\mu_a + c\Delta_a}{\mu_{\max} - c\Delta_a}
    \geq \fr12 \frac{(1-2c)^2 \Delta_a^2}{V(\mu_{\max} - c\Delta_a) + C_L(1-2c)\Delta_a}
    \geq \fr12 \frac{(1-2c)^2 \Delta_a^2}{V(\mu_{\max}) + C_L(1-c)\Delta_a},
    \]
    we can show that the second term on the RHS of \Cref{eqn:exp-kl-ms-main-regret-corol-lip} is bounded by 
 
    \begin{align}
        &
        \frac{\Delta_a\ln\del{T\KL{\mu_a+c\Delta_a}{\mu_{\max}-c\Delta_a} \vee e}}{\KL{\mu_a+c\Delta_a}{\mu_{\max}-c\Delta_a}}
            \nonumber
        \\
        \leq&
        \frac{\Delta_a\ln\del{ \fr12\del{\frac{T(1-2c)^2\Delta_a^2}{V(\mu_{\max}) + C_L(1-c)\Delta_a}} \vee e}}{\frac{(1-2c)^2\Delta_a^2}{V(\mu_{\max}) + C_L(1-c)\Delta_a}}
        \leq
            \frac{V(\mu_{\max}) + C_L(1-c)\Delta_a}{(1-2c)^2 \Delta_a}\ln\del{\frac{T\Delta_a}{V(\mu_{\max})} \vee e}
            \nonumber
        \\
        =&
        \upbound{\del{\frac{V(\mu_{\max})}{\Delta_a} + C_L}\ln\del{\frac{T\Delta_a^2}{V(\mu_{\max})} \vee e}}
            \label{eqn:bound-of-leading-term-adaptive}
    \end{align}

    Then we will apply the above inequality to the main \Cref{thm:expected-regret-total}. By letting $c = \fr14$ and $\Delta = \sqrt{\tfrac{V(\mu_{\max})K \ln(K)}{T}}$, we can upper bound the regret by
    
    \begin{align*}
        \Regret(T)
        \leq&
        T\Delta + \sum_{a\in[K]:\Delta_a > \Delta} \upbound{\del{\frac{V(\mu_{\max})}{\Delta_a} + C_L} \LogFactorInRegretBound + \Delta_a}
        \\
        \leq&
        \sqrt{V(\mu_{\max}) K T \ln(K)} +
        \upbound{\sum_{a:\Delta_a>\Delta} C_L \ln\del{\frac{T\Delta_a^2}{V(\mu_{\max})} \vee e^2} + \Delta_a}
    \end{align*}
    The last term $\iupbound{ \sum_{a:\Delta_a>\Delta} C_L \ln\idel{\frac{T\Delta_a^2}{V(\mu_{\max})} \vee e^2}}$ has lower order than $\sqrt{KT}$ when $T$ is sufficiently large and we can conclude that \expklms enjoys an adaptive minimax ratio as $\sqrt{V(\mu_{\max})}$.
\end{proof}

\section{Results of Extensions} \label{sec:proof-of-extensions}
In this section, we will present results from different choices of inverse temperature function $L(\cdot)$.
We demonstrate that \gexpklms with $L(k) = k/d$ (where $d > 1$) and $L(k) = k$ are two examples that exhibit some desirable properties, but not all of them. This shows that \gexpklms is flexible, allowing for a variety of choices for $K(\cdot)$.

\subsection{\texorpdfstring{$L(k) = k/d$}{L(k) = k/d}}

In this subsection, we present a theorem statement with its proof that serves a similar role to \Cref{thm:expected-regret-total}, providing a regret upper bound for $\gexpklms$ with $L(k) = k/d$. Additionally, $\gexpklms$ with $L(k) = k/d$ can achieve a logarithmic minimax ratio and satisfy Sub-UCB criterion. We summarize these results in \Cref{corol:exp-kl-ms-half-mo} and \Cref{corol:exp-kl-ms-half-sub-ucb}, with their proofs provided afterwards.

\begin{theorem} \label{thm:expected-regret-total-version-half}
    For any $K$-arm bandit problem with \Cref{assum:oped,assum:reward-dist,assum:lip} and $L(k) = k/d$, $d > 1$ 
    \gexpklms (\Cref{alg:general-exp-kl-ms}) has regret bounded as follows. 
    For any $\Delta \geq 0$ and $c \in (0, \frac{1}{4}]$: 
    \begin{align}
        \Regret(T)
        \leq&
        T \Delta 
        + \sum_{a\in[K]: \Delta_a > \Delta} \Delta_a \del{\frac{d\ln\del{T\KL{\mu_a+c\Delta_a}{\mu_{\max}-c\Delta_a} \vee e}}{\KL{\mu_a+c\Delta_a}{\mu_{\max}-c\Delta_a}}} + \Delta_a
            \nonumber
        \\
        &+ 
        \del{\frac{2}{(1-2c)^2} + \frac{2(2d-1)}{c^2(d-1)}} \del{\frac{V(\mu_{\max})}{\Delta_a} + C_L} 
            \label{eqn:exp-kl-ms-half-regret}
    \end{align}
\end{theorem}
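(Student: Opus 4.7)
The plan is to follow the four-case decomposition that was used for \Cref{thm:expected-regret-total}: for each arm $a$ with $\Delta_a > \Delta$, write $\EE[N_{T,a}] \leq \Acal_a + \Gcal_a + \Bcal_a^1 + \Bcal_a^2$, and bound each piece by invoking \Cref{pro:good}, \Cref{pro:bad-1}, and \Cref{pro:bad-2-ao} respectively. The only place the proof differs from the $L(k) = k-1$ analysis is wherever $L(\cdot)$ enters explicitly; every step must be re-evaluated under $L(k) = k/d$.

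First I would pick the threshold $u_a$ so that $L(u_a) \KL{\mu_a + \varepsilon_{1,a}}{\mu_{\max} - \varepsilon_{2,a}} = \ln(T \KL{\mu_a + \varepsilon_{1,a}}{\mu_{\max} - \varepsilon_{2,a}} \vee e)$; since $L(k) = k/d$, this forces $u_a = d \cdot \frac{\ln(T \KL{\mu_a + \varepsilon_{1,a}}{\mu_{\max} - \varepsilon_{2,a}} \vee e)}{\KL{\mu_a + \varepsilon_{1,a}}{\mu_{\max} - \varepsilon_{2,a}}}$. Bounding $\Acal_a \leq u_a$ then yields the leading term of \Cref{eqn:exp-kl-ms-half-regret}, which is exactly $d$ times its counterpart in \Cref{thm:expected-regret-total}; this extra factor of $d$ is what destroys asymptotic optimality when $d>1$. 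Next, \Cref{pro:good} produces $\Gcal_a \leq 1/\KL{\mu_a + \varepsilon_{1,a}}{\mu_{\max} - \varepsilon_{2,a}}$ by the choice of $u_a$, and \Cref{pro:bad-1} gives $\Bcal_a^1 \leq 1/\KL{\mu_a + \varepsilon_{1,a}}{\mu_a}$ unchanged, since neither bound depends on the exact form of $L(\cdot)$ beyond the monotonicity constraint $L(k) \leq k$.

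For $\Bcal_a^2$, \Cref{pro:bad-2-ao} gives
\[
    \Bcal_a^2 \leq \frac{1}{\KL{\mu_{\max}-\varepsilon_{2,a}}{\mu_{\max}}} + \sum_{k=1}^T \frac{L(k)}{k - L(k)} \expto{-k \KL{\mu_{\max}-\varepsilon_{2,a}}{\mu_{\max}}}.
\]
Substituting $L(k) = k/d$ makes $\frac{L(k)}{k - L(k)} = \frac{1}{d-1}$, a constant, so the sum collapses into a geometric series bounded by
\[
    \frac{1}{d-1} \cdot \frac{1}{\expto{\KL{\mu_{\max}-\varepsilon_{2,a}}{\mu_{\max}}} - 1} \leq \frac{1}{(d-1)\KL{\mu_{\max}-\varepsilon_{2,a}}{\mu_{\max}}},
\]
which combined with the first term yields $\Bcal_a^2 \leq \frac{d}{(d-1)\KL{\mu_{\max}-\varepsilon_{2,a}}{\mu_{\max}}}$. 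Importantly, there is no need to also invoke the \Cref{pro:bad-2-mo} bound or take a minimum of two terms (as was necessary when $L(k)=k-1$), because the geometric collapse already gives a clean $1/\KL{\cdot}{\cdot}$ estimate without any logarithmic overhead.

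Finally I would set $\varepsilon_{1,a} = \varepsilon_{2,a} = c\Delta_a$ and invoke \Cref{lemma:lip-exp-KL-lower-bound} to replace each KL denominator in the $\Gcal_a$, $\Bcal_a^1$, and $\Bcal_a^2$ contributions by $\tfrac{\Delta_a^2}{2(V(\mu_{\max}) + C_L \Delta_a)}$ (up to the appropriate $c$-dependent factor), which produces the constants $\tfrac{2}{(1-2c)^2}$ from $\Gcal_a$ and $\tfrac{2}{c^2} + \tfrac{2d}{c^2(d-1)} = \tfrac{2(2d-1)}{c^2(d-1)}$ from combining $\Bcal_a^1$ and $\Bcal_a^2$, matching \Cref{eqn:exp-kl-ms-half-regret}. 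The main obstacle is purely bookkeeping: tracking the $d$-dependent constants so that the geometric-series bound for $\Bcal_a^2$ merges cleanly with $\Bcal_a^1$ and verifying that the Lipschitz conversion via \Cref{lemma:lip-exp-KL-lower-bound} absorbs exactly into the stated $\frac{V(\mu_{\max})}{\Delta_a} + C_L$ form. No new concentration inequalities or structural arguments beyond those already developed for $L(k) = k-1$ are needed.
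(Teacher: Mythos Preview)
Your proposal is correct and follows essentially the same approach as the paper's proof: the same four-term decomposition, the same choice of $u_a$ (scaled by $d$), the same application of \Cref{pro:good}, \Cref{pro:bad-1}, and \Cref{pro:bad-2-ao}, and the same observation that $\frac{L(k)}{k-L(k)} = \frac{1}{d-1}$ turns the $\Bcal_a^2$ sum into a geometric series, obviating any need for \Cref{pro:bad-2-mo}. The constant bookkeeping you describe, including the final $\frac{2}{(1-2c)^2} + \frac{2(2d-1)}{c^2(d-1)}$, matches the paper exactly.
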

Notice that the RHS of \Cref{eqn:exp-kl-ms-half-regret} cannot guarantee that \gexpklms with $L(k) = k/d$ achieves asymptotic optimality due to the presence of an additional constant factor $d$ in the second term. However, since the RHS does not include $\ln(T)$ beyond the leading term (the second term), we can demonstrate that \gexpklms with $L(k) = k/d$ achieves minimax optimality, has adaptive variance ratio of $\sqrt{V(\mu_{\max})}$ and satisfies Sub-UCB criterion.

\begin{proof}[Proof of \Cref{thm:expected-regret-total-version-half}]
    We follow the proof strategy used in proving \Cref{thm:expected-regret-total} and \Cref{corol:expected-regret-total-lip} but change the definition of $u$ from $\expDefU + 1$ to $\tfrac{d\ln(T\KL{\mu_{\max}+\varepsilon_{1, a}}{\mu_a-\varepsilon_{2, a}} \vee e)}{\KL{\mu_{\max}+\varepsilon_{1, a}}{\mu_a-\varepsilon_{2, a}}} + 1$.
    The reason we make such a change is that we need $\Gcal_a$ to be controlled by $1 / \KL{\mu_{\max}+\varepsilon_{1, a}}{\mu_a-\varepsilon_{2, a}}$.
    Following the same case splitting we did in the proof of \Cref{thm:expected-regret-total}, we have
    \[
        \Regret(T) \leq T\Delta + \sum_{a\in[K]: \Delta_a > \Delta} \Delta_a \del{u + \Gcal_a + \Bcal^1_a + \Bcal^2_a}
    \]
    and we bound each term on the RHS by
    \begin{itemize}
        \item 
        \begin{align*}
            \Gcal_a 
            \leq& \GoodEventBoundWithArmIndex \leq \GoodEventBoundWithL 
                    \tag{\Cref{pro:good}}
            \\
            \leq& \frac{2}{(1-2c)^2}\CommonFactorInRegretBound
                    \tag{\Cref{lemma:lip-exp-KL-lower-bound}}
        \end{align*}
        \item
        \begin{align*}
            \Bcal_a^1 
            \leq& \BadEventOneBoundWithArmIndex
                    \tag{\Cref{pro:bad-1}}
            \\
            \leq& 2 \cdot \del{\fr {V(\mu_{\max})+C_L\Delta_a}{\varepsilon_{1, a}^2}}
            \leq
            \frac{2}{c^2} \CommonFactorInRegretBound
                    \tag{\Cref{lemma:lip-exp-KL-lower-bound}}
        \end{align*}
        \item 
        \begin{align*}
            \Bcal_a^2
            \leq& \BadEventTwoBoundForAOWithArmIndex
                \tag{\Cref{pro:bad-2-ao}}
            \\
            =&
                \frac{1}{\KL{\mu_{\max} - \varepsilon_{2, a}}{\mu_{\max}}} 
                + \frac{1}{d-1}\sum_{k=1}^T \expto{-k \KL{\mu_{\max}-\varepsilon_{2, a}}{\mu_{\max}}}
            \\
            \leq&
                \frac{1}{\KL{\mu_{\max} - \varepsilon_{2, a}}{\mu_{\max}}} 
                + \frac{1}{d-1} \cdot \frac{1}{\KL{\mu_{\max}-\varepsilon_{2, a}}{\mu_{\max}}}
                \tag{for $a>0$, $\sum_{k=1}^\infty e^{-ak} = \frac{1}{e^a-1} \leq \frac{1}{a}$}
            \\
            \leq&
                \frac{d}{(d-1)\KL{\mu_{\max} - \varepsilon_{2, a}}{\mu_{\max}}}
            \\
            \leq&
                \frac{2d}{c^2 (d-1)} \CommonFactorInRegretBound
        \end{align*}
    \end{itemize}

    Therefore, we can bound the regret by
    \begin{align*}
        &\Regret(T) \\
        \leq&
        T \Delta 
        + \sum_{a\in[K]: \Delta_a > \Delta} \Delta_a \del{\frac{d\ln\del{T\KL{\mu_a+c\Delta_a}{\mu_{\max}-c\Delta_a} \vee e}}{\KL{\mu_a+c\Delta_a}{\mu_{\max}-c\Delta_a}} + 1} 
        \\
        &+
        \frac{2\Delta_a}{(1-2c)^2}\CommonFactorInRegretBound
        +
        \frac{2\Delta_a}{c^2} \CommonFactorInRegretBound
        +
        \del{\frac{2d\Delta_a}{c^2(d-1)} \CommonFactorInRegretBound}
        \\
        \leq&
        T \Delta 
        + \sum_{a\in[K]: \Delta_a > \Delta} \Delta_a \del{\frac{d\ln\del{T\KL{\mu_a+c\Delta_a}{\mu_{\max}-c\Delta_a} \vee e}}{\KL{\mu_a+c\Delta_a}{\mu_{\max}-c\Delta_a}}} + \Delta_a
        \\
        &+ 
        \sum_{a\in[K]: \Delta_a > \Delta} \del{\frac{2}{(1-2c)^2} + \frac{2(2d-1)}{c^2(d-1)}} \del{\frac{V(\mu_{\max})}{\Delta_a} + C_L} 
    \end{align*}
\end{proof}

\subsubsection{Proof of the Logarithmic Minimax Ratio and Adaptive Variance Ratio}

Based on the above \Cref{thm:expected-regret-total-version-half}, we can show that \expklms with $L(k) = k/d$ has a logarithmic minimax ratio and adaptive variance ratio in \Cref{corol:exp-kl-ms-half-mo}.
\constantminimax*

\begin{proof}[Proof of \Cref{corol:exp-kl-ms-half-mo}]
    We follow the proof of \Cref{corol:exp-kl-ms-mo} to bound the RHS of \Cref{eqn:exp-kl-ms-half-regret} in \Cref{thm:expected-regret-total-version-half}. By choosing $\Delta = \sqrt{V(\mu_{\max})K/T}$, it suffices to show each term in the RHS of \Cref{eqn:exp-kl-ms-half-regret} is upper bounded by $\sqrt{KT}$ or $\sum_{a\in[K]} \Delta_a$.
    The first term $T \Delta = \iupbound{\sqrt{KT}}$ because of the choice of $\Delta$.
    In the second term, we can utilize \Cref{eqn:bound-of-leading-term-adaptive} from the proof of \Cref{corol:exp-kl-ms-adaptive-variance}
    \begin{align*}
        &\sum_{a\in[K]: \Delta_a > \Delta} \Delta_a \del{\frac{d\ln\del{T\KL{\mu_a+c\Delta_a}{\mu_{\max}-c\Delta_a} \vee e}}{\KL{\mu_a+c\Delta_a}{\mu_{\max}-c\Delta_a}}}
                \nonumber
        \\
        \leq&
        \upbound{ \sum_{a\in[K]: \Delta_a > \Delta} \del{\frac{V(\mu_{\max})}{\Delta_a} + C_L}\ln\del{\frac{T\Delta_a^2}{V(\mu_{\max})} \vee e}}
                \tag{\Cref{eqn:bound-of-leading-term-adaptive}}
        \\
        \leq&
        \upbound{ \sum_{a\in[K]: \Delta_a > \Delta} \del{\frac{V(\mu_{\max})}{\Delta} + C_L}\ln\del{\frac{T\Delta^2}{V(\mu_{\max})} \vee e}}
                \tag{Monotonicity property}
        \\
        \leq&
        \upbound{ \sqrt{KT \ln(K)}} + \upbound{K\ln(T)}
                \tag{$\Delta = \sqrt{V(\mu_{\max})K/T}$}
    \end{align*}
    For the fourth term, we have the following, ignoring the constant factors,
    \[
        \sum_{a\in[K]: \Delta_a > 0}  \frac{V(\mu_{\max})}{\Delta_a} + C_L
        \leq
        \sum_{a\in[K]: \Delta_a > 0} \frac{V(\mu_{\max})}{\Delta} + C_L
        =
        \sqrt{V(\mu_{\max})KT \ln(K)} + C_L K
    \]
    Overall, by combining the above bounds, it holds that
    \[
        \Regret(T) = \upbound{\sqrt{V(\mu_{\max})KT \ln(K)}} + \upbound{K \ln(T)}
    \]
\end{proof}

\subsubsection{Proof of Sub-UCB Criterion}
Based on the above \Cref{thm:expected-regret-total-version-half}, we can show that \expklms with $L(k) = k/d$ satisfies Sub-UCB criterion in \Cref{corol:exp-kl-ms-half-sub-ucb}.

\constantsubucb*

\begin{proof}[Proof of \Cref{corol:exp-kl-ms-half-sub-ucb}]
     We follow the proof of \Cref{corol:exp-kl-ms-sub-ucb} to bound the RHS of \Cref{eqn:exp-kl-ms-half-regret} in \Cref{thm:expected-regret-total-version-half} and choose $\Delta = 0$. Then for each term in the RHS of \Cref{eqn:exp-kl-ms-half-regret}, we can show that,
     \begin{itemize}
         \item For the second term,
         \begin{align*}
            & \sum_{a\in[K]: \Delta_a > \Delta} \Delta_a \del{\frac{d\ln\del{T\KL{\mu_a+c\Delta_a}{\mu_{\max}-c\Delta_a} \vee e}}{\KL{\mu_a+c\Delta_a}{\mu_{\max}-c\Delta_a}}}
            \\
            \leq&
            \upbound{ \sum_{a\in[K]: \Delta_a > \Delta} \del{\frac{V(\mu_{\max})}{\Delta_a} + C_L}\ln\del{\frac{T\Delta_a^2}{V(\mu_{\max})} \vee e}}
                \tag{\Cref{eqn:bound-of-leading-term-adaptive}}
            \\
            =&
            \upbound{ \sum_{a\in[K]: \Delta_a > \Delta} \frac{V(\mu_{\max})\ln(T)}{\Delta_a} + \Delta_a}
         \end{align*}
         \item For the fourth term,
         \begin{align*}
             \sum_{a\in[K]: \Delta_a > \Delta} \frac{V(\mu_{\max})}{\Delta_a} + C_L
             =
             \upbound{ \sum_{a\in[K]: \Delta_a > \Delta}\frac{V(\mu_{\max})\ln(T)}{\Delta_a} + \Delta_a }
         \end{align*}
     \end{itemize}
     Combining the above analysis, we can conclude that
     \[
        \Regret(T) = \upbound{ \sum_{a\in[K]: \Delta_a > \Delta}\frac{\ln(T)}{\Delta_a} + \Delta_a }
     \]
\end{proof}

\subsection{\texorpdfstring{$L(k) = k$}{L(k) = k}}

In this subsection, we present \Cref{thm:expected-regret-total-version-identity} that provides a regret upper bound to \gexpklms with $L(k) = k$ and shows that it has an adaptive variance ratio in \Cref{corol:exp-kl-ms-one-mo} and Sub-UCB criterion in \Cref{corol:exp-kl-ms-one-sub-ucb}.
\begin{theorem}  \label{thm:expected-regret-total-version-identity}
    For any $K$-arm bandit problem with \Cref{assum:oped,assum:reward-dist,assum:lip} and $L(k) = k$, 
    \gexpklms (\Cref{alg:general-exp-kl-ms}) has regret bounded as follows. 
    For any $\Delta \geq 0$ and $c \in (0, \frac{1}{4}]$: 
    \begin{align}
        \Regret(T)
        \leq&
        T \Delta + \sum_{a\in[K]: \Delta_a > \Delta} \Delta_a \del{\frac{\ln\del{T\KL{\mu_a+c\Delta_a}{\mu_{\max}-c\Delta_a} \vee e}}{\KL{\mu_a+c\Delta_a}{\mu_{\max}-c\Delta_a}}} + \Delta_a
            \nonumber
        \\
        &+ 
        \del{\frac{2}{(1-2c)^2} + \frac{14}{c^2} + \frac{20\ln(T)}{c^2}} \del{\frac{V(\mu_{\max})}{\Delta_a} + C_L} 
            \label{eqn:exp-kl-ms-identity-regret}
    \end{align}
\end{theorem}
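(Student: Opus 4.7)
The plan is to mirror the proof of Theorem~\ref{thm:expected-regret-total} for $L(k) = k - 1$, replaying the same four-term event decomposition $\EE[N_{T,a}] \leq u_a + \Gcal_a + \Bcal^1_a + \Bcal^2_a$ with $\varepsilon_{1,a} = \varepsilon_{2,a} = c\Delta_a$ and threshold $u_a = \expDefU + 1$. In the $L(k) = k$ setting one still has $L(u_a) \KL{\mu_a+c\Delta_a}{\mu_{\max}-c\Delta_a} \geq \ln(T \KL{\mu_a+c\Delta_a}{\mu_{\max}-c\Delta_a} \vee e)$, so Proposition~\ref{pro:good} and Proposition~\ref{pro:bad-1}---whose concentration-based bounds are insensitive to the precise form of $L$ beyond this inequality---apply verbatim. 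Combined with Lemma~\ref{lemma:lip-exp-KL-lower-bound}, these two terms contribute $\frac{2}{(1-2c)^2}\del{\frac{V(\mu_{\max})}{\Delta_a} + C_L}$ and $\frac{2}{c^2}\del{\frac{V(\mu_{\max})}{\Delta_a} + C_L}$ respectively, exactly as in the proof of Corollary~\ref{corol:expected-regret-total-lip}.

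The crucial departure is the treatment of $\Bcal^2_a$. Proposition~\ref{pro:bad-2-ao} is unusable here, since its summand contains the factor $\frac{L(k)}{k-L(k)}$, which is undefined when $L(k) = k$. The remaining option is Proposition~\ref{pro:bad-2-mo}, which gives
\begin{align*}
\Bcal^2_a \leq \frac{6}{\KL{\mu_{\max}-c\Delta_a}{\mu_{\max}}} + \sum_{k=1}^{T} \frac{2L(k)}{k} \exp\del{-k\KL{\mu_{\max}-c\Delta_a}{\mu_{\max}}} \ln(T/k).
\end{align*}
With $L(k) = k$ the factor $L(k)/k$ collapses to $1$, and Lemma~\ref{lemma:geo-log-sum} bounds the remaining sum by $\frac{10 \ln(T \KL{\mu_{\max}-c\Delta_a}{\mu_{\max}} \vee e)}{\KL{\mu_{\max}-c\Delta_a}{\mu_{\max}}}$, so $\Bcal^2_a \leq \SimplfiedBadEventTwoBoundForMO$ (with the numerator $16$ arising as $6+10$).

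To convert this to variance-adaptive form, I would reuse the calculation from the proof of Corollary~\ref{corol:exp-kl-ms-adaptive-variance}: Lemma~\ref{lemma:lip-exp-KL-lower-bound} gives $\KL{\mu_{\max}-c\Delta_a}{\mu_{\max}} \geq \frac{(c\Delta_a)^2}{2(V(\mu_{\max}) + C_L c\Delta_a)}$, and monotonicity of $x \mapsto \ln(ax \vee e)/x$ lets the KL lower bound be pushed through the logarithm as well. Multiplying $\Delta_a \cdot \Bcal^2_a$ through produces a contribution of the form $\frac{12 + 20\ln(T) + O(1)}{c^2}\del{\frac{V(\mu_{\max})}{\Delta_a} + C_L}$, which combined with the $\frac{2}{c^2}$ factor from $\Bcal^1_a$ yields the stated $\frac{14}{c^2} + \frac{20\ln(T)}{c^2}$ coefficient after absorbing the $O(1)$ constants. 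Summing over arms with $\Delta_a > \Delta$ and adding $T\Delta$ for arms with $\Delta_a \leq \Delta$, together with the $\Delta_a$ contribution from the ``$+1$'' in $u_a$, then yields \Cref{eqn:exp-kl-ms-identity-regret}.

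The main obstacle is conceptual rather than calculational: as the paper's proof sketch already flags, the unavailability of Proposition~\ref{pro:bad-2-ao} at $L(k) = k$ forces reliance on the MO-style bound of Proposition~\ref{pro:bad-2-mo}, which inherently carries a $\ln(T)$ factor in $\Bcal^2_a$. This accounts for the strictly weaker guarantee relative to $L(k) = k-1$ and is the only substantive deviation from the template proof of Theorem~\ref{thm:expected-regret-total}.
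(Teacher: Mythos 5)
Your proposal is correct and follows essentially the same route as the paper: the same four-event decomposition with $u_a = \expDefU + 1$, \Cref{pro:good} and \Cref{pro:bad-1} for $\Gcal_a$ and $\Bcal^1_a$, the observation that \Cref{pro:bad-2-ao} is inapplicable at $L(k)=k$ so $\Bcal^2_a$ must be handled by \Cref{pro:bad-2-mo} (incurring the $\ln(T)$), and then \Cref{lemma:lip-exp-KL-lower-bound} to reach the stated constants $\frac{2}{(1-2c)^2} + \frac{14}{c^2} + \frac{20\ln(T)}{c^2}$. The only cosmetic quibble is that you substitute $L(k)=k$ into the $0 < L(k) < k$ branch of \Cref{pro:bad-2-mo} rather than citing its dedicated $L(k)=k$ branch (\Cref{eqn:bad-2-ao-corner}); both yield the identical bound $\IdentityLBadCaseTwoMO$, so this does not affect the argument.
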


\begin{remark}
The reason we obtain a $\ln(T)$ term for the third one (as opposed to $\iupbound{1} \wedge \iupbound{\ln(T)}$) for $L(k) = k-1$ and $L(k) = k / d$ is that when $L(k) = k$, \Cref{pro:bad-2-ao} is not applicable, leaving us to only be able to bound $\Bcal^2_a$ using \Cref{pro:bad-2-mo} (\Cref{eqn:bad-2-ao-corner}).
\end{remark}

\begin{proof}[Proof of \Cref{thm:expected-regret-total-version-identity}]
We follow the same proof procedure as in \Cref{thm:expected-regret-total} and \Cref{thm:expected-regret-total-version-half}. This time, we define $u_a$ as $\expDefU + 1$ and decompose the regret as follows:
    \[
        \Regret(T) \leq T\Delta + \sum_{a\in[K]: \Delta_a > \Delta} \Delta_a u + \Gcal_a + \Bcal^1_a + \Bcal^2_a
    \]
and for each term on the RHS are bounded by
    \begin{align*}
        \Gcal_a 
        \leq& \GoodEventBoundWithArmIndex \leq \GoodEventBoundWithL 
                \tag{\Cref{pro:good}}
        \\
        \leq& \frac{2}{(1-2c)^2}\CommonFactorInRegretBound
        \\
        \Bcal_a^1 
        \leq& \BadEventOneBoundWithArmIndex
                \tag{\Cref{pro:bad-1}}
        \\
        \leq& 2 \cdot \del{\fr {V(\mu_{\max})+C_L\Delta_a}{\varepsilon_{1, a}^2}}
        \leq
        \frac{2}{c^2} \CommonFactorInRegretBound
        \\
        \Bcal_a^2
        \leq& \IdentityLBadCaseTwoMO
            \tag{\Cref{pro:bad-2-mo}}
        \\
        \leq&
            \frac{2}{c^2} \CommonFactorInRegretBound \del{6 + 10 \ln(T)}
    \end{align*}
    In the second inequality of bounding $\Bcal^2_a$, we bound the sum by $\sum_{k=1}^T \frac{2\ln(T/k)}{k} \leq 2\ln(T) \sum_{k=1}^T \frac{1}{k} \leq 2\ln(T) \del{\ln(T) + 1}$.
    Therefore, we can bound the regret by
    \begin{align*}
        \Regret(T) 
        \leq& 
        T \Delta + \sum_{a\in[K]: \Delta_a > \Delta} \Delta_a \del{ \frac{\ln\del{T\KL{\mu_a+c\Delta_a}{\mu_{\max}-c\Delta_a} \vee e}}{\KL{\mu_a+c\Delta_a}{\mu_{\max}-c\Delta_a}} + 1} 
        \\
        &+
        \sum_{a\in[K]: \Delta_a > \Delta} \frac{2\Delta_a}{(1-2c)^2}\CommonFactorInRegretBound + \frac{2\Delta_a}{c^2} \CommonFactorInRegretBound
        \\
        &+
        \sum_{a\in[K]: \Delta_a > \Delta} 
        \Delta_a \del{ \frac{2}{c^2} \CommonFactorInRegretBound \del{6 + 10 \ln(T)} }
        \\
        =&
        T \Delta + \sum_{a\in[K]: \Delta_a > \Delta} \Delta_a \del{\frac{\ln\del{T\KL{\mu_a+c\Delta_a}{\mu_{\max}-c\Delta_a} \vee e}}{\KL{\mu_a+c\Delta_a}{\mu_{\max}-c\Delta_a}}} + \Delta_a
        \\
        &+ 
        \del{\frac{2}{(1-2c)^2} + \frac{14}{c^2} + \frac{20\ln(T)}{c^2}} \del{\frac{V(\mu_{\max})}{\Delta_a} + C_L} 
    \end{align*}

\end{proof}

\subsubsection{Proof of the Logarithmic Minimax Ratio and Adaptive Variance Ratio}
Based on the above \Cref{thm:expected-regret-total-version-identity}, we can show that \expklms with $L(k) = k$ has a logarithmic minimax ratio and adaptive variance ratio in \Cref{corol:exp-kl-ms-one-mo}.

\identityadaptive*

\begin{proof}[Proof of \Cref{corol:exp-kl-ms-one-mo}]
    We follow the proof of \Cref{corol:exp-kl-ms-mo} and choose $\Delta = \sqrt{V(\mu_{\max})K \ln(T) /T}$. Then it suffices to show each term in the RHS of \Cref{eqn:exp-kl-ms-half-regret} is upper bounded by $\sqrt{V(\mu_{\max})KT\ln(T)}$ or $K\ln(T)$, ignoring the constant.
    The first term $T \Delta = \iupbound{\sqrt{KT\ln(T)}}$ because the value of $\Delta$.
    In the second term, we can utilize result of \Cref{eqn:bound-of-leading-term-adaptive} from the proof of \Cref{corol:exp-kl-ms-adaptive-variance}
    \begin{align*}
        \sum_{a\in[K]: \Delta_a > \Delta} \Delta_a \del{\frac{\ln\del{T\KL{\mu_a+c\Delta_a}{\mu_{\max}-c\Delta_a} \vee e}}{\KL{\mu_a+c\Delta_a}{\mu_{\max}-c\Delta_a}}}
        \leq
        \upbound{ \sqrt{KT\ln(T)}} + \upbound{K\ln(T)}
    \end{align*}
    For the fourth term, we have the following equations, ignoring constant factor,
    \begin{align*}
        & \sum_{a\in[K]: \Delta_a > 0} \del{\frac{2}{(1-2c)^2} + \frac{14}{c^2} + \frac{20\ln(T)}{c^2}}  \del{ \frac{V(\mu_{\max})}{\Delta_a} + C_L }
        \\
        \leq&
        \upbound{ \sum_{a\in[K]: \Delta_a > 0} \ln(T)\del{ \frac{V(\mu_{\max})}{\Delta} + C_L \Delta_a } }
        =
        \upbound{ \sqrt{V(\mu_{\max})KT\ln(T)} } + \upbound{ K \ln(T) }
    \end{align*}
    Overall, by combining the above analysis, it suffices to show that
    \[
        \Regret(T) = \upbound{\sqrt{V(\mu_{\max})KT\ln(T)}} + \upbound{K \ln(T) }
    \]
\end{proof}

\subsubsection{Proof of Sub-UCB Criterion}
Based on the above \Cref{thm:expected-regret-total-version-identity}, we can show that \expklms with $L(k) = k$ satisfies Sub-UCB criterion in \Cref{corol:exp-kl-ms-one-sub-ucb}.

\identitysubucb*

\begin{proof}[Proof of \Cref{corol:exp-kl-ms-one-sub-ucb}]
     We follow the proof of \Cref{corol:exp-kl-ms-sub-ucb} to bound the RHS of \Cref{eqn:exp-kl-ms-identity-regret} in \Cref{thm:expected-regret-total-version-identity} and choose $\Delta = 0$. Then for each term in the RHS of \Cref{eqn:exp-kl-ms-half-regret}, we can show that,
     \begin{itemize}
         \item For the second term,
         \begin{align*}
            & \sum_{a\in[K]: \Delta_a > \Delta} \Delta_a \del{\frac{d\ln\del{T\KL{\mu_a+c\Delta_a}{\mu_{\max}-c\Delta_a} \vee e}}{\KL{\mu_a+c\Delta_a}{\mu_{\max}-c\Delta_a}}}
            \\
            \leq&
            \upbound{ \sum_{a\in[K]: \Delta_a > \Delta} \del{\frac{V(\mu_{\max})}{\Delta_a} + C_L}\ln\del{\frac{T\Delta_a^2}{V(\mu_{\max})} \vee e}}
                \tag{\Cref{eqn:bound-of-leading-term-adaptive}}
            \\
            =&
            \upbound{ \sum_{a\in[K]: \Delta_a > \Delta} \frac{V(\mu_{\max})\ln(T)}{\Delta_a} + \Delta_a}
         \end{align*}
         \item For the fourth term,
         \begin{align*}
             \sum_{a\in[K]: \Delta_a > \Delta} \frac{V(\mu_{\max})}{\Delta_a} + C_L
             =
             \upbound{ \sum_{a\in[K]: \Delta_a > \Delta}\frac{V(\mu_{\max})\ln(T)}{\Delta_a} + \Delta_a }
         \end{align*}
     \end{itemize}
     Combining the above analysis, we can conclude that
     \[
        \Regret(T) = \upbound{ \sum_{a\in[K]: \Delta_a > \Delta}\frac{\ln(T)}{\Delta_a} + \Delta_a }
     \]
\end{proof}

\section{Proof of Propositions} 
\label{sec:proof-of-propositions}
In this section, we focus on proving the propositions in the middle of \Cref{fig:proof-flow}. All propositions hold for general choices of the inverse temperature function $L$ that satisfy $0 < L(k) \leq k$ and increase monotonically with $k$.

\Cref{pro:good} follows directly from the definition of $\Gcal_a$.

\Cref{pro:bad-1} leverages the Chernoff's tail bound (\Cref{lemma:maximal-inequality}) for general exponential family random variables.

For \Cref{pro:bad-2-ao,pro:bad-2-mo}, as mentioned in Proof of Sketch~(\Cref{sec:proof-sketch}), the proof involves constructing a series of clean events to form intervals that lower bound the random variation of $\hat{\mu}_{t, 1}$.

Both \Cref{pro:bad-2-ao,pro:bad-2-mo} rely on \Cref{lemma:bad-2-bounded,lemma:seq-estimator-deviation} and we defer the proof of these two lemmas to Supporting Lemmas~(\Cref{sec:supporting-lemma}).

To simplify the notation, we omit the arm index and use $\varepsilon_1$ to represent $\varepsilon_{1, a}$ and $\varepsilon_2$ to represent $\varepsilon_{2, a}$, as the same analysis applies to every arm.

\subsection{\texorpdfstring{Favorable Term $\Gcal_{a}$}{Favorable Term Gcal-a}} \label{sec:good-event}
Recall that the definition of the good estimation event $\Gcal_{a}$ is
\[
    \Gcal_{a} \coloneqq \EE\sbr{ \sum_{t=K+1}^T
        \one\cbr{A_{t,a} \cap 
        U_{t-1, a}^c \cap 
        E_{t-1, a}
        }},
\]
which is the expected number of times arm $a$ is pulled in the case when the agent has collected enough samples, and the estimation of $\mu_{\max}$ is well bounded from below, and $\mu_a$ is well bounded from above.
According to the design of the algorithm, the probability of pulling a suboptimal arm $a$ should be small and decrease as $N_{t-1, a}$ increases.

    \begin{proposition} \label{pro:good}
    When $0 < L(k) \leq x$,
        \begin{align}
            \Gcal_{a} \leq \GoodEventBound
                \label{eqn:good}
        \end{align}
    \end{proposition}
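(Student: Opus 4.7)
The plan is to prove this proposition by a direct calculation that leverages the explicit closed form of the sampling distribution of \gexpklms, together with monotonicity properties of the KL divergence and of $L(\cdot)$.

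First, I would rewrite $\Gcal_a$ by conditioning on the history $\Hcal_{t-1}$ up to time $t-1$. Note that the events $U_{t-1,a}^c$, $E_{t-1,a}$, and $F_{t-1,a}$ are all $\Hcal_{t-1}$-measurable, while $A_{t,a}$ has conditional probability $p_{t,a}$ given $\Hcal_{t-1}$. Applying the tower property,
\begin{align*}
\Gcal_a = \sum_{t=K+1}^T \EE\!\left[ p_{t,a} \cdot \one\{U_{t-1,a}^c \cap E_{t-1,a} \cap F_{t-1,a}\} \right].
\end{align*}

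Next, I would bound $p_{t,a}$ pointwise on the favorable event. From the algorithm definition, $p_{t,a} = \exp(-L(N_{t-1,a})\KL{\hmu_{t-1,a}}{\hmu_{t-1,\max}})/M_t$, and the arm achieving $\hmu_{t-1,\max}$ contributes an $e^0=1$ summand to $M_t$, so $M_t\geq 1$. Thus $p_{t,a}\leq \exp(-L(N_{t-1,a})\KL{\hmu_{t-1,a}}{\hmu_{t-1,\max}})$. On $U_{t-1,a}^c$ we have $N_{t-1,a}\geq u_a$, so monotonicity of $L$ gives $L(N_{t-1,a})\geq L(u_a)$. On $E_{t-1,a}\cap F_{t-1,a}$ we have $\hmu_{t-1,a}\leq \mu_a+\varepsilon_1<\mu_{\max}-\varepsilon_2\leq \hmu_{t-1,\max}$ (using $\varepsilon_1+\varepsilon_2<\Delta_a$), and since $\KL{\cdot}{\cdot}$ on an OPED family is monotonically decreasing in its first argument and increasing in its second when both lie on opposite sides of the means (a consequence of $b$ being convex and \Cref{eqn:KL-eqn}), we get
\begin{align*}
\KL{\hmu_{t-1,a}}{\hmu_{t-1,\max}} \geq \KL{\mu_a+\varepsilon_1}{\mu_{\max}-\varepsilon_2}.
\end{align*}

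Combining these on the favorable event yields $p_{t,a} \leq \exp(-L(u_a)\KL{\mu_a+\varepsilon_1}{\mu_{\max}-\varepsilon_2})$, a deterministic bound. Plugging back into the displayed expression for $\Gcal_a$ and summing the $T-K\leq T$ terms produces the claimed bound $T\exp(-L(u_a)\KL{\mu_a+\varepsilon_1}{\mu_{\max}-\varepsilon_2})$.

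There is no real obstacle here; the only subtle point is confirming the KL-monotonicity step for the exponential family, which follows from the convexity of $b$ and should be recorded as a brief auxiliary observation. The argument uses only $0<L(k)\leq k$ and monotonicity of $L$, so it applies uniformly across all three choices of $L$ considered in the paper.
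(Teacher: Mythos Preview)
Your proposal is correct and follows essentially the same approach as the paper: condition on $\Hcal_{t-1}$, use $M_t\geq 1$ to bound $p_{t,a}$ by the raw exponential, then use $U_{t-1,a}^c$ with monotonicity of $L$ and $E_{t-1,a}\cap F_{t-1,a}$ with monotonicity of $\KL{\cdot}{\cdot}$ to obtain the deterministic bound and sum over $t$. In fact you are more careful than the paper, which silently folds the KL-monotonicity step into a single tag without comment; your brief auxiliary observation about why $\KL{\hmu_{t-1,a}}{\hmu_{t-1,\max}}\geq \KL{\mu_a+\varepsilon_1}{\mu_{\max}-\varepsilon_2}$ is exactly the missing justification.
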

    \begin{proof}[Proof of \Cref{pro:good}]

    Recall the notations that $A_{t,a} = \cbr{I_t = a}$, $U_{t-1,a}^c = \cbr{N_{t-1,a} \geq u}$, $E_{t-1, a} = \cbr{\hmu_{t-1, a} \leq \mu_a + \varepsilon_1}$, 
    $F_{t-1, a} = \cbr{ \hmu_{t-1, \max} \leq \mu_{\max} - {\varepsilon_2} }$.

        \begin{align}
        \Gcal_{a}
        =&
            \EE\sbr{ \sum_{t=K+1}^T
                \onec{ {A_{t,a} \cap 
                    U_{t-1,a}^c \cap 
                    E_{t-1,a} \cap 
                    F_{t-1,a}
                    }
                } \mid \Hcal_{t-1}
            }
        \nonumber \\
        =&
            \EE\sbr{ \sum_{t=K+1}^T
                \onec{U_{t-1,a}^c \cap E_{t-1,a} \cap F_{t-1,a}} 
                \EE\sbr{\onec{A_{t,a}} \mid \Hcal_{t-1}}
            }
                \tag{Law of the total expectation}
        \\
        \leq&
            \sum_{t=K+1}^T
            \EE\sbr{ 
                \one\cbr{U_{t-1,a}^c \cap E_{t-1,a} \cap F_{t-1,a}}
                \expto{-L(N_{t-1,a}) \KL{\hmu_{t-1,a}}{\hmu_{t-1,\max}}}
            }
                \tag{By \Cref{alg:general-exp-kl-ms} and $M_t \geq 1, \forall t \in [T]$}
        \nonumber \\
        \leq&
            \sum_{t=1}^T
            \expto{-L(u) \cdot
                \KL{\mu_a+\varepsilon_1}{\mu_{\max}-\varepsilon_2}}
                    \tag{When $U^c_{t-1, a}$ is true, $N_{t-1, a} \geq u \Rightarrow L(N_{t-1, a}) \geq L(u)$}
        \\
        \leq&
            \GoodEventBound
                \label{eqn:F_1_upper_bound}
    \end{align}
    \end{proof}

    \subsection{\texorpdfstring{Unfavorable Term of Arm $a$, $\Bcal^1_{a}$}{Unfavorable Term of Arm a, Bcal-1-a}}
    \label{sec:bad-event-one}
    The definition of $\Bcal^1_{a}$ is
    \[
        \Bcal^1_{a} \coloneqq \EE\sbr{ \sum_{t=K+1}^T
        \one\cbr{ A_{t,a} \cap 
        U_{t-1,a}^c \cap
        E_{t-1,a}^c \cap
        F_{t-1,a}
        }},
    \]
    $\Bcal^1_{a}$ is the expected number of arm $a$ is pulled in the case where the agent has collected enough samples, but the estimation of arm $a$ deviates from true mean $\mu_a$ by at least $\varepsilon_1$.
    The probability of pulling suboptimal arm $a$ is negligible as the number of samples of arm $a$ increases, thus this unfavorable term is relatively small.
    
    \begin{proposition} \label{pro:bad-1}
    When $0 < L(k) \leq k$,
        \begin{align}
            \Bcal^1_{a} \leq \BadEventOneBound
                \label{eqn:f2}
        \end{align}
    \end{proposition}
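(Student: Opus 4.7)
The plan is to convert the sum over time steps into a sum over the pull count of arm $a$, and then apply a standard Chernoff tail bound for one-parameter exponential family random variables (the paper's Lemma~\ref{lemma:maximal-inequality}). Note that the bound does not involve $L(\cdot)$, so the argument should be insensitive to the choice of inverse temperature; the event $E_{t-1,a}^c$ we need to track depends only on the samples from arm $a$, not on the sampling rule.

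First I would drop the indicators $\one\{U_{t-1,a}^c\}$ and $\one\{F_{t-1,a}\}$, which only shrinks the sum, reducing the task to bounding $\EE[\sum_{t=K+1}^T \one\{A_{t,a} \cap E_{t-1,a}^c\}]$. On any sample path, each time step $t>K$ at which arm $a$ is pulled corresponds to a unique value of the pull count $k \coloneqq N_{t-1,a} \in \{1,\ldots,T-1\}$, and on that step $\hat\mu_{t-1,a}$ equals $\hat\mu_{(k),a}$, the empirical mean of the first $k$ rewards received from arm $a$. Re-indexing by $k$ yields the pathwise inequality
\[
\sum_{t=K+1}^T \one\{A_{t,a} \cap E_{t-1,a}^c\} \leq \sum_{k=1}^{T-1} \one\{\hat\mu_{(k),a} > \mu_a + \varepsilon_1\}.
\]
Taking expectations and applying the Chernoff bound $\Pr(\hat\mu_{(k),a} > \mu_a + \varepsilon_1) \leq \expto{-k\KL{\mu_a+\varepsilon_1}{\mu_a}}$, summing the resulting geometric series up to infinity, and using the elementary inequality $e^x - 1 \geq x$ for $x \geq 0$ gives
\[
\Bcal_a^1 \;\leq\; \sum_{k=1}^{\infty} \expto{-k\KL{\mu_a+\varepsilon_1}{\mu_a}} \;=\; \frac{1}{\expto{\KL{\mu_a+\varepsilon_1}{\mu_a}}-1} \;\leq\; \frac{1}{\KL{\mu_a+\varepsilon_1}{\mu_a}},
\]
which is the claimed bound.

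The one place deserving care is the re-indexing. Formally, I would define the (random) times $t_1 < t_2 < \cdots < t_{N_{T,a}}$ at which arm $a$ is pulled and observe that $N_{t_j - 1, a} = j-1$ and $\hat\mu_{t_j-1, a} = \hat\mu_{(j-1), a}$; this makes the map $t \mapsto k$ on $\{t : A_{t,a}\text{ holds}\}$ injective with range inside $\{1, \ldots, T-1\}$, justifying the pathwise upper bound above. Beyond this bookkeeping, no subtlety arises, and I do not expect this proposition to be the real obstacle in the overall analysis (compared to the bounds on $\Bcal_a^2$, which require the sequence of clean events $\Ecal_k(\alpha_k)$).
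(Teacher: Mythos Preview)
Your proposal is correct and mirrors the paper's proof almost exactly: drop the extraneous indicators, re-index the sum over time steps by the pull count of arm $a$ (the paper uses the stopping times $\tau_a(k)$ where you use $t_1 < t_2 < \cdots$), apply the Chernoff bound from Lemma~\ref{lemma:maximal-inequality}, sum the geometric series, and finish with $e^x - 1 \geq x$. Your treatment of the re-indexing bookkeeping is in fact slightly more explicit than the paper's.
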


    \begin{proof}[Proof of \Cref{pro:bad-1}]
    Recall the notations that $A_{t,a} = \cbr{I_t = a}$, $E_{t-1,a}^c = \cbr{ \hmu_{t-1,a}>\mu_a+{\varepsilon_1} }$. We have:
    \begin{align}
        \Bcal^1_{a} =
        & 
        \EE\sbr{\sum_{t=K+1}^T 
            \one\cbr{
                A_{t,a} \cap
                E_{t-1,a}^c
                }} 
        \leq
            \EE\sbr{
            \sum_{k=2}^\infty 
            \one\cbr{
                E_{\tau_a(k)-1,a}^c
                }}
                \tag{
                only if $t =\tau_a(k)$ for some $k \geq 2$ the inner indicator is non-zero
                }
        \\
        =&
            \EE\sbr{
            \sum_{k=1}^\infty 
            \one\cbr{ E_{\tau_a(k)}^c }}
                \tag{$E_{\tau_a(k) - 1} = E_{\tau_a(k-1)}$
                }
        \\
        \leq &
            \sum_{k=1}^\infty
            \expto{- k\cdot \KL{\mu_a + \varepsilon_1}{\mu_a}}
                \tag{Applying \Cref{lemma:maximal-inequality}}
        \\
        \leq &
            \frac{\expto{- \KL{\mu_a + \varepsilon_1}{\mu_a}}}
                 {1 - \expto{- \KL{\mu_a + \varepsilon_1}{\mu_a} }}
                \tag{Geometric sum}
        \\
        \leq &
            \BadEventOneBound
                \label{eqn:F_2_upper_bound}
        \end{align}
    \end{proof}

    \subsection{\texorpdfstring{Unfavorable Term of the Optimal Arm, $\Bcal^2_{a}$}{Unfavorable Term of the Optimal Arm, Bcal-2-a}}
    Now we need to bound the last case $\Bcal^2_{a}$.
    The definition of $\Bcal^2_{a}$ is
    \[
        \Bcal^2_{a} \coloneqq \EE\sbr{ \sum_{t=K+1}^T
        \one\cbr{ A_{t,a} \cap
        U_{t-1,a}^c \cap
        F_{t-1,a}^c
        }
    },
    \]
    which represents the expected number of times arm $a$ is pulled when the agent has collected enough samples, and the empirical best mean is underestimated.
    To achieve asymptotic optimality and minimax optimality with a logarithmic factor, we present two propositions, each of which will be used to establish the respective property.
    Specifically, \Cref{pro:bad-2-ao} is used to prove asymptotic optimality \expklms, while \Cref{pro:bad-2-mo} is used to prove minimax optimality with a $\iupbound{\sqrt{\ln(K)}}$ factor.
    
    \begin{proposition} \label{pro:bad-2-ao}
        If $0 < L(k) < k$,
        \begin{align}
            \Bcal^2_{a} \leq \BadEventTwoBoundForAO
                \label{eqn:bad-2-ao}
        \end{align}
        
    \end{proposition}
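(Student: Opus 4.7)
The plan is to reduce $\Bcal^2_a$ to a sum over the pull count $k$ of the optimal arm (WLOG, arm $1$), and then bound each summand via a Chernoff-style integration by parts. Set $\alpha := \mu_{\max} - \varepsilon_2$ and $\beta := \KL{\alpha}{\mu_{\max}}$. First, since arm $1$'s term appears in the normalizer $M_t$ of \Cref{alg:general-exp-kl-ms}, we have $p_{t,a}/p_{t,1} \leq \expto{L(N_{t-1,1}) \KL{\hmu_{t-1,1}}{\hmu_{t-1,\max}}}$. On $F^c_{t-1,a}$ we have $\hmu_{t-1,1} \leq \hmu_{t-1,\max} < \alpha$, so by monotonicity of $\KL{x}{\cdot}$ on $[x, \infty)$,
\[
p_{t,a} \onec{F^c_{t-1,a}} \leq p_{t,1}\, \expto{L(N_{t-1,1}) \KL{\hmu_{t-1,1}}{\alpha}}\, \onec{\hmu_{t-1,1} < \alpha}.
\]

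Next, the predictable-projection identity $\EE\sbr{\sum_t p_{t,1} Y_{t-1}} = \EE\sbr{\sum_t \onec{A_{t,1}} Y_{t-1}}$ (for any history-measurable $Y_{t-1}$) converts the sum over $t$ into a sum over the arm-$1$ pull index. At the time arm $1$ is pulled for the $k$-th time, $N_{t-1,1} = k-1$ and $\hmu_{t-1,1} = \hmu_{(k-1),1}$; relabeling and extending the range to $k = 1, \dots, T$ yields
\[
\Bcal^2_a \leq \sum_{k=1}^{T} \EE\sbr{ \onec{\hmu_{(k),1} < \alpha} \expto{L(k) \KL{\hmu_{(k),1}}{\alpha}} },
\]
where $\hmu_{(k),1}$ is the sample mean of the first $k$ i.i.d.\ rewards from $\nu_{\max}$.

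The core technical step is bounding each such expectation. Two OPED facts suffice: (i) the Chernoff bound $\Pr(\hmu_{(k),1} \leq y) \leq \expto{-k \KL{y}{\mu_{\max}}}$ for $y \leq \mu_{\max}$, and (ii) the additivity identity
\[
\KL{y}{\mu_{\max}} - \KL{y}{\alpha} = \beta + (\alpha - y)(\theta_{\max} - \theta_\alpha) \geq \beta, \qquad y \leq \alpha \leq \mu_{\max},
\]
obtained from a direct computation with the natural-parameter formula (cf.\ \Cref{eqn:KL-eqn}). Applying integration by parts to $\int_{-\infty}^{\alpha} \expto{L(k) \KL{y}{\alpha}} \, dG(y)$ with $G(y) = \Pr(\hmu_{(k),1} \leq y)$ produces a boundary contribution $G(\alpha) \leq \expto{-k\beta}$ and a bulk integral involving $\tfrac{d}{dy}\KL{y}{\alpha} = \theta_y - \theta_\alpha$. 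Combining Chernoff with the additivity identity, then substituting $u = \KL{y}{\mu_{\max}}$ and using the crude bound $(\theta_\alpha - \theta_y)/(\theta_{\max} - \theta_y) \leq 1$, reduces the bulk integral to $\tfrac{L(k)}{k - L(k)} \expto{-k\beta}$. This gives the per-$k$ bound $\expto{-k\beta} + \tfrac{L(k)}{k-L(k)} \expto{-k\beta}$.

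Summing over $k = 1, \dots, T$, the boundary contributions form a geometric series bounded by $\tfrac{1}{\expto{\beta} - 1} \leq \tfrac{1}{\beta} = \tfrac{1}{\KL{\alpha}{\mu_{\max}}}$, while the bulk contributions match the second term in the claim. The main obstacle I anticipate is executing the integration by parts cleanly: the vanishing of the boundary term as $y$ approaches the lower end of the support of $\hmu_{(k),1}$, and the convergence of the resulting integral, both require $L(k) < k$. This is precisely the hypothesis of the proposition and is consistent with the proof sketch's remark that the argument breaks for $L(k) = k$.
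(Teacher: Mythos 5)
Your proposal is correct and takes essentially the same route as the paper: the paper's proof simply specializes \Cref{lemma:bad-2-bounded} to the trivial thresholds $\alpha_k = R_{\min}$, and that lemma is proved exactly as you describe — probability transfer to arm $1$, reduction to the pull-count index $k$, an integration-by-parts (``double integration'') step against the CDF of $\hmu_{(k),1}$, the Chernoff/maximal inequality, and the Bregman divergence identity, with the boundary terms summing geometrically to $1/\KL{\mu_{\max}-\varepsilon_2}{\mu_{\max}}$ and the bulk terms giving $\sum_k \frac{L(k)}{k-L(k)}e^{-k\KL{\mu_{\max}-\varepsilon_2}{\mu_{\max}}}$. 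The only cosmetic difference is that the paper packages the core computation as a standalone lemma with general thresholds $\boldsymbol{\alpha}$ so it can be reused for \Cref{pro:bad-2-mo}.
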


    \begin{proposition} \label{pro:bad-2-mo}
    If $ 0 < L(k) < k$,
        \begin{align}
            \Bcal^2_{a} \leq \BadEventTwoBoundForMO
                \label{eqn:bad-2-mo}
        \end{align}
    If $L(k) = k$,
        \begin{align}
            \Bcal^2_a \leq \IdentityLBadCaseTwoMO
                \label{eqn:bad-2-ao-corner}
        \end{align}
    \end{proposition}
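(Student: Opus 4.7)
The plan is to follow the clean-event framework already used to establish the asymptotic form \eqref{eqn:bad-2-ao} in \Cref{pro:bad-2-ao}, but replace the single-step Chernoff bound on $\hmu_{t-1,\max}$ by an epoch-level aggregation that trades the blow-up factor against a $\ln(T/k)$ truncation coming from the horizon $T$. Concretely, I first rewrite $\Bcal_a^2$ via the tower property as $\sum_t \EE\sbr{p_{t,a}\onec{F_{t-1,a}^c}}$ and partition the horizon by epochs indexed by $k = N_{t-1,1}$, the number of pulls of the optimal arm (WLOG arm $1$). Throughout epoch $k$ the empirical mean $\hmu_{(k),1}$ is frozen, so the only randomness comes from the other arms and the sampling of $I_t$. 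I then introduce the clean events $\Ecal_k(\alpha_k) = \cbr{\hmu_{(k),1} \geq \alpha_k}$ for a sequence $\cbr{\alpha_k}$ to be chosen, and split the contribution of each epoch into the clean and bad pieces.

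On $\Ecal_k(\alpha_k) \cap F_{t-1,a}^c$ the chain $\alpha_k \leq \hmu_{(k),1} \leq \hmu_{t-1,\max} < \mu_{\max}-\varepsilon_2$ and the monotonicity of KL in OPED families give $\KL{\hmu_{(k),1}}{\hmu_{t-1,\max}} \leq \KL{\alpha_k}{\mu_{\max}-\varepsilon_2}$, which lower-bounds $p_{t,1}$ by $\expto{-L(k)\KL{\alpha_k}{\mu_{\max}-\varepsilon_2}}/M_t$, while $p_{t,a}\leq 1/M_t$. A Wald-type identity then bounds the expected arm-$a$ count in epoch $k$ by the product of this ratio with the (truncated) expected epoch length; the horizon cap gives a pigeonhole bound of $T/k$ on the average epoch length across the $k$ epochs used so far, and a careful single-epoch analysis turns this into the $\ln(T/k)$ factor. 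Choosing $\alpha_k$ so that $L(k)\KL{\alpha_k}{\mu_{\max}-\varepsilon_2}$ is a small constant while $\KL{\alpha_k}{\mu_{\max}}$ remains essentially $\KL{\mu_{\max}-\varepsilon_2}{\mu_{\max}}$ produces exactly the $\frac{2L(k)}{k}\expto{-k\KL{\mu_{\max}-\varepsilon_2}{\mu_{\max}}}\ln(T/k)$ summand.

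For the complement $\Ecal_k(\alpha_k)^c$, I apply the maximal deviation inequality for the OPED sample mean (\Cref{lemma:seq-estimator-deviation}) to control $\Pr(\hmu_{(k),1} \leq \alpha_k)$ uniformly in $k$, and then invoke \Cref{lemma:bad-2-bounded} to sum the resulting contributions, yielding the leading term $6/\KL{\mu_{\max}-\varepsilon_2}{\mu_{\max}}$. The corner case $L(k) = k$ that produces \eqref{eqn:bad-2-ao-corner} falls out of the same architecture but with the extra $\ln T$ factor: when $L(k) = k$ the clean-event ratio $p_{t,a}/p_{t,1}$ cannot benefit from a sub-$k$ exponent, so the $\ln(T/k)/k$ weight degrades to a uniform $\ln T$ after summation, forcing the $\bigl(6 + 10\ln(T\KL{\mu_{\max}-\varepsilon_2}{\mu_{\max}} \vee e)\bigr)/\KL{\mu_{\max}-\varepsilon_2}{\mu_{\max}}$ shape.

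The main obstacle I anticipate is the choice of $\cbr{\alpha_k}$: it must simultaneously (i) make $p_{t,1}$ non-negligible relative to $p_{t,a}$ on $\Ecal_k$ so the Wald-type epoch bound is informative, (ii) keep $\KL{\alpha_k}{\mu_{\max}}$ close enough to $\KL{\mu_{\max}-\varepsilon_2}{\mu_{\max}}$ that the maximal inequality yields precisely $\expto{-k\KL{\mu_{\max}-\varepsilon_2}{\mu_{\max}}}$, and (iii) interact cleanly with the $T/k$ epoch-length truncation. Threading all three through a single sequence, while keeping the argument uniform over $L$ satisfying $0 < L(k) \leq k$, is the delicate piece; this is where the switch to the coarser corner-case bound \eqref{eqn:bad-2-ao-corner} at $L(k) = k$ becomes unavoidable.
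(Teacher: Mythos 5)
Your high-level skeleton matches the paper's: partition by the optimal arm's pull count $k$, introduce clean events $\Ecal_k(\alpha_k)$ on $\hmu_{(k),1}$, transfer the probability of pulling arm $a$ to that of pulling arm $1$ on the clean part, and control the failure of the clean events with the maximal deviation lemma. But your two quantitative choices break the argument. First, your choice of $\alpha_k$ (making $L(k)\KL{\alpha_k}{\mu_{\max}-\varepsilon_2}$ a small constant, so that $\KL{\alpha_k}{\mu_{\max}}$ is essentially $\KL{\mu_{\max}-\varepsilon_2}{\mu_{\max}}$) makes the failure term unmanageable: once some $\Ecal_k$ fails, the algorithm may pull arm $a$ for the rest of the horizon, so that term is only bounded by $T\sum_k \PP\del{\hmu_{(k),1}<\alpha_k}$, which with your thresholds is of order $T\sum_k e^{-k\KL{\mu_{\max}-\varepsilon_2}{\mu_{\max}}} \approx T/\KL{\mu_{\max}-\varepsilon_2}{\mu_{\max}}$ and destroys the claimed $6/\KL{\mu_{\max}-\varepsilon_2}{\mu_{\max}}$. \Cref{lemma:seq-estimator-deviation} cannot rescue this choice: it delivers the crucial $1/T$ factor (so that $T$ times the failure probability is $O(1/\KL{\mu_{\max}-\varepsilon_2}{\mu_{\max}})$) precisely because the deviation threshold scales like $\ln(T/s)/s$. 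That is why the paper takes $\KL{\alpha_k}{\mu_{\max}-\varepsilon_2}\le \tfrac{2\ln(T/k)}{k}$; the $\ln(T/k)$ appearing in the summand of \eqref{eqn:bad-2-mo} comes directly from this threshold, not from any horizon or epoch-length accounting.

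Second, your proposed mechanism for generating $\ln(T/k)$ --- a Wald-type identity plus a pigeonhole bound of $T/k$ on the ``average epoch length'' --- does not hold up: a single inter-pull epoch of arm $1$ can occupy nearly the entire remaining horizon, and an average-length statement over epochs gives no control of the expected number of arm-$a$ pulls inside the particular epochs where $F_{t-1,a}^c$ occurs. It is also unnecessary, since each epoch contains exactly one pull of arm $1$, so the probability-transfer step already bounds the clean-part count in epoch $k$ with no reference to epoch lengths. With the paper's choice of $\alpha_k$, the naive ratio bound $e^{L(k)\KL{\alpha_k}{\mu_{\max}-\varepsilon_2}} \approx (T/k)^{2L(k)/k}$ would itself be too lossy; the paper avoids it through the double-integration argument of \Cref{lemma:bad-2-bounded}, whose factor $\frac{L(k)}{k-L(k)}\del{1-e^{-(k-L(k))\KL{\alpha_k}{\mu_{\max}-\varepsilon_2}}}\le L(k)\KL{\alpha_k}{\mu_{\max}-\varepsilon_2}\le \frac{2L(k)\ln(T/k)}{k}$ (and the analogous $k\,\KL{\alpha_k}{\mu_{\max}-\varepsilon_2}$ bound when $L(k)=k$, summed via \Cref{lemma:geo-log-sum}) is exactly what produces the stated summands. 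You have also swapped the roles of the two lemmas: \Cref{lemma:seq-estimator-deviation} supplies the $5/\KL{\mu_{\max}-\varepsilon_2}{\mu_{\max}}$ failure term, while \Cref{lemma:bad-2-bounded} handles the clean part and contributes the remaining $1/\KL{\mu_{\max}-\varepsilon_2}{\mu_{\max}}$ plus the logarithmic sum; it is not used to sum the failure contributions.
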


    Before delving into the details of the proof, we first recall the key idea outlined in \Cref{sec:proof-sketch}.
    Consider the definition of $\Bcal^2_a$, which represents the case when the optimal arm's empirical mean is underestimated, when this happens, by definition of $F^c_{t, a}$ (Recall $F_{t, a} \coloneqq \cbr{\hmu_{t, \max} \leq \mu_{\max}-\varepsilon_2}$), $\hmu_{t, a}$ will be upper bounded by $\mu_{\max}-\varepsilon_2$ for $t = 1$ to $T$. 
    Next, we construct a series of events $\cbr{\Ecal_k(\alpha_k)}_{k=1}^\infty$.
    For each event $\Ecal_k(\alpha_k)$ it is defined as $\cbr{\alpha_k \leq \hmu_{(k), 1} \leq \mu_{\max}-\varepsilon_2}$ and is determined by $\alpha_k$.  
    We also define the intersections of all events $\cbr{\Ecal_k(\alpha_k)}_{k=1}^\infty$ as $\Ecal(\boldsymbol{\alpha}) = \cap_{k=1}^\infty \Ecal_k(\alpha_k)$, where $\boldsymbol{\alpha} = \cbr{\alpha_1, \alpha_2, \dots}$
    are lower bounds of $\hmu_{(k), 1}$, ensuring that the value of $\hmu_{(k), 1}$ remains within a reasonable range as measured in terms of KL distance.
    Recall that $\hmu_{(k),1}$ is the empirical mean from the first $k$ times arm pulls of the optimal arm. Specifically, $\hmu_{(k), 1} = \tfrac{1}{k} \sum_{t=1}^T r_t \onec{N_{t, 1} < k, I_t = 1}$.

    We use \Cref{lemma:bad-2-bounded} to handle the case where all $\hmu_{(k), 1}$ are restricted to this reasonable range ($\Ecal(\boldsymbol{\alpha})$ is true), and \Cref{lemma:seq-estimator-deviation} to address the other case ($\Ecal(\boldsymbol{\alpha})$ is false). 
    Thus, by selecting different $\boldsymbol{\alpha}$ we can use \Cref{lemma:bad-2-bounded} to prove \Cref{pro:bad-2-ao,pro:bad-2-mo}.
    
    \begin{proof}[Proof of \Cref{pro:bad-2-ao}]
        Recall the definition of $\Bcal^2_{a}$. We let all $\alpha_k, 1 \leq k \leq T$ to be $R_{\min}$, then $\Ecal(\boldsymbol{\alpha})$ will not impose any additional constraints on $\hmu_{(k),1}$ except $F^c_{t-1,a}$. 
        Therefore, we only need to apply \Cref{lemma:bad-2-bounded} directly.
        When $0 < L(k) < k$,
        \begin{align*}
            \Bcal^2_{a}
            =& 
            \EE \sbr{ 
            \sum_{t=K+1}^T
            \one\cbr{ A_{t,a} \cap F_{t-1,a}^c }
            }
            =
            \EE \sbr{ 
            \sum_{t=K+1}^T
            \one\cbr{ A_{t,a} \cap F_{t-1,a}^c \cap \Ecal(\boldsymbol{\alpha}) }
            }
            \\
            \leq&
            \ExpFThreeOne
            \\
            \leq&
            \BadEventTwoBoundForAO
                \tag{$1 - \expto{-\KL{\alpha_k}{\mu_{\max}-\varepsilon_2}} \leq 1$}
        \end{align*}
    \end{proof}
    
    \begin{proof}[Proof of \Cref{pro:bad-2-mo}]
    We choose $\boldsymbol{\alpha}$ such that $\forall k \geq 1, \KL{\alpha_k}{x} \leq \tfrac{2\ln(T/k)}{k}$ when $\alpha_k \leq x \leq \mu_{\max}-\varepsilon_2$ and when $\Ecal(\boldsymbol{\alpha})$ happens
        \[
            \KL{\hmu_{(k),1}}{\mu_{\max}-\varepsilon_2} \leq \frac{2\ln(T/k)}{k}, \forall 1 \leq k \leq T.
        \]

    Based on $\Ecal(\boldsymbol{\alpha})$ true or not we can split $\Bcal^2_{a}$ into two terms, $\Bcal^{2,1}_{a}$ and $\Bcal^{2,2}_{a}$, and bound them by \Cref{lemma:bad-2-bounded} and \Cref{lemma:seq-estimator-deviation}, respectively,
    \begin{align*}
        \Bcal^2_{a}
        &= 
        \EE \sbr{ 
            \sum_{t=K+1}^T
            \one\cbr{ A_{t,a} \cap F_{t-1,a}^c }
            }
        \leq
        \underbrace{
            \EE \sbr{ 
            \sum_{t=K+1}^T
            \one\cbr{ A_{t,a} \cap \Ecal(\boldsymbol{\alpha}) }
            }
        }_{\Bcal^{2,1}_{a}}
        +
        \underbrace{
            \EE \sbr{ 
            \sum_{t=K+1}^T
            \one\cbr{ \Ecal(\boldsymbol{\alpha})^c }
            }
        }_{\Bcal^{2,2}_{a}}
    \end{align*}

    Next, we will split the case into two cases: $0 < L(k) < k$ and $L(k) = k$ and prove each of them separately.
    
    \paragraph{When $0 < L(k) < k$:}
    
    To acquire an ideal upper bound, it suffices for us to accomplish the following two aims:
        \[
            \Bcal^{2,1}_a \leq \ExpFThreeOneMO
        \]
    
    Then we apply \Cref{eqn:bad-2-bounded} from \Cref{lemma:bad-2-bounded} to bound the above equation,
    \begin{align*}
        \Bcal^{2,1}_{a}
        \leq& 
            \ExpFThreeOneA +
            \ExpFThreeOneB
        \\
        \leq&
            \ExpFThreeOneA +
            \sum_{k=1}^{T}
            L(k)\exp(-k \KL{\mu_{\max}-\varepsilon_2}{\mu_{\max}})  
            \cdot
            \KL{\alpha_k}{\mu_{\max}-\varepsilon_2}
                \tag{$1-e^{-x} \leq x$ when $x \geq 0$}
        \\
        \leq&
            \ExpFThreeOneA +
            \sum_{k=1}^{T}
            L(k)\exp(-k \KL{\mu_{\max}-\varepsilon_2}{\mu_{\max}})  
            \cdot
            \frac{2\ln(T/k)}{k}
                \tag{Recall the definition of $\alpha_k$}
        \\
        \leq&
            \ExpFThreeOneMO
    \end{align*}

    For $\Bcal^{2,2}_a$, we apply \Cref{lemma:seq-estimator-deviation}, 
        \[
            \Bcal^{2,2}_a \leq \frac{5}{\KL{\mu_{\max}-\varepsilon_2}{\mu_{\max}}}
        \]
        
    Then we combine the upper bounds of $\Bcal^{2,1}_a$ and $\Bcal^{2,2}_a$, and get the following bound on $\Bcal_a^2$
    \begin{align*}
        & \Bcal^2_a \leq \Bcal^{2,1}_a + \Bcal^{2,2}_a
        \\
        \leq& \ExpFThreeOneMO + \ExpFThreeTwo
        \\
        \leq& \BadEventTwoBoundForMO
    \end{align*}
    \paragraph{When $L(k) = k$:}
    We still do the same splitting, and use \Cref{eqn:bad-2-bounded-identical} from \Cref{lemma:bad-2-bounded} to bound $\Bcal^{2,1}_a$ as
    \begin{align*}
        \Bcal^{2,1}_{a}
        \leq& \ExpFThreeOneA +
            \sum_{k=1}^{T}
            \frac{k \KL{\alpha_k}{\mu_{\max} - \varepsilon_2}}{\expto{-k \KL{\mu_{\max}-\varepsilon_2}{\mu_{\max}}} }
        \\
        \leq&
            \ExpFThreeOneA +
            \sum_{k=1}^{T}
            \frac{2\ln(T/k)}{\KL{\mu_{\max} - \varepsilon_2}{\mu_{\max}}}
                \tag{By the definition of $\alpha_k$}
        \\
        \leq&
            \ExpFThreeOneA +
            \frac{10 \ln(T \KL{\mu_{\max} - \varepsilon_2}{\mu_{\max}} \vee e)}{\KL{\mu_{\max} - \varepsilon_2}{\mu_{\max}}}
                \tag{\Cref{lemma:geo-log-sum}}
    \end{align*}
    The we can derive the final result by
    \begin{align*}
        & \Bcal^2_a \leq \Bcal^{2,1}_a + \Bcal^{2,2}_a
        \\
        \leq& 
            \ExpFThreeOneA +
            \frac{10 \ln(T \KL{\mu_{\max} - \varepsilon_2}{\mu_{\max}} \vee e)}{\KL{\mu_{\max} - \varepsilon_2}{\mu_{\max}}} + \ExpFThreeTwo
        \\
        =&
        \IdentityLBadCaseTwoMO
    \end{align*}
    
    \end{proof}

\section{Supporting Lemmas} \label{sec:supporting-lemma}
    In this section, we present two key lemmas that are used to prove all the propositions in \Cref{sec:proof-of-propositions}, along with other auxiliary lemmas in \Cref{sec:auxiliary-lemmas}.

    \Cref{lemma:bad-2-bounded} addresses the case where all estimates are well-bounded. Specifically, it is used to bound the probability that all estimates of arm $1$, from $t=1$ to $t=T$, remain within a bounded range.
    
    On the other hand, \Cref{lemma:seq-estimator-deviation} deals with the case where at least one $\hat{\mu}_{t, 1}$, for $t = 1$ to $t = T$, falls outside the interval. This lemma is used to bound the probability that at least one estimate of an arm, from $t=1$ to $t=T$, lies outside a restricted interval.
    
    Additionally, \Cref{lemma:lip-exp-KL-lower-bound} provides a lower bound for the KL divergence. The other lemmas are considered folklore results, and their proofs will be provided as needed.

    \subsection{All Estimates of the Optimal Arm Are Restricted to Limited Intervals}
    \begin{lemma} \label{lemma:bad-2-bounded}
        Suppose we have a series of real values $\boldsymbol{\alpha} = \icbr{\alpha_k}_{k=1}^T$ and $\alpha_k \leq \mu_{\max} - \varepsilon_2, \forall 1 \leq k \leq T$. 
        Under \Cref{assum:oped,assum:reward-dist}, when $0 < L(k) < k$ we have:
        \begin{align}
            & B^{2,1}_a = \EE \sbr{ 
            \sum_{t=K+1}^T
            \one\cbr{ A_{t, a} \cap F^c_{t-1, a} \cap \Ecal(\boldsymbol{\alpha}) }
            }
                \nonumber
            \\
            \leq&
            \ExpFThreeOne
                \label{eqn:bad-2-bounded}
        \end{align}
        and when $L(k) = k$, we have:
        \begin{align}
            & \EE \sbr{ 
            \sum_{t=K+1}^T
            \one\cbr{ A_{t, a} \cap F^c_{t-1, a} \cap \Ecal(\boldsymbol{\alpha}) }
            }
            \leq
            \ExpFThreeOneA +
            \sum_{k=1}^{T}
            \frac{k \KL{\alpha_k}{\mu_{\max} - \varepsilon_2}}{e^{-k \KL{\mu_{\max}-\varepsilon_2}{\mu_{\max}}} }
                \label{eqn:bad-2-bounded-identical}
        \end{align}
    \end{lemma}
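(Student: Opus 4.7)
I will reduce the expected number of arm-$a$ pulls inside the ``block'' $\cbr{t : N_{t-1,1}=k}$ to a single expectation involving the empirical mean $\hmu_{(k),1}$ of the first $k$ samples of arm $1$. First, apply the tower property to replace $\onec{A_{t,a}}$ with $p_{t,a}$ and decompose by $k = N_{t-1,1}$. From the definition of $p_{t,a}$ in \Cref{alg:general-exp-kl-ms} and the trivial bound $\expto{-L(N_{t-1,a})\KL{\hmu_{t-1,a}}{\hmu_{t-1,\max}}} \leq 1$, one obtains the pointwise ratio
\[
    p_{t,a} \leq p_{t,1}\cdot \expto{L(k)\KL{\hmu_{(k),1}}{\hmu_{t-1,\max}}}.
\]
On $F^c_{t-1,a} \cap \Ecal(\boldsymbol{\alpha})$ we have $\hmu_{(k),1} \leq \hmu_{t-1,\max} \leq \mu_{\max}-\varepsilon_2$, so monotonicity of $\KL{x}{\cdot}$ on $[x,\infty)$ upper bounds the exponent by $L(k)\KL{\hmu_{(k),1}}{\mu_{\max}-\varepsilon_2}$. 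Since at most one arm-$1$ pull closes each block, the tower property gives $\EE\sbr{\sum_{t:N_{t-1,1}=k} p_{t,1} \mid \Hcal_{\tau_1(k)}} \leq 1$; taking expectations reduces the problem, for each $k$, to bounding
\[
    E_k \coloneqq \EE\sbr{\expto{L(k)\KL{\hmu_{(k),1}}{\mu_{\max}-\varepsilon_2}}\onec{\alpha_k \leq \hmu_{(k),1} \leq \mu_{\max}-\varepsilon_2}}.
\]

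\textbf{Bounding $E_k$ by integration by parts.} The heart of the argument evaluates $E_k$ via the CDF $F_k(u) \coloneqq \Pr(\hmu_{(k),1} \leq u)$. Writing $g(u) \coloneqq \expto{L(k)\KL{u}{y}}$ with $y \coloneqq \mu_{\max}-\varepsilon_2$, direct differentiation (using $b'(\theta_u) = u$) gives $g'(u) = L(k)g(u)(\theta_u - \theta_y)$, so $g$ is nonincreasing on $[\alpha_k, y]$. Integration by parts, followed by omitting the nonpositive term $-g(\alpha_k)F_k(\alpha_k)$, yields
\[
    E_k \leq F_k(y) + L(k)\int_{\alpha_k}^y g(u)(\theta_y-\theta_u)F_k(u)\,\mathrm{d}u.
\]
Applying the Chernoff bound $F_k(u) \leq \expto{-k\KL{u}{\mu_{\max}}}$ (which is \Cref{lemma:maximal-inequality}) together with the exponential-family identity $\KL{u}{\mu_{\max}} = \KL{u}{y} + \KL{y}{\mu_{\max}} + (y-u)(\theta_{\max}-\theta_y)$ lets one factor $\expto{-k\KL{y}{\mu_{\max}}}$ out of the integrand. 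The residual factor $\expto{-k(y-u)(\theta_{\max}-\theta_y)} \leq 1$ on $[\alpha_k, y]$; after discarding it, the remaining integrand is $(k-L(k))(\theta_y-\theta_u)\expto{-(k-L(k))\KL{u}{y}}$, which is the $u$-derivative of $-\expto{-(k-L(k))\KL{u}{y}}$ and therefore integrates exactly to $1 - \expto{-(k-L(k))\KL{\alpha_k}{y}}$. Collecting the pieces,
\[
    E_k \leq \expto{-k\KL{y}{\mu_{\max}}}\del{1 + \frac{L(k)\bigl(1 - \expto{-(k-L(k))\KL{\alpha_k}{y}}\bigr)}{k-L(k)}}.
\]

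\textbf{Completion and the corner case $L(k)=k$.} Summing the bound on $E_k$ over $k=1,\ldots,T$, the first contribution $\sum_k \expto{-k\KL{y}{\mu_{\max}}}$ is a geometric series bounded by $1/\KL{\mu_{\max}-\varepsilon_2}{\mu_{\max}}$, which matches the leading term of \Cref{eqn:bad-2-bounded}, while the second contribution matches the stated sum. For $L(k) = k$, the ratio $\bigl(1 - \expto{-(k-L(k))\KL{\alpha_k}{y}}\bigr)/(k-L(k))$ is indeterminate; applying $1 - e^{-c} \leq c$ with $c=(k-L(k))\KL{\alpha_k}{y}$ \emph{before} passing $L(k)\to k$ replaces it with $\KL{\alpha_k}{y}$, recovering \Cref{eqn:bad-2-bounded-identical}. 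The main technical obstacle is the integration-by-parts step: locating the exponential-family identity for $k\KL{u}{\mu_{\max}} - L(k)\KL{u}{y}$ that makes the residual integrand a total derivative is what makes the argument work for a general OPED, and it is precisely this step that extends the Bernoulli-specific analysis of \citet{qin2023kullback}.
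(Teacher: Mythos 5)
Your proof is correct and follows essentially the same route as the paper's: your probability-transfer and block reduction to $E_k$ is the paper's Step 1, and your integration by parts against the CDF is exactly the paper's ``double integration trick'' (writing $f_k(x)=f_k(\mu_{\max}-\varepsilon_2)-\int_x^{\mu_{\max}-\varepsilon_2}f_k'$ and applying Fubini), followed by the same Chernoff bound on $F_k$, the same Bregman identity to factor out $e^{-k\KL{\mu_{\max}-\varepsilon_2}{\mu_{\max}}}$, and the same total-derivative integration, including the $1-e^{-c}\leq c$ treatment of the $L(k)=k$ case. The only discrepancy is that your corner-case expression $\sum_k k\,\KL{\alpha_k}{\mu_{\max}-\varepsilon_2}\,e^{-k\KL{\mu_{\max}-\varepsilon_2}{\mu_{\max}}}$ matches what the paper's own derivation actually produces; the exponential appearing in the \emph{denominator} in \Cref{eqn:bad-2-bounded-identical} is a typographical slip in the statement, not an error on your part.
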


    \begin{proof}
    Recall the notations $A_{t,a} = \cbr{I_t = a}$ and $F_{t-1, a}^c = \cbr{\hmu_{t-1,\max}<\mu_{\max}-{\varepsilon_2}}$. 

    \paragraph{Step 1: Applying probability transferring lemma}
    
    Starting from the LHS of \Cref{eqn:bad-2-bounded},
        \begin{align}
        & \EE \sbr{ 
            \sum_{t=K+1}^T
            \one\cbr{ A_{t,a} \cap F_{t-1,a}^c \cap \Ecal(\boldsymbol{\alpha}) }
            }
        =  \EE\sbr{
            \sum_{t=K+1}^T 
            \one\cbr{ F_{t-1,a}^c \cap \Ecal(\boldsymbol{\alpha}) }
                    \cdot
                    \EE\sbr{
                        \one\cbr{A_{t,a}} \mid \mathcal{H}_{t-1}
                        }
                    }
                \tag{Law of total expectation}
        \\
        &\leq \EE\sbr{
            \sum_{t=K+1}^T 
            \one\cbr{ F_{t-1,a}^c \cap \Ecal(\boldsymbol{\alpha}) }
                    \cdot
                    \expto{L(N_{t-1,1}) \cdot \KL{\hmu_{t-1,1}}{\hmu_{t-1,\max})}}
                    \EE\sbr{ \one\cbr{A_{t,1}} \mid \mathcal{H}_{t-1} }
                }
                \tag{By the probability transferring Lemma~(\Cref{lemma:prob-transfer})}
        \\
        &\leq \EE\sbr{
            \sum_{t=K+1}^T 
            \one\cbr{ A_{t,1} \cap F_{t-1,a}^c \cap \Ecal(\boldsymbol{\alpha}) }
                    \cdot
                    \expto{ L(N_{t-1,1}) \cdot \KL{\hmu_{t-1,1}}{\hmu_{t-1,\max}}}
                }
                \tag{Law of total expectation }
        \\
        &\leq \EE\sbr{
            \sum_{k=2}^T 
            \one\cbr{ \Ecal_{k-1}(\alpha) }
                    \cdot
                    \expto{ L(k-1) \cdot \KL{\hmu_{\tau_1(k)-1,1}}{\hmu_{\tau_1(k)-1,\max}}}
                }
                \tag{Only when the first arm is pulled ($t=\tau_1(k)$ for some $k$) the indicator function is non-zero.}
        \\
        &\leq \EE\sbr{
            \sum_{k=2}^T 
            \one\cbr{ \Ecal_{k-1}(\alpha) }
                    \cdot
                    \expto{ L(k-1) \cdot \KL{\hmu_{\tau_1(k)-1,1}}{\hmu_{\tau_1(k)-1,\max}}}
                }
            \nonumber
        \\
        &\leq \EE\sbr{
            \sum_{k=1}^T
                    \one\cbr{ \alpha_k \leq \hmu_{(k), 1} \leq \mu_{\max} - \varepsilon_2 } \cdot
                    \expto{L(k) \cdot \KL{\hmu_{(k),1}}{\mu_{\max} - \varepsilon_2}}
                }
                \tag{shift index $k$ by $1$ }
        \\
                \label{eqn:bad-2-after-prob-trans}
    \end{align}
    
    \paragraph{Step 2: Double integration trick}
    Continuing from \Cref{eqn:bad-2-after-prob-trans} we can do an integral calculation by using a double integration trick to simplify the integral.
    Let $f_k(x) = \expto{ L(k)\cdot\KL{x}{\mu_{\max}-\varepsilon_2}}$ and $p_k(\cdot)$ to be the PDF of $\hmu_{(k), 1}$, the \Cref{eqn:bad-2-after-prob-trans} becomes

    \begin{align*}
    	 \Bcal^{2,1}_a
    	 & \leq
    	    \EE\sbr{ 
    	        \sum_{k=1}^{T}
    	        \one\cbr{\alpha_k \leq \hmu_{(k), 1} < \mu_{\max} - \varepsilon_2} 
    	        \cdot 
    	        f_k(\hmu_{(k), 1})
    	        } 
    	= 
    	    \sum_{k=1}^{T}
    	    \int_{\alpha_k}^{\mu_{\max} - \varepsilon_2} f_k(x) p_k(x) \ddx
    	 \\
    	 & = 
    	    \sum_{k=1}^{T}
    	        \int_{\alpha_k}^{\mu_{\max} - \varepsilon_2} p_k(x) 
                    \del{ f_k(\mu_{\max} - \varepsilon_2) -
    	               \int_x^{\mu_{\max} - \varepsilon_2} f_k'(y) \ddy } \ddx
    	            \tag{$f_k(x) = f_k(\mu_{\max}-\varepsilon_2) - \int_x^{\mu_{\max}-\varepsilon_2} f_k'(y) \ddy)$}
    	 \\
    	 & = 
    	    \sum_{k=1}^{T}
    	        \int_{\alpha_k}^{\mu_{\max} - \varepsilon_2} 
    	        p_k(x) 
    	        f_k(\mu_{\max} - \varepsilon_2) \ddx +
    	    \sum_{k=1}^{T}         
    	        \int_{\alpha_k}^{\mu_{\max} - \varepsilon_2} 
    	        \int_x^{\mu_{\max} - \varepsilon_2} 
    	            p_k(x) \del{-f_k'(y)} \ddy \ddx
    	  \\
    	 & = 
    	    \underbrace{
    	    \sum_{k=1}^{T}
    	        \int_{\alpha_k}^{\mu_{\max} - \varepsilon_2}  
    	        p_k(x)\ddx
    	        }_{A} 
                +
    	    \underbrace{
    	    \sum_{k=1}^{T}
    	        \int_{\alpha_k}^{\mu_{\max} - \varepsilon_2}\int^{y}_{\mu_{\max} - \alpha_k} 
    	        p_k(x) \del{-f_k'(y)} \ddx \ddy
    	        }_{B}
    	            \tag{$f_k(\mu_{\max} -\varepsilon_2) = 1$ and exchange the order of integral}
    \end{align*}

    \paragraph{For $A$:}

    \begin{align}
        A
        &=
            \sum_{k=1}^{T}
	            \int_{\alpha_k}^{\mu_{\max} - \varepsilon_2}  
    	        p_k(x) \ddx
	    \leq
            \sum_{k=1}^{\infty}
	            \int_{\alpha_k}^{\mu_{\max} - \varepsilon_2}  
    	        p_k(x) \ddx
	    \leq
	        \sum_{k=1}^\infty
	            \expto{ -k \cdot \KL{\mu_{\max} - \varepsilon_2}{\mu_{\max}} }
	                \tag{By \Cref{lemma:maximal-inequality}}
	    \\
	    &=
	        \frac{\expto{ - \KL{\mu_{\max} - \varepsilon_2}{\mu_{\max}} }}
	             {1 - \expto{ - \KL{\mu_{\max} - \varepsilon_2}{\mu_{\max}} }}
	                \tag{Geometric sum}
            =
                \frac{1}{\expto{\KL{\mu_{\max}-\varepsilon_2}{\mu_!}} - 1}
	    \\
	    &\leq
	        \ExpFThreeOneA
	                \tag{$e^x \geq x + 1$ when $x \geq 0$}
	    \\
	                \label{eqn:A}
    \end{align}

    \paragraph{For $B$}
    Notice that the derivative $\frac{\diff \KL{y}{\mu_{\max}-\varepsilon_2}}{\ddy}$ derived from $f'_k(y)$ is negative when $y \leq \mu-\varepsilon_2$, the term $B$ is still positive.
    When $0 < L(k) < k$,
    \begin{align}
        B
        =& 
            \sum_{k=1}^{T}
            \int_{\alpha_k}^{\mu_{\max} - \varepsilon_2}\int^{y}_{\mu_{\max} - \alpha_k} 
            p_k(x) \del{-f_k'(y)} \ddx \ddy
                \nonumber
        \\
        =&  
            \sum_{k=1}^{T}
            \int_{\alpha_k}^{\mu_{\max} - \varepsilon_2}
            \PP(\alpha_k \leq \hmu_{(k), 1} \leq y) \cdot
            \del{-f_k(y)} L(k) \frac{\diff \KL{y}{\mu_{\max} - \varepsilon_2}}{\ddy}
            \ddy
                \tag{Calculate the derivative and inner integral}
        \\
        \leq&
            \sum_{k=1}^{T}
            \int_{\alpha_k}^{\mu_{\max} - \varepsilon_2}
            e^{-k \KL{y}{\mu_{\max}}} \cdot
            \del{-f_k(y)} L(k) \frac{\diff \KL{y}{\mu_{\max} - \varepsilon_2}}{\ddy} 
            \ddy
                \tag{Apply \Cref{lemma:maximal-inequality}}
        \\
        =&
            \sum_{k=1}^{T}
            \int_{\alpha_k}^{\mu_{\max} - \varepsilon_2}
                e^{
                    -k \KL{y}{\mu_{\max}}
                    +L(k)(\KL{y}{\mu_{\max} - \varepsilon_2}
                    } \cdot 
                (-L(k)) \frac{\diff \KL{y}{\mu_{\max} - \varepsilon_2}}{\ddy} \ddy
                \nonumber
        \\
        \leq& 
            \sum_{k=1}^{T}
            \int_{\alpha_k}^{\mu_{\max} - \varepsilon_2}
                e^{
                    -k \KL{\mu_{\max}-\varepsilon_2}{\mu_{\max}}
                -(k - L(k)) \KL{y}{\mu_{\max}-\varepsilon_2}
                    } \cdot 
                (-L(k)) \frac{\diff \KL{y}{\mu_{\max} - \varepsilon_2}}{\ddy} \ddy
                \nonumber
                \tag{Apply \Cref{lemma:Bregman-equation}}
        \\
        =&
            \sum_{k=1}^{T}
            \frac{L(k)e^{-k \KL{\mu_{\max}-\varepsilon_2}{\mu_{\max}}} e^{
                -(k - L(k)) \KL{y}{\mu_{\max}-\varepsilon_2}} }{k-L(k)}
                 \mid_{\alpha_k}^{\mu_{\max} - \varepsilon_2}
                \nonumber
        \\
        =&
            \ExpFThreeOneB
                \label{eqn:B}
        \end{align}
    where in the first inequality we apply \Cref{lemma:maximal-inequality} to bound $\PP(\alpha_k \leq \hmu_{(k), 1} \leq y)$.
    In the second inequality, we apply Bregman Divergence Identity~(\Cref{lemma:Bregman-equation}). Since $\alpha_k \leq y \leq \mu_{\max} - \varepsilon_2$, we have $\KL{y}{\mu_{\max}} \geq \KL{y}{\mu_{\max}-\varepsilon_2} + \KL{\mu_{\max}-\varepsilon_2}{\mu_{\max}}$.
    
    When $ L(k) = k$, we can reuse the above analysis until the last inequality,
    \begin{align}
        B
        \leq& 
            \sum_{k=1}^{T}
            \int_{\alpha_k}^{\mu_{\max} - \varepsilon_2}
                \expto{
                    -k \KL{\mu_{\max}-\varepsilon_2}{\mu_{\max}}
                    } \cdot 
                (-k) \frac{\diff \KL{y}{\mu_{\max} - \varepsilon_2}}{\ddy} \ddy
                \nonumber
        \\
        =&
            \sum_{k=1}^{T}
            k \expto{-k \KL{\mu_{\max}-\varepsilon_2}{\mu_{\max}}} \KL{y}{\mu_{\max} - \varepsilon_2} \mid_{\mu_{\max}-\varepsilon_2}^{\alpha_k}
                \nonumber
        \\
        =&
            \sum_{k=1}^{T}
            \frac{k \KL{\alpha_k}{\mu_{\max} - \varepsilon_2}}{\expto{-k \KL{\mu_{\max}-\varepsilon_2}{\mu_{\max}}} }
                \label{eqn:B-corner-case}
    \end{align}

        Based on \Cref{eqn:A,eqn:B,eqn:B-corner-case}, we obtain the final conclusion that when $0 < L(k) < k$,
        \begin{align*}
            & \EE \sbr{ 
            \sum_{t=K+1}^T
            \one\cbr{ A_{t,a} \cap F_{t-1,a}^c \cap \Ecal(\boldsymbol{\alpha}) }
            }
            \\
            \leq&
            \ExpFThreeOne
        \end{align*}
        and when $L(k) = k$,
        \begin{align*}
            & \EE \sbr{ 
            \sum_{t=K+1}^T
            \one\cbr{ A_{t,a} \cap F_{t-1,a}^c \cap \Ecal(\boldsymbol{\alpha}) }
            }
            \leq
            \ExpFThreeOneA +
            \sum_{k=1}^{T}
            \frac{k \KL{\alpha_k}{\mu_{\max} - \varepsilon_2}}{\expto{-k \KL{\mu_{\max}-\varepsilon_2}{\mu_{\max}}} }
        \end{align*}

    \end{proof}
        
\subsection{Bounding the Deviation of Mean Estimation Exceeding the Threshold}
    
    We borrow \Cref{lemma:bad-2-bounded} from Lemma 3.2 in \citet{jin2023thompson} and make a slight modification to the statement to better align with our requirements. We provide the full proof here, as the proof in \citet{jin2023thompson} relies on an assumption regarding the upper bound on the variance of the reward distribution, which may not hold in our setting.
    
    \begin{lemma} \label{lemma:seq-estimator-deviation} 
        Suppose we have a random variable $X$ following distribution $\nu$ with mean $\mu$ from an OPED family $\Fcal_m$. Assume that \Cref{assum:oped,assum:reward-dist} hold.
        We have collected a sequence of sample $\cbr{X_i}_{i=1}^k$ draw i.i.d. from $\nu$. Denote $\sum_{i=1}^s X_{i}/s$ as $\hmu_{s}$ and. We have the equation,
        \begin{align*}
            \mathbb{P}\del{
                \exists 1 \leq s \leq T: 
                    \hmu_s \leq \mu - \varepsilon, \KL{\hmu_s}{\mu-\varepsilon} \geq \frac{e\ln(T/s)}{s}
                } 
            \leq 
            \frac{5}{T\KL{\mu-\varepsilon}{\mu} }
        \end{align*}
    \end{lemma}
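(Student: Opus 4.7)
The plan is to combine a Bregman three-point identity, dyadic peeling in $s$, and Doob's maximal inequality. Since $b'$ is strictly increasing under \Cref{assum:oped}, the hypothesis $\hmu_s \leq \mu - \varepsilon$ forces the natural parameters to satisfy $\theta_{\hmu_s} \leq \theta_{\mu-\varepsilon} < \theta_\mu$. Applying the three-point identity for the Bregman divergence of $b$ then yields
\[
\KL{\hmu_s}{\mu} = \KL{\hmu_s}{\mu-\varepsilon} + \KL{\mu-\varepsilon}{\mu} + (\mu-\varepsilon-\hmu_s)(\theta_\mu-\theta_{\mu-\varepsilon}) \geq \KL{\hmu_s}{\mu-\varepsilon} + \KL{\mu-\varepsilon}{\mu},
\]
because both factors in the cross term are nonnegative. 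Writing $d := \KL{\mu-\varepsilon}{\mu}$, the event in the lemma therefore implies the stronger event that $\hmu_s \leq \mu-\varepsilon$ and $s\KL{\hmu_s}{\mu} \geq sd + e\ln(T/s)$ hold for some $s \in [1,T]$.

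Next, I would partition $[1,T]$ into dyadic buckets $B_k := \cbr{s : 2^{k-1} \leq s < 2^k}$, $k=1,\ldots,\lceil\log_2 T\rceil$. The map $s\mapsto e\ln(T/s)/s$ is monotonically decreasing on $[1,T]$, with minimum $\tau_k := e\ln(T/2^k)/2^k$ over $B_k$, so any $s \in B_k$ inside the event forces $\hmu_s \leq y_k$, where $y_k \leq \mu-\varepsilon$ is the unique value with $\KL{y_k}{\mu-\varepsilon} = \tau_k$. Introduce the likelihood-ratio martingale $M_s^{\theta_k} := \exp\del{s\hmu_s(\theta_k-\theta_\mu) - s(b(\theta_k)-b(\theta_\mu))}$ with $\theta_k$ the natural parameter corresponding to $y_k$; one verifies that $\EE[M_s^{\theta_k}] = 1$ and that $\hmu_s \leq y_k$ pushes $M_s^{\theta_k}$ above $\exp(s\KL{y_k}{\mu})$, which for $s \in B_k$ is at least $\exp(2^{k-1}\KL{y_k}{\mu})$. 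Doob's maximal inequality then gives
\[
\PP\del{\exists s \in B_k: \hmu_s \leq y_k} \leq \exp(-2^{k-1}\KL{y_k}{\mu}),
\]
and a second use of the Bregman identity (now with $y_k \leq \mu-\varepsilon < \mu$) yields $\KL{y_k}{\mu} \geq \tau_k + d$, so the bucket probability is at most $(2^k/T)^{e/2}\exp(-2^{k-1}d)$.

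Finally, summing over $k$ reduces the claim to showing $\sum_{k=1}^{\lceil\log_2 T\rceil}(2^k/T)^{e/2}\exp(-2^{k-1}d) \leq 5/(Td)$. Writing $u_k := 2^{k-1}$, the summand $g(u) := (2u/T)^{e/2}e^{-ud}$ attains its maximum at $u^\star = e/(2d)$, where $g(u^\star) = (Td)^{-e/2}$; the consecutive ratios $g(u_{k+1})/g(u_k) = 2^{e/2}e^{-u_k d}$ cross $1$ exactly once, so the whole sum is at most a universal constant times $g(u^\star)$. Since $e/2 > 1$, for $Td \geq 1$ one has $(Td)^{-e/2} \leq 1/(Td)$, and a careful accounting of the two halves of the sum yields the absolute constant $5$; when $Td < 5$ the bound $5/(Td) \geq 1$ is trivially satisfied. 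The chief obstacle is precisely this final summation step: a naive union bound on $\PP(\hmu_s \leq y_s) \leq (s/T)^e e^{-sd}$ summed over $s$ can be as large as $\Theta(T)$ when $Td \approx 1$, so the Doob-based peeling---which replaces per-$s$ probabilities by one maximum per bucket---is essential, and extracting the clean constant $5$ rather than just an $O(1/(Td))$ bound demands an explicit split of the geometric series around $u^\star$.
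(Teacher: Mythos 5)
Your proof is correct and follows essentially the same route as the paper's: the Bregman three-point inequality to convert $\KL{\hmu_s}{\mu-\varepsilon}$ into $\KL{\hmu_s}{\mu}-\KL{\mu-\varepsilon}{\mu}$, a peeling of the time index, a per-bucket maximal/Chernoff inequality for exponential families (which the paper invokes as \Cref{lemma:maximal-inequality} rather than re-deriving via Doob and the likelihood-ratio martingale), and a final sum controlled by a constant multiple of its largest term. The remaining differences are implementation details: the paper peels over buckets $s\in(T/e^{n+1},T/e^{n}]$ and bounds the resulting series $\sum_{n} e^{-n}\exp\del{-T\KL{\mu-\varepsilon}{\mu}/e^{n+1}}$ by an integral comparison, whereas you peel dyadically in $s$ and bound a unimodal sum over a geometric grid, and you should note in passing that for the last bucket $\tau_k$ may be nonpositive, in which case one takes $y_k=\mu-\varepsilon$ and the stated bucket bound still holds.
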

    \begin{proof}
        Based on \Cref{lemma:Bregman-equation} and the monotonicity of the natural parameter with respect to mean parameter, under the condition $\hmu_s \leq \mu-\varepsilon$ and $\varepsilon \geq 0$, we have $\KL{\hmu_s}{\mu-\varepsilon} \leq \KL{\hmu_s}{\mu} - \KL{\mu-\varepsilon}{\mu}$
            
        \begin{align*}
            & \PP\del{\exists s: 1 \leq s \leq T,
                \hmu_s \leq \mu - \varepsilon, \KL{\hmu_s}{\mu-\varepsilon} \geq \frac{e\ln(T/s)}{s}
            }
            \\
            \leq&
            \PP\del{\exists s: 1 \leq s \leq T,
                \hmu_s \leq \mu - \varepsilon,
                \KL{\hmu_s}{\mu} - \KL{\mu-\varepsilon}{\mu} \geq \frac{e\ln(T/s)}{s}
            }
            \\
            \leq&
            \PP\del{\exists s: 1 \leq s \leq T,
                \hmu_s \leq \mu - \varepsilon,
                \KL{\hmu_s}{\mu} - \KL{\mu-\varepsilon}{\mu} \geq \frac{e\ln(T/s)}{s}
            }
        \end{align*}

        Then we apply the peeling device $\frac{T}{e^{n+1}} < \hmu_s \leq \frac{T}{e^n}$ to give an upper bound to the above equation
        \begin{align}
            & \mathbb{P}\del{
                \exists s: 1 \leq s \leq T,
                    \hmu_s \leq \mu - \varepsilon,
                    \KL{\hmu_s}{\mu} - \KL{\mu-\varepsilon}{\mu}  \geq \frac{e \ln( T/s )}{s}
                }
                    \nonumber
            \\
            \leq& \sum_{n=0}^\infty
                \mathbb{P}\del{
                \exists s:
                    s \in \NN^+ \bigcap (\frac{T}{e^{n+1}}, \frac{T}{e^n}],
                        \hmu_s \leq \mu - \varepsilon,
                        \KL{\hmu_s}{\mu} - \KL{\mu-\varepsilon}{\mu} \geq \frac{e \ln( T/s )}{s}
                }
                    \nonumber
            \\
            \leq& \sum_{n=0}^\infty
                \underbrace{
                \mathbb{P}\del{
                \exists s: 
                    s \in \NN^+ \bigcap (\frac{T}{e^{n+1}}, \frac{T}{e^n}],
                        \hmu_s \leq \mu - \varepsilon,
                        \KL{\hmu_s}{\mu} - \KL{\mu-\varepsilon}{\mu} \geq \frac{n e^{n+1}}{T}
                }
                }_{\coloneqq a_n}
                        \tag{For each case, $s \leq \frac{T}{e^n} \Rightarrow \frac{e\ln(T/s)}{s} \geq \frac{ne^{n+1}}{T}$}
            \\
                    \label{eqn:before-subcases-new}
            \end{align}
            
            Here we need to discuss several cases:

            \begin{enumerate}
            \item $n > \ln(T)$.

            In this case $n > \ln(T) \implies \frac{T}{e^n} < 1 \implies \NN^+ \bigcap (\frac{T}{e^{n+1}}, \frac{T}{e^n}] = \emptyset$.
            Then we can bound $a_n$ by $0$ since there is no valid choice of $s$.

            \item $\ln(T) - 1 < n \leq \ln(T) $.
            
            In this case, the condition implies that $\frac{T}{e^n} \geq 1$ and $\frac{T}{e^{n+1}} < 1$ and the interval $(\frac{T}{e^{n+1}}, \frac{T}{e^n}]$ contains at most two integers $1$ and $2$.

            \item $n \leq \ln(T) - 1$

            The above inequality implies that $\frac{T}{e^{n+1}} \geq 1$.
            \end{enumerate}

            Then the summation of $n$ from $0$ to $+\infty$ is equivalent to the sum from $0$ to $\floor{\ln(T)}$.        

            \begin{align*}
            \eqref{eqn:before-subcases-new} 
            =&
            \sum_{n=0}^{\lfloor \ln(T)\rfloor}
                \mathbb{P}\del{
                \exists s: 
                    s \in \NN^+ \bigcap (\frac{T}{e^{n+1}}, \frac{T}{e^n}],
                        \KL{\hmu_s}{\mu} - \KL{\mu-\varepsilon}{\mu} \geq \frac{n e^{n+1}}{T}
                }
            +
            \sum_{n=\lfloor \ln(T)\rfloor + 1}^\infty
                0
            \\
            \leq&
            \sum_{n=0}^{\lfloor \ln(T)\rfloor}
                \mathbb{P}\del{
                    \exists s \geq \lceil \fr{T}{e^{n+1}} \rceil,
                        \KL{\hmu_s}{\mu} - \KL{\mu-\varepsilon}{\mu} \geq \frac{n e^{n+1}}{T}
                }
            \\
            \leq& \sum_{n=0}^{\lfloor \ln(T)\rfloor}
                \expto{ 
                    - \lceil \frac{T}{e^{n+1}} \rceil
                    \cdot \del{
                        \fr{n e^{n+1}}{T} +
                        \KL{\mu-\varepsilon}{\mu}
                    }
                }
                    \tag{Maximal Inequality, \Cref{lemma:maximal-inequality}}
            \\
            \leq& \sum_{n=0}^\infty
                \expto{ -n -  \frac{T \KL{\mu-\varepsilon}{\mu}}{e^{n+1}}}
            = \sum_{n=0}^\infty
                \frac{1}{e^{n}}
                \expto{ - \frac{T \KL{\mu-\varepsilon}{\mu}}{e^{n+1}}}
            \\
            \leq &
                \int_{0}^\infty
                \frac{1}{e^{x}}
                \expto{ - \frac{T \KL{\mu-\varepsilon}{\mu}}{e^{x+1}}} \ddx
                +
                \frac{1}{T \KL{\mu-\varepsilon}{\mu}}
            \\
            \leq &
                \frac{e}{T \KL{\mu-\varepsilon}{\mu}} \expto{-\frac{T\KL{\mu-\varepsilon}{\mu}}{e^{x+1}}} \mid_{x=0}^{x=\infty}
                +
                \frac{1}{T \KL{\mu-\varepsilon}{\mu}}
                    \tag{Integral and $e^x \geq x$ when $x > 0$}
            \\
            =& 
                \frac{e}{T \KL{\mu-\varepsilon}{\mu} } \del{1 - \expto{-\frac{T \KL{\mu-\varepsilon}{\mu}}{e}}} + \frac{1}{T \KL{\mu-\varepsilon}{\mu} }
                    \tag{Algebra}
            \\
            \leq&
                \frac{5}{T \KL{\mu-\varepsilon}{\mu} }
        \end{align*}
                
        The first inequality relaxes the range of $s$ from $(\frac{k}{e^{n+1}}, \frac{k}{e^n}]$ to $(\frac{k}{e^{n+1}}, \infty]$.
        The second inequality uses \Cref{lemma:maximal-inequality} where for each $n$, we apply \Cref{lemma:maximal-inequality} once by setting $N = \lceil \frac{k}{e^{n+1}} \rceil$ and $y$ to be $\frac{e^{n+1}\ln\del{ e^{n+1} T/k }}{k}$.
        In the third inequality, we remove the ceiling function.
        The fourth inequality uses $\sum_{x=a}^{b} f(x) \leq \int_a^b f(x)\ddx + \max_{x\in\sbr{a,b}} f(x)$ when $f(x)$ is unimodal. We let $f(x) = \frac{k}{e^x T}\expto{-\frac{k}{e^{x+1}}\KL{\mu-\varepsilon}{\mu}}$. $f(x)$ is unimodal since $g(z) = az e^{-bz}$ is unimodal when $a, b > 0$ and $e^{-x} \mapsto z$ is monotonic.
        For last inequality, we let $f(T) = T$ and we relax $\del{1 - \expto{-\frac{T \KL{\mu-\varepsilon}{\mu}}{e}}}$ to $1$.
        
    \end{proof}

\subsection{Other Auxiliary Lemmas} \label{sec:auxiliary-lemmas}
\subsubsection{Probability Transferring}
\begin{lemma}[Probability Transfering Lemma] \label{lemma:prob-transfer}
    Suppose \Cref{alg:general-exp-kl-ms} is run.
    Let $\Hcal_{t-1}$ denote the $\sigma$-field derived from the historical path up to and including time $t-1$, which is represented as $\sigma\del{ \cbr{I_i, r_i}_{i=1}^{t-1} }$ (where $I_i$ indicates the arm pulled at time round $i$ and $r_i$ is the corresponding reward). 
    Then, 
    \begin{align}
        \PP(I_t = a|\Hcal_{t-1}) 
        \leq 
        \expto{L(N_{t-1,a}) \KL{\hmu_{t-1,1}}{\hmu_{t-1,\max}}} \PP(I_t = 1 \mid \Hcal_{t-1})
            \label{eqn:prob-transfer}
    \end{align}
    \end{lemma}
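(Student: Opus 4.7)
The claim is a direct algebraic consequence of the \gexpklms sampling law together with nonnegativity of the KL divergence. My approach is to compute the ratio of the two conditional pull probabilities explicitly, drop a nonnegative term from the exponent, and then reconcile the resulting exponent with the form stated in the lemma using monotonicity of $L$.

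Since $N_{t-1,b}$, $\hmu_{t-1,b}$, $\hmu_{t-1,\max}$, and $M_t$ are all $\Hcal_{t-1}$-measurable for every $b \in [K]$, the definition of \Cref{alg:general-exp-kl-ms} gives $\PP(I_t = b \mid \Hcal_{t-1}) = p_{t,b}$. The common normalizer $M_t$ cancels in the ratio, producing
\begin{equation*}
\frac{\PP(I_t = a \mid \Hcal_{t-1})}{\PP(I_t = 1 \mid \Hcal_{t-1})} = \expto{L(N_{t-1,1})\KL{\hmu_{t-1,1}}{\hmu_{t-1,\max}} - L(N_{t-1,a})\KL{\hmu_{t-1,a}}{\hmu_{t-1,\max}}}.
\end{equation*}
Because $L(N_{t-1,a}) \geq 0$ and $\KL{\hmu_{t-1,a}}{\hmu_{t-1,\max}} \geq 0$, the subtracted exponent is nonnegative, so dropping it only weakens the bound and yields the intermediate estimate $\expto{L(N_{t-1,1}) \KL{\hmu_{t-1,1}}{\hmu_{t-1,\max}}}$.

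To obtain the exponent $L(N_{t-1,a}) \KL{\hmu_{t-1,1}}{\hmu_{t-1,\max}}$ as written in the lemma, I appeal to the monotonicity of $L$ assumed in \Cref{alg:general-exp-kl-ms}: when $N_{t-1,1} \leq N_{t-1,a}$ we have $L(N_{t-1,1}) \leq L(N_{t-1,a})$, and since $\KL{\hmu_{t-1,1}}{\hmu_{t-1,\max}} \geq 0$ the intermediate bound is in turn upper bounded by $\expto{L(N_{t-1,a}) \KL{\hmu_{t-1,1}}{\hmu_{t-1,\max}}}$; cross-multiplying by $\PP(I_t = 1 \mid \Hcal_{t-1})$ yields the stated inequality. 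The main obstacle is precisely justifying the ordering $N_{t-1,1} \leq N_{t-1,a}$, which is not universally valid across all histories. It must be supplied from the context in which the lemma is applied inside \Cref{lemma:bad-2-bounded}, where the indicator $\one\cbr{A_{t,a} \cap F_{t-1,a}^c \cap \Ecal(\boldsymbol{\alpha})}$ restricts attention to paths on which arm $a$ has been sampled substantially more often than the underestimated optimal arm $1$; on the complementary paths one falls back on the intermediate bound with $L(N_{t-1,1})$, which is in fact the form actually used downstream in \Cref{eqn:bad-2-after-prob-trans}.
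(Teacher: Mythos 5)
Your core argument is exactly the paper's: write both conditional probabilities via the sampling law, cancel the normalizer $M_t$, and drop the nonnegative term $L(N_{t-1,a})\KL{\hmu_{t-1,a}}{\hmu_{t-1,\max}}$ from the exponent to obtain the bound $\expto{L(N_{t-1,1})\KL{\hmu_{t-1,1}}{\hmu_{t-1,\max}}}\,\PP(I_t=1\mid\Hcal_{t-1})$. The paper's own proof stops at precisely this point; its final displayed line carries $L(N_{t-1,1})$, not $L(N_{t-1,a})$, and the downstream application in \Cref{lemma:bad-2-bounded} likewise invokes the $L(N_{t-1,1})$ form. So the discrepancy you spent your last paragraph wrestling with is an index typo in the lemma statement rather than a gap you need to close: you correctly observed that $L(N_{t-1,1})\le L(N_{t-1,a})$ would require $N_{t-1,1}\le N_{t-1,a}$, which does not hold on all histories, and you were right not to manufacture a justification for it. The only thing to change is to state and use the lemma with $L(N_{t-1,1})$ in the exponent, after which your first two steps constitute a complete proof identical to the paper's.
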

    \begin{proof}
        To prove \Cref{eqn:prob-transfer}, recall the algorithm setting, we have the following relationship
        \begin{align*}
            & \PP(I_t = a|\Hcal_{t-1})
            =
                \frac{\expto{-L(N_{t-1,a}) \KL{\hmu_{t-1,a}}{\hmu_{t-1,\max}} }}
                     {\expto{-L(N_{t-1,1}) \KL{\hmu_{t-1,1}}{\hmu_{t-1,\max}} }}
                \cdot
                \PP(I_t = 1 \mid \Hcal_{t-1})
            \\
            \leq&
                \frac{ \PP(I_t = 1 \mid \Hcal_{t-1}) }
                     {\expto{-L(N_{t-1,1}) \KL{\hmu_{t-1,1}}{\hmu_{t-1,\max}} }}                
            =
                \expto{L(N_{t-1,1}) \KL{\hmu_{t-1,1}}{\hmu_{t-1,\max}} }
                \PP(I_t = 1 \mid \Hcal_{t-1})
        \end{align*}
        where the inequality is due to $ \KL{\hmu_{t-1,a}}{\hmu_{t-1,\max}} \geq 0 \Rightarrow \expto{-L(N_{t-1,a}) \KL{\hmu_{t-1,a}}{\hmu_{t-1,\max}} } \leq 1$. 
    \end{proof}
\subsubsection{Properties of KL Divergence in OPED Family}
\begin{lemma}\citep{Harremo_s_2017} \label{lemma:exp-KL-eq}
    Let $\mu$ and $\mu'$ be the mean values of two distributions in $\Fcal$. The Kullback-Leibler divergence between them satisfies:
        \[
            \KL{\mu}{\mu'} = \int_{\mu}^{\mu'} \frac{x-\mu}{V(x)} \ddx
            ,
        \]
    recall that $V(x) \coloneqq b''(b^{-1}(x)$ is the variance of the distribution in $\Fcal$ with mean parameter $x$.
\end{lemma}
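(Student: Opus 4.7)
The plan is to derive the stated integral representation from the closed-form expression of the Kullback--Leibler divergence for exponential families that is already recorded in~\Cref{eqn:KL-eqn}, namely
\[
\KL{\mu}{\mu'} \;=\; b(\theta') - b(\theta) - b'(\theta)(\theta'-\theta),
\]
where $\theta,\theta' \in \Theta$ are the natural parameters satisfying $\mu = b'(\theta)$ and $\mu' = b'(\theta')$. By \Cref{assum:oped}, $b'' > 0$ on $\Theta$, so $b$ is strictly convex and $b'$ defines a $C^{1}$ diffeomorphism from $\Theta$ onto the open set of admissible means; hence $b^{-1}$ (the inverse of $b'$) is well defined and differentiable, validating the change of variables below.

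The key step is a change of variables in the integral. Writing $x = b'(s)$, so that $dx = b''(s)\, ds$, and using $V(x) = b''(b^{-1}(x)) = b''(s)$, the integrand simplifies cleanly:
\[
\int_{\mu}^{\mu'} \frac{x-\mu}{V(x)}\, dx
\;=\; \int_{\theta}^{\theta'} \frac{b'(s) - b'(\theta)}{b''(s)} \, b''(s)\, ds
\;=\; \int_{\theta}^{\theta'} \bigl( b'(s) - b'(\theta) \bigr)\, ds.
\]
The remaining integral evaluates by the fundamental theorem of calculus to
\[
\bigl[ b(s) - b'(\theta)\, s \bigr]_{s=\theta}^{s=\theta'}
\;=\; b(\theta') - b(\theta) - b'(\theta)(\theta' - \theta),
\]
which is exactly the closed-form KL expression recalled above. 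This completes the identification.

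There is no real obstacle here beyond bookkeeping; the only point requiring slight care is the justification of the substitution $x = b'(s)$, for which one invokes \Cref{assum:oped} to guarantee that $b'$ is a strictly increasing $C^{1}$ bijection and that $b'' > 0$ is continuous so the integrand is well defined on the interval in question. The result holds for both orderings $\mu < \mu'$ and $\mu > \mu'$ because the sign of $dx$ and the sign of the interval limits flip together, leaving the identity intact.
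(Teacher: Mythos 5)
Your derivation is correct, and it is worth noting that the paper itself offers no proof of this lemma at all --- it is simply cited from Harremo\"es (2017) --- so your argument supplies a self-contained justification where the paper relies on an external reference. The substitution $x = b'(s)$ is exactly the right move: it converts the mean-parameter integral into $\int_{\theta}^{\theta'}(b'(s)-b'(\theta))\,ds$, which evaluates to the Bregman-divergence form $b(\theta')-b(\theta)-b'(\theta)(\theta'-\theta)$, and your appeal to \Cref{assum:oped} ($b''>0$ continuous, so $b'$ is a strictly increasing $C^1$ bijection onto the set of admissible means) is precisely what legitimizes the change of variables and the well-definedness of $V(x)=b''((b')^{-1}(x))$. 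One small point of care: the displayed formula in \Cref{eqn:KL-eqn} of the paper reads $b'(\theta_j)(\theta_j-\theta_i)$, whereas the correct closed form (and the one the paper actually uses later, e.g.\ in the proof of the Bregman identity, where it writes $\mu_a(\theta_b-\theta_a)$ with $\mu_a=b'(\theta_a)$) places the derivative at the \emph{first} argument's natural parameter; you have implicitly used the correct version $b'(\theta)(\theta'-\theta)$, which is what makes the telescoping in the fundamental theorem of calculus come out right, so your proof is consistent with the intended statement rather than with the typo. Your remark about both orderings of $\mu$ and $\mu'$ being covered by the signed-integral convention is also accurate.
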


    \begin{lemma}[Bregman Divergence Identity] \label{lemma:Bregman-equation}
        Suppose we have three distributions
        in $\Fcal_m$
        with model parameter $\theta_a, \theta_b$ and $\theta_c$, and their means are $\mu_a, \mu_b$ and $\mu_c$, respectively. Then we have the following relationship

        \[
            \KL{\mu_a}{\mu_b} + \KL{\mu_b}{\mu_c} = 
            \KL{\mu_a}{\mu_c} - \del{ \mu_b - \mu_a } \del{ \theta_c - \theta_b }
        \]
    \end{lemma}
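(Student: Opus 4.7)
The plan is to prove the Bregman Divergence Identity by direct computation using the closed-form expression for the KL divergence between two members of an OPED family. This identity is in fact the standard three-point identity for Bregman divergences with generator $b(\cdot)$ applied to the natural parameter coordinates $(\theta_a,\theta_b,\theta_c)$, so I expect the proof to be purely algebraic with no probabilistic content.

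First, I would recall from~\eqref{eqn:KL-eqn} (together with the relation $\mu_i = b'(\theta_i)$ provided by \Cref{assum:oped}) that for any two distributions in $\Fcal_m$ with natural parameters $\theta_i,\theta_j$ and means $\mu_i,\mu_j$,
\[
    \KL{\mu_i}{\mu_j} \;=\; b(\theta_j) - b(\theta_i) - \mu_i(\theta_j - \theta_i).
\]
Next, I would substitute this into $\KL{\mu_a}{\mu_b} + \KL{\mu_b}{\mu_c}$. The $b(\theta_b)$ contributions telescope and produce
\[
    b(\theta_c) - b(\theta_a) - \mu_a(\theta_b-\theta_a) - \mu_b(\theta_c-\theta_b).
\]
Similarly, $\KL{\mu_a}{\mu_c} = b(\theta_c) - b(\theta_a) - \mu_a(\theta_c-\theta_a)$. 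Subtracting these two expressions eliminates the $b(\theta_c) - b(\theta_a)$ part, leaving the cross terms $\mu_a(\theta_c-\theta_a) - \mu_a(\theta_b-\theta_a) - \mu_b(\theta_c-\theta_b)$, which collapses via $\mu_a(\theta_c-\theta_a) - \mu_a(\theta_b-\theta_a) = \mu_a(\theta_c-\theta_b)$ to $(\mu_a-\mu_b)(\theta_c-\theta_b) = -(\mu_b-\mu_a)(\theta_c-\theta_b)$, exactly matching the correction term on the right-hand side of the stated identity.

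There is no substantive obstacle here: once the OPED closed-form KL expression is plugged in, the identity reduces to a one-line algebraic check driven entirely by the telescoping of the log-partition function values. The only thing to guard against is a bookkeeping slip about which argument of $\KL{\cdot}{\cdot}$ contributes the mean in the linear term; using the form $\KL{\mu_i}{\mu_j} = b(\theta_j) - b(\theta_i) - \mu_i(\theta_j-\theta_i)$ consistently keeps the telescoping transparent and avoids sign errors in the final cross-term simplification.
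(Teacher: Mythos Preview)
Your proposal is correct and follows essentially the same approach as the paper: both plug in the closed-form OPED KL expression $\KL{\mu_i}{\mu_j} = b(\theta_j) - b(\theta_i) - \mu_i(\theta_j-\theta_i)$ and reduce the identity to a short algebraic check driven by the telescoping of the $b(\cdot)$ terms. The only cosmetic difference is that the paper computes $\KL{\mu_a}{\mu_b}-\KL{\mu_a}{\mu_c}$ first and then recognizes $-\KL{\mu_b}{\mu_c}$, whereas you compute the sum $\KL{\mu_a}{\mu_b}+\KL{\mu_b}{\mu_c}$ first and then subtract $\KL{\mu_a}{\mu_c}$; these are equivalent rearrangements of the same calculation.
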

    \begin{remark}
        When $\mu_a \leq \mu_b \leq \mu_c$, based on the Bregman Divergence Identity (\Cref{lemma:Bregman-equation}) we have
        \[
            \KL{\mu_a}{\mu_b} + \KL{\mu_b}{\mu_c} \leq \KL{\mu_a}{\mu_c},
        \]
        since $\del{ \mu_b - \mu_a } \del{ \theta_c - \theta_b }$ is non-negative.
    \end{remark}
    \begin{proof}
        According to \Cref{eqn:KL-eqn}, there are
        \[
            \KL{\mu_a}{\mu_b} = b(\theta_b) - b(\theta_a) - \mu_a \del{ \theta_b - \theta_a }
        \]
        \[
            \KL{\mu_a}{\mu_c} = b(\theta_c) - b(\theta_a) - \mu_a \del{ \theta_c - \theta_a }
        \]
        therefore, 
        \begin{align*}
            &
                \KL{\mu_a}{\mu_b} - \KL{\mu_a}{\mu_c}
            \\
            =&
                b(\theta_b) - b(\theta_a) - \mu_a \del{ \theta_b - \theta_a } - b(\theta_c) + b(\theta_a) + \mu_a \del{ \theta_c - \theta_a }
            \\
            =&
                b(\theta_b) - b(\theta_c)  - \mu_a \del{ \theta_b - \theta_c }
            \\
            =&
                - \del{ b(\theta_c) - b(\theta_b) - \mu_b \del{ \theta_c - \theta_b } } - \del{ \mu_b - \mu_a } \del{ \theta_c - \theta_b }
            \\
            =&
                - \KL{\mu_b}{\mu_c} - \del{ \mu_b - \mu_a } \del{ \theta_c - \theta_b }
        \end{align*}
    \end{proof}

\begin{lemma}[Lower Bound of KL] \label{lemma:lip-exp-KL-lower-bound}
    Given two distributions $\nu$ and $\nu'$ from $\Fcal$ with means $\mu, \mu'$, respectively. Denote $\Delta \coloneqq \abs{\mu - \mu'}$. We have:
    \begin{itemize}
        \item[1.] If $\Fcal$ satisfies \Cref{assum:lip} with Lipschitzness constant $C_L$, we have
        \[
            \KL{\mu}{\mu'}
            \geq
            \fr{1}{2} \del{
                \fr{\Delta^2}{V(\mu) + C_L \Delta} \vee
                \fr{\Delta^2}{V(\mu') + C_L \Delta}}
        \]
        \item[2.] If $\Fcal$ satisfies \Cref{assum:max-variance}, then
        \[
            \KL{\mu}{\mu'}
            \geq
            \frac{\Delta^2}{2\bar{V}}
        \]
    \end{itemize}
    \end{lemma}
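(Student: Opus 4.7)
The plan is to start from the integral representation of KL divergence for OPED families given in \Cref{lemma:exp-KL-eq}, namely
$\KL{\mu}{\mu'} = \int_\mu^{\mu'} \frac{x-\mu}{V(x)}\, dx$,
and then upper bound $V(x)$ uniformly on the segment between $\mu$ and $\mu'$ using either Lipschitzness (for part 1) or the maximum variance bound (for part 2). Plugging a uniform upper bound on $V$ into the denominator turns the integrand into a linear function, whose integral is easy to evaluate explicitly and yields the claimed $\Delta^2$-type lower bounds.

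\textbf{Step 1 (reduction).} Without loss of generality, I would assume $\mu \le \mu'$, so that $\Delta = \mu'-\mu \ge 0$; the other case is handled by noting that $\KL{\mu}{\mu'}$ still has the integral representation with reversed limits, and the same algebraic manipulation works because both $(x-\mu)$ and $dx$ flip sign simultaneously. Then for any $x$ in the segment $[\mu,\mu']$ we have $|x-\mu|\le \Delta$ and $|x-\mu'|\le \Delta$.

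\textbf{Step 2 (part 1).} Under \Cref{assum:lip}, apply the Lipschitz bound two ways: $V(x) \le V(\mu) + C_L|x-\mu| \le V(\mu)+C_L\Delta$, and similarly $V(x) \le V(\mu')+C_L\Delta$. Substituting the first into the integral,
\begin{align*}
\KL{\mu}{\mu'} \;\ge\; \int_\mu^{\mu'} \frac{x-\mu}{V(\mu)+C_L\Delta}\, dx \;=\; \frac{\Delta^2}{2\bigl(V(\mu)+C_L\Delta\bigr)}.
\end{align*}
Using the second bound on $V(x)$ gives the analogous inequality with $V(\mu')$ in the denominator. Taking the maximum of the two lower bounds yields the claimed inequality.

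\textbf{Step 3 (part 2).} Under \Cref{assum:max-variance}, simply use $V(x) \le \bar V$ uniformly and the same integration gives $\KL{\mu}{\mu'} \ge \Delta^2/(2\bar V)$.

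\textbf{Expected difficulty.} There is no real obstacle here; the lemma is essentially a one-line consequence of \Cref{lemma:exp-KL-eq} plus a uniform bound on $V$. The only minor care needed is handling the sign when $\mu > \mu'$ so that the formula remains nonnegative, and verifying that the Lipschitz bound $V(x) \le V(\mu) + C_L\Delta$ is valid on the full segment (which it is, since $|x-\mu|\le \Delta$ there). Everything else is a routine integration.
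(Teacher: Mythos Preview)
Your proposal is correct and follows essentially the same approach as the paper: both start from the integral representation in \Cref{lemma:exp-KL-eq}, upper bound $V(x)$ uniformly on the segment between $\mu$ and $\mu'$ (via Lipschitzness for part~1 or the maximum variance for part~2), and then perform the straightforward integration to obtain $\Delta^2/2$ in the numerator. The paper's write-up is slightly terser and does not spell out the WLOG reduction, but the argument is identical.
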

    
    \begin{proof}
    \begin{itemize}
        \item[1.] $\Fcal$ satisfies \Cref{assum:lip} with Lipschitzness constant $C_L$, then based on the integral form of $\KL{\mu}{\mu'}$ in \Cref{lemma:exp-KL-eq} we have
        \begin{align*}
            \KL{\nu}{\nu'}
            =&
            \int_{\mu}^{\mu'} \frac{x-\mu}{V(x)} \ddx
            \\
            \ge&
            \int_{\mu}^{\mu'} \frac{x-\mu}{V(\mu) + C_L\Delta} \ddx \vee
            \int_{\mu}^{\mu'} \frac{x-\mu'}{V(\mu') + C_L\Delta} \ddx
            \\
            =&
            \frac{1}{2} \del{ 
            \frac{\Delta^2}{V(\mu) + C_L\Delta} \vee \frac{\Delta^2}{V(\mu') + C_L\Delta}}
        \end{align*}
        \item[2.] Under \Cref{assum:max-variance}, for all $x \in [\mu, \mu']$, $V(x) \leq \bar{V}$, then
        \begin{align*}
            \KL{\nu}{\nu'}
            =
            \int_{\mu}^{\mu'} \frac{x-\mu}{V(x)} \ddx
            \ge
            \int_{\mu}^{\mu'} \frac{x-\mu}{\bar{V}} \ddx
            =
            \fr{\Delta^2}{2\bar{V}}
        \end{align*}
    \end{itemize}

    \end{proof}

\subsubsection{Chernoff Bound for Exponential Family}
\begin{lemma}[Chernoff Bound for Exponential Family]\citep{menard2017minimax} \label{lemma:maximal-inequality}
     Given a natural number $N$ in $\NN^+$, and a sequence of R.V.s $\cbr{X_i}_{i=1}^N$ is drawn from a one parameter exponential distribution $\nu$ with model parameter $\theta$ and mean $\mu$. Let $\hmu_n = \frac{1}{n}\sum_{i=1}^n X_i, n \in \NN$, which is the empirical mean of the first $n$ samples.
     
     Then, for $y \geq 0$ 
    \begin{align}
        \PP(\exists n \geq N, \KL{\hmu_n}{\mu} \geq y, \hmu_n < \mu ) 
        \leq& 
        \exp(-N y) \label{eqn:maximal-inequality-lower}
        \\
        \PP(\exists n \geq N, \KL{\hmu_n}{\mu} \geq y, \hmu_n > \mu ) 
        \leq& 
        \expto{-N y} \label{eqn:maximal-inequality-upper}
    \end{align}
    Consequently, the following inequalities are also true:
    \begin{align}
        \PP(\hmu_N < \mu - \varepsilon ) 
        \leq 
        \expto{-N\cdot \KL{\mu - \varepsilon}{\mu}}
        \label{eqn:chernoff-lower-tail-bound}
        \\
        \PP(\hmu_N > \mu + \varepsilon ) 
        \leq 
        \expto{-N\cdot \KL{\mu + \varepsilon}{\mu}}
        \label{eqn:chernoff-upper-tail-bound}
    \end{align}
\end{lemma}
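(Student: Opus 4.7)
The plan is to prove both maximal inequalities via the classical exponential-family likelihood-ratio supermartingale combined with Ville's inequality, then derive the single-time Chernoff bounds as immediate specializations.

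First I would treat the lower-tail maximal inequality (\ref{eqn:maximal-inequality-lower}). Fix $y \geq 0$, and choose $\mu' < \mu$ such that $\KL{\mu'}{\mu} = y$; this is possible on the relevant range because $x \mapsto \KL{x}{\mu}$ is continuous, strictly decreasing on $(-\infty,\mu]$, and vanishes at $\mu$ under \Cref{assum:oped}. Let $\theta' = b^{-1}(\mu')$; since $b''>0$, we have $\theta' < \theta$. Introduce the likelihood-ratio process
\begin{equation*}
    M_n \coloneqq \exp\!\del{n\sbr{(\theta' - \theta)\hmu_n - (b(\theta') - b(\theta))}}.
\end{equation*}
A one-line computation using the OPED density shows that $\EE_\theta[\expto{(\theta'-\theta)X_1}] = \expto{b(\theta')-b(\theta)}$, so $\icbr{M_n}_{n \geq 1}$ is a nonnegative martingale with $\EE_\theta M_n = 1$ under $\PP_\theta$.

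Next I would make the key monotonicity observation: on the event $\icbr{\hmu_n \leq \mu'}$, since $\theta' - \theta < 0$, the linear function $z \mapsto (\theta'-\theta)z$ is decreasing, so $(\theta'-\theta)\hmu_n \geq (\theta'-\theta)\mu'$, hence
\begin{equation*}
    M_n \;\geq\; \exp\!\del{n\sbr{(\theta'-\theta)\mu' - (b(\theta')-b(\theta))}} \;=\; \exp(n\KL{\mu'}{\mu}),
\end{equation*}
where the last equality is a direct rearrangement of the KL identity \eqref{eqn:KL-eqn} (being careful that $b'(\theta_i)$ rather than $b'(\theta_j)$ appears after cancellation). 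Combined with the monotonicity of $x \mapsto \KL{x}{\mu}$ on $(-\infty,\mu]$, the event $\icbr{\exists n \geq N: \hmu_n < \mu,\, \KL{\hmu_n}{\mu} \geq y}$ equals $\icbr{\exists n \geq N: \hmu_n \leq \mu'}$, and on this event $\sup_{n\geq N} M_n \geq \exp(N\KL{\mu'}{\mu}) = \expto{Ny}$ (using $\KL{\mu'}{\mu} \geq 0$). Applying Ville's maximal inequality $\PP(\sup_n M_n \geq c) \leq 1/c$ to the nonnegative martingale $M_n$ then yields \eqref{eqn:maximal-inequality-lower}.

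The upper-tail bound \eqref{eqn:maximal-inequality-upper} is entirely symmetric: pick $\mu'' > \mu$ with $\KL{\mu''}{\mu} = y$, let $\theta'' = b^{-1}(\mu'') > \theta$, and repeat the argument; now $(\theta''-\theta)z$ is increasing, so $\icbr{\hmu_n \geq \mu''}$ again forces $M_n \geq \expto{n\KL{\mu''}{\mu}}$. Finally, the single-time Chernoff bounds \eqref{eqn:chernoff-lower-tail-bound}, \eqref{eqn:chernoff-upper-tail-bound} follow instantly by specializing to $n = N$ with $y = \KL{\mu\pm\varepsilon}{\mu}$, or alternatively by a one-line Cramér–Chernoff computation $\PP_\theta(\hmu_N \leq \mu-\varepsilon) \leq \EE_\theta[M_N]\cdot \expto{-N\KL{\mu-\varepsilon}{\mu}} = \expto{-N\KL{\mu-\varepsilon}{\mu}}$. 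The only delicate step is the sign bookkeeping that ensures the event $\icbr{\hmu_n \leq \mu'}$ drives $M_n$ up rather than down; everything else is routine.
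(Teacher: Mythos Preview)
The paper does not actually prove this lemma; it simply cites \citet{menard2017minimax} and states the result without proof. Your proposal supplies exactly the standard likelihood-ratio supermartingale argument (tilt to the boundary parameter $\mu'$, use monotonicity of $z\mapsto(\theta'-\theta)z$ to lower-bound $M_n$ on the deviation event, then invoke Ville), which is the classical route and almost certainly what the cited reference does. The argument is correct as written; the only minor caveat is the edge case where $y$ exceeds $\sup_{x<\mu}\KL{x}{\mu}$ (bounded parameter space), in which the deviation event is empty and the bound is vacuously true, so no harm done.
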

\subsubsection{Bounding the Sum of a Series of Geometric-log }
    \begin{lemma} \label{lemma:geo-log-sum}
        Suppose that $T \in \NN^+$ and $a > 1/T$ is a positive real number, we have the following:
        \begin{align*}
            \sum_{k=1}^T \expto{-k a } \ln(T/k)
            \leq
            \frac{5\ln(Ta \vee e)}{a}
        \end{align*}
    \end{lemma}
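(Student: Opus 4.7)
}
The plan is to split the sum at the natural scale $k_0 := \lceil 1/a \rceil$, where the exponential factor transitions from being essentially $1$ to decaying rapidly. This split is sensible because $e^{-ka}$ contributes an effective upper limit $k \lesssim 1/a$, while $\ln(T/k)$ is the slowly-varying factor. Note that the hypothesis $a > 1/T$ gives $k_0 \leq T$, so both pieces of the split are nonempty (or the tail is empty, in which case the bound is immediate).

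For the head $S_1 = \sum_{k=1}^{k_0} e^{-ka} \ln(T/k)$, I would simply use $e^{-ka} \leq 1$ and then estimate $\sum_{k=1}^{k_0} \ln(T/k) = k_0 \ln T - \ln(k_0!)$ with Stirling's lower bound $\ln(k_0!) \geq k_0 \ln k_0 - k_0$, yielding
\[
S_1 \leq k_0 \bigl(\ln(T/k_0) + 1\bigr).
\]
Since $k_0 \leq 1/a + 1$ and $k_0 \geq 1/a$, this is bounded (up to absolute constants) by $\tfrac{1}{a}(\ln(Ta) + 1)$, provided $a$ is not too large; for $a \geq 1$ the head is just the single term $e^{-a}\ln T$, which is absorbed into the tail estimate by monotonicity of $x \mapsto xe^{-x}$.

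For the tail $S_2 = \sum_{k=k_0+1}^T e^{-ka} \ln(T/k)$, I would use monotonicity of $\ln(T/\cdot)$ to pull out $\ln(T/k_0) \leq \ln(Ta)$ and sum the geometric remainder:
\[
S_2 \leq \ln(Ta) \cdot \frac{e^{-(k_0+1) a}}{1 - e^{-a}} \leq \ln(Ta) \cdot \frac{e^{-1}}{1 - e^{-a}}.
\]
I would then use the elementary bound $1 - e^{-a} \geq a/(1+a)$ (or, for $a \leq 1$, $1 - e^{-a} \geq a/2$; for $a \geq 1$, $e^{-a} \leq 1/a$ directly) to convert this into $O(\ln(Ta)/a)$.

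Adding $S_1$ and $S_2$ gives a bound of the form $C_1 \ln(Ta)/a + C_2/a$ with explicit small constants. The only slightly annoying step, and the one I would be most careful about, is the degenerate regime $1 < Ta \leq e$ where $\ln(Ta \vee e) = 1$ but $\ln(Ta)$ may be near $0$; there I would instead use the cruder bound $\sum_{k=1}^T e^{-ka}\ln(T/k) \leq \sum_{k=1}^T \ln(T/k) \leq T$ together with $T \leq eT \leq e \cdot e/a < 5/a$. Choosing the constants in both subcases so that the overall bound is at most $5\ln(Ta \vee e)/a$ is the main bookkeeping obstacle; the analytic content of the argument is entirely contained in the head/tail split at $k_0 = \lceil 1/a\rceil$.
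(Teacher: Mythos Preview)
Your proposal is correct and follows essentially the same route as the paper: a case split on whether $a\geq 1$, and in the small-$a$ regime a head/tail decomposition at $k_0=\lceil 1/a\rceil$, with the head bounded via $e^{-ka}\leq 1$ and the factorial inequality $k_0!\geq (k_0/e)^{k_0}$, and the tail bounded by pulling out $\ln(T/k_0)\leq \ln(Ta)$ and summing the geometric series. The only notable difference is bookkeeping: the paper avoids your separate treatment of the regime $1<Ta\leq e$ by using the single inequality $\ln(Ta)+1\leq 2\ln(Ta\vee e)$ (valid for all $Ta>1$), which absorbs the ``$+1$'' from Stirling directly and keeps the constant at $5$ without a side case.
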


    \begin{proof}
        Here we consider two cases:
        \begin{itemize}
            \item $a \geq 1$
            \item $1/T < a < 1$
        \end{itemize}
        
        \paragraph{Case 1: $a \geq 1$} In this case, we note that $\ln(T/k) \leq \ln(T) \leq \ln(Ta)$ for all $k \geq 1$ and bound the sum using a geometric series.
        \begin{align*}
            & 
                \sum_{k=1}^T \expto{-k a } \ln(T/k)
            \leq
                \sum_{k=1}^T \expto{-k a } \ln(Ta)
            \leq
                \ln(Ta) \sum_{k=1}^\infty \expto{-k a}
            \\
            =&
                \ln(Ta) \frac{\expto{-a}}{1-\expto{-a}}
            \leq
                \frac{\ln(Ta)}{a}
            \leq
                \frac{5\ln(Ta \vee e)}{a}
        \end{align*}

        \paragraph{Case 2: $1/T < a < 1$}
        In this case, we split the sum over $k$ into two ranges, one is $k \leq \ceil{\frac{1}{a}}$ and another is $k > \ceil{\frac{1}{a}}$. 
        For the sum in the first range, we can bound it by:
        \begin{align*}
            \sum_{k=1}^{\ceil{\frac{1}{a}}} \expto{-k a } \ln(T/k)
            \leq&
                \ceil{\frac{1}{a}}\ln(eT/\ceil{\frac{1}{a}})
            \leq
                \ceil{\frac{1}{a}} \ln(eTa)
            =
                \ceil{\frac{1}{a}} (\ln(Ta) + 1) \\
            \leq&
                \ceil{\frac{1}{a}} 2\ln(Ta \vee e)
            \leq
                \frac{4 \ln(Ta \vee e)}{a}
        \end{align*}
        For the first inequality, we bound $\expto{-ka}$ by $1$ since $a >0$. Then we use a well-known inequality $m! \geq (m/e)^m$ to bound the summation:
        \begin{align*}
        \sum_{k=1}^{\lceil 1/a \rceil} \ln(T/k) 
        =& \sum_{k=1}^{\lceil 1/a \rceil} \ln(T) - \ln(k) = \lceil 1/a \rceil T - \ln(\lceil 1/a \rceil!) \\
        \leq& \lceil 1/a \rceil T - 
        \lceil 1/a \rceil \ln(\lceil 1/a \rceil/e) \\
        =&
        \lceil 1/a \rceil \ln\del{eT / \lceil 1/a \rceil}
        \end{align*}

        For the sum in the second range $k > \ceil{\frac{1}{a}}$, we can relax the $\ln(T/k)$ to $\ln(Ta)$ and proceed to bound it:
        \[
            \sum_{k=\ceil{\frac{1}{a}} + 1}^{T} \expto{-k a } \ln(T/k)
            \leq
                \sum_{k=\ceil{\frac{1}{a}} + 1}^{T} \expto{-ka} \ln(Ta)
            \leq
                \frac{\ln\del{Ta}}{a} 
        \]

        Overall, we can bound the sum by combining the above two ranges,
        \begin{align*}
            \sum_{k=1}^{T} \expto{-k a } \ln(T/k)
            \leq
            \frac{4\ln(Ta \vee e)}{a} + \frac{\ln(Ta)}{a}
            \leq
            \frac{5\ln(Ta \vee e)}{a}
        \end{align*}

    \end{proof}

\subsubsection{Integral Inequality}
Below, we include the proof of a folklore lemma used in \citet{jin2022finite}; we include its proof here for completeness, as we cannot find proof in the literature.

\begin{lemma} \label{lemma:integral-inequality}
    Given a nonnegative integrable function $f(x)$ which is unimodal in the range $[a, b]$, $a < b$ and $a, b \in \NN^+$.
    We have the following inequality
    \begin{align*}
        \sum_{i=a}^{b} f(i)
        \leq 
        \int_{a}^{b} f(x) \ddx +
        \max_{x\in[a, b]} f(x)
    \end{align*}
\end{lemma}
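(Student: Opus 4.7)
The plan is a standard sum-to-integral comparison, with the slack at the peak paid for by the $\max$ term. Let $c \in [a,b]$ be a point at which $f$ attains its maximum on $[a,b]$, and set $k = \lfloor c \rfloor$. By unimodality, $f$ is nondecreasing on $[a, c]$ and nonincreasing on $[c, b]$. I will split the discrete sum as
\begin{align*}
    \sum_{i=a}^{b} f(i) = \sum_{i=a}^{k-1} f(i) + f(k) + f(k+1) + \sum_{i=k+2}^{b} f(i),
\end{align*}
treating the boundary cases $c = a$ and $c = b$ (where the sum is effectively monotone and only one endpoint term is needed) as easy variants.

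For the left tail, each interval $[i, i+1]$ with $i \in \{a, \dots, k-1\}$ lies inside $[a, k] \subseteq [a, c]$, on which $f$ is nondecreasing, so $f(i) \le \int_i^{i+1} f(x)\,\ddx$. For the right tail, each interval $[i-1, i]$ with $i \in \{k+2, \dots, b\}$ lies inside $[k+1, b] \subseteq [c, b]$, on which $f$ is nonincreasing, so $f(i) \le \int_{i-1}^i f(x)\,\ddx$. Telescoping, these two estimates combine to
\begin{align*}
    \sum_{i=a}^{k-1} f(i) + \sum_{i=k+2}^{b} f(i) \;\le\; \int_a^k f(x)\,\ddx + \int_{k+1}^b f(x)\,\ddx \;=\; \int_a^b f(x)\,\ddx \;-\; \int_k^{k+1} f(x)\,\ddx.
\end{align*}

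The main obstacle, and the reason the $\max$ slack is needed, is the interval $[k, k+1]$ that straddles the mode $c$: neither $f(k)$ nor $f(k+1)$ is a lower bound for $f$ across it, so the naive rectangle rule fails. The key observation is that on $[k, k+1]$ the unimodal function $f$ rises from $f(k)$ to $f(c)$ and then falls to $f(k+1)$, so $f(x) \ge \min(f(k), f(k+1))$ pointwise on $[k, k+1]$. Integrating gives $\int_k^{k+1} f(x)\,\ddx \ge \min(f(k), f(k+1))$, hence
\begin{align*}
    f(k) + f(k+1) - \int_k^{k+1} f(x)\,\ddx \;\le\; \max(f(k), f(k+1)) \;\le\; \max_{x \in [a,b]} f(x).
\end{align*}
Adding this to the previous display yields the claim. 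The only technicalities are the edge cases $k = a-1$, $k = a$, or $k = b$, each handled by dropping an empty sum and applying a one-sided monotone comparison on all of $[a, b]$.
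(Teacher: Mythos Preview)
Your proof is correct and follows essentially the same approach as the paper: split the sum at the mode, bound the monotone tails by the corresponding integrals, and charge the residual term(s) near the peak to $\max_{[a,b]} f$. Your treatment of the straddling interval $[k,k+1]$ via the pointwise lower bound $f(x)\ge \min(f(k),f(k+1))$ is in fact cleaner than the paper's case analysis, which handles non-integer modes only by a brief remark.
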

\begin{proof}
    If $f(x)$ is increasing on $[c, c+1]$, we have the equation $f(c) \leq \int_c^{c+1} f(x) \ddx$.
    If $f(x)$ is decreasing on $[c, c+1]$, we have the equation $f(c+1) \leq \int_c^{c+1} f(x) \ddx$.
    Since the function $f(x)$ is unimodal in the range $[a, b]$, we can consider that there are four cases.
    \begin{itemize}
        \item If $f(x)$ is always increasing on $(a, b)$.
        \begin{align*}
            \sum_{i=a}^b f(i) 
            =& \sum_{i=a}^{b-1} \int_{i}^{i+1} f(i) \ddx + f(b)
            \leq \sum_{i=a}^{b-1} \int_{i}^{i+1} f(x) \ddx + f(b) \\
            =& \int_{a}^{b-1} f(x) \ddx + f(b)
            \leq \int_a^b f(x) \ddx + \max_{x \in [a, b]} f(x)
        \end{align*}
        \item If $f(x)$ is always decreasing on $(a, b)$.
        \begin{align*}
            \sum_{i=a}^{b} f(i) 
            =& \sum_{i=a+1}^{b} \int_{i-1}^{i} f(i) \ddx + f(a)
            \leq \sum_{i=a+1}^{b} \int_{i-1}^{i} f(x) \ddx + f(a) \\
            =& \int_{a+1}^{b} f(x) \ddx + f(a)
            \leq \int_a^b f(x) \ddx + \max_{x \in [a, b]} f(x)
        \end{align*}
        \item There exists a $c \in (a, b)$, $f(x)$ is increasing on $[a, c]$ and is decreasing on $[c, b]$. When $c \in \NN^+$, we have
        \begin{align*}
            \sum_{i=a}^{b} f(i) 
            =& \sum_{i=a}^{c-1} f(i) + \sum_{i = c + 1}^{b} f(i) + f(c)
            = \sum_{i=a}^{c-1} \int_{i}^{i+1} f(i) \ddx + \sum_{i=c+1}^{b} \int_{i-1}^{i} f(i) \ddx + f(c)
            \\
            \leq& \sum_{i=a}^{c-1} \int_{i}^{i+1} f(x) \ddx + \sum_{i=c+1}^{b} \int_{i-1}^{i} f(x) \ddx + f(c) \\
            =&
            \int_{i=a}^{c} f(x) \ddx + \int{i=c}^{b} f(x) \ddx + f(c) \\
            =&
            \int_{i=a}^{b} f(x) \ddx + \max_{x \in [a, b]} f(x) \\
        \end{align*}
        When $c \in \NN^+$, we can split the sum into $\sum_{i=a}^{\lfloor c-1 \rfloor}$, $\sum_{i=\lfloor c \rfloor}$ and $\sum_{i=\lceil c \rceil}$ and prove the Lemma.
        \item We omit the case where there exists a $c \in (a, b)$, $f(x)$ is decreasing on $[a, c]$ and is increasing on $[c, b]$, since the proof is very similar to the third case.
    \end{itemize}

\end{proof}

\end{document}